\documentclass{article}

\usepackage[utf8]{inputenc}
\usepackage[T1]{fontenc}

\PassOptionsToPackage{numbers,sort&compress}{natbib}
\usepackage[numbers,sort&compress]{natbib}

\usepackage{microtype}
\usepackage{graphicx}
\usepackage{subcaption}
\usepackage{booktabs}
\usepackage[
    colorlinks=true,
    linkcolor=pierCite,
    citecolor=pierCite,
    urlcolor=pierLink
]{hyperref}
\usepackage{url}
\usepackage[dvipsnames]{xcolor}
\usepackage{geometry}
\usepackage{titletoc}
\usepackage{array}
\usepackage{colortbl}
\usepackage{pifont}
\usepackage{textcomp}
\usepackage{authblk}
\usepackage{amsmath}
\usepackage{amssymb}
\usepackage{mathtools}
\usepackage{amsthm}
\usepackage{macros}
\usepackage{algorithm}
\usepackage{algpseudocode} 
\usepackage{tikz}
\usetikzlibrary{shapes, arrows.meta, positioning}
\usepackage{titletoc}
\newcommand\DoToC{%
  \startcontents
\hypersetup{colorlinks=true}
  \printcontents{}{1}{\subsection*{\textbf{Table of contents}}}
  \vskip3pt\vskip5pt
}
\usepackage[skip=0.5\baselineskip, indent=0pt]{parskip}

\definecolor{PastelRed}{HTML}{FDE2E1}
\definecolor{PastelGreen}{HTML}{E6F4EA}
\definecolor{PastelBlue}{HTML}{E9F0FB}
\definecolor{PastelYellow}{HTML}{faebc8}

\usepackage[capitalize,noabbrev]{cleveref}
\crefname{assumption}{assumption}{assumptions}
\Crefname{assumption}{Assumption}{Assumptions}
\crefname{appendix}{appendix}{appendices}
\Crefname{appendix}{Appendix}{Appendices}

\theoremstyle{plain}
\newtheorem{theorem}{Theorem}[section]
\newtheorem{proposition}{Proposition}[section]
\newtheorem{lemma}{Lemma}[section]
\newtheorem{corollary}{Corollary}[section]

\theoremstyle{definition}
\newtheorem{definition}{Definition}[section]
\newtheorem{assumption}{Assumption}[section]

\theoremstyle{remark}

\usepackage[textsize=tiny]{todonotes}

\title{How Useful is Causal Invariance for Domain Adaptation\\
in Finite-Sample Settings?}

\author[1]{Julia Kostin$^*$}
\author[2]{Kasra Jalaldoust$^*$}
\author[2]{Elias Bareinboim}
\author[3]{Samory Kpotufe}
\author[1]{Fanny Yang}

\affil[1]{Department of Computer Science, ETH Zurich}
\affil[2]{Causal Artificial Intelligence Lab, Columbia University}
\affil[3]{Department of Statistics, Columbia University}

\date{}

\begin{document}

\maketitle

\def\thefootnote{*}\footnotetext{Equal contribution.}

\begin{abstract}
Machine learning models often degrade when they are deployed on a target distribution that differs from the source distributions they were trained on.
Recent work in causality-based domain generalization has shown how shared causal structure between domains can induce invariant predictors, e.g., models on a subset of features which have stable risk across structured domain shifts. 
However, the extent to which such population-level causal invariances can lead to gains in finite-sample settings remains underexplored. In particular, in practice  we often have access to a few labeled target samples, a setting called supervised domain adaptation (sDA). In this paper, we explore when (full or partial) causal knowledge can provably improve supervised domain adaptation.

As a first step, we study linear regression, where full or partial causal knowledge specifies a collection of invariant or possibly invariant feature subsets, each yielding a source-trained candidate predictor. 
We derive matching upper and lower bounds showing that finite-sample gains are governed by the target-risk margins separating the candidates, together with the finite-source estimation error. 
When these margins are sufficiently large relative to $n_Q$, an adaptive aggregation procedure can match the best candidate predictor while avoiding negative transfer relative to target-only learning. On the other hand, when the margins are too small, no algorithm can reliably exploit the candidate collection to obtain faster finite-sample rates. We further connect these margins to structural shift magnitude in linear SCMs and validate the theory on real-world causal benchmarks.

\end{abstract}

\renewcommand{\thefootnote}{\arabic{footnote}}
\section{Introduction}

Causal invariance has garnered much recent attention in domain generalization as a way to model invariant information that might be transferable across prediction tasks involving different data distributions \citep{peters2016causal,rojas2018invariant,arjovsky2019invariant,bareinboim2014transportability}. 
For example, one may expect that the causal relationships between feature variables $X \doteq [X_{(1)}, \ldots, X_{(d)}]$ and output variable $Y$ hold across domains $P$ and $Q$, e.g. across geographic regions or patient populations. 
Then, it is possible that for certain subsets $I \subset  [d]$ of features of $X$ (especially, but not exclusively, the causes of $Y$), the regression functions of the form $h_I \doteq \E [Y \mid X_I]$ remain unchanged, i.e., $h_{I, P} = h_{I, Q} = h_I$.



As outlined in \Cref{sec:related-work}, existing works in causal domain generalization primarily focus on 1) identifying causal relationships and the 
resulting invariant models and 
2) characterizing the \emph{structural shift}
from $P$ to $Q$ under which such invariant models are guaranteed to have bounded, though not necessarily low, risk under $Q$.
However, despite much promising work on the subject, the extent to which such causal knowledge might be leveraged in certain finite-sample settings of interest (involving $P$ and $Q$ data) remains unclear, as we will soon describe. 



 In case an invariant feature set $I$ exists, with qualitative knowledge of causal relationships and structural shift, even \emph{with no data from $Q$}, we may perform well under $Q$ by estimating invariant $h_I$ from $P$ data alone \citep{peters2016causal,buhlmann2020invariance,Jalaldoust_Bareinboim_2024}. 
However, we typically do not have such causal knowledge in practice, 
only allowing us to identify a set of candidate invariant models.
Furthermore, not all of these models actually have low $Q$-risk and some could be too conservative and perform poorly under $Q$ compared to the best predictor $h_{Q}$ that uses all features.

Fortunately, in many settings, we have access not only to data from $P$, but also to a limited amount of target data from $Q$.
These samples might be useful for identifying which candidates to leverage among the invariant $h_I, I \in {\cal I}$ that are trainable on source data from $P$. 
In this work, we are interested in how useful (partial) causal knowledge is in such settings with finite samples from both $P$ and $Q$. For instance, 
suppose we have a collection $\cal I$ of candidate invariant feature subsets that correspond to the available causal knowledge. Further assume that one of the candidate sets in $\cal I$
has a very low population $Q$-risk $R_Q(h_I)$.
Is it always possible to \emph{leverage} $\cal I$ to achieve a better $Q$-risk than the baseline of using the $P$ and $Q$ data alone, ignoring the partial causal knowledge that yields $\cal I$?
The answer turns out to be mixed with both positive and negative results, and tightly depends on the interplay between certain \emph{risk margins} separating candidate invariants $I \in \cal I$ and the amount of samples from $P$ and $Q$. 

\begin{figure}
    \centering
    \includegraphics[width=0.7\linewidth]{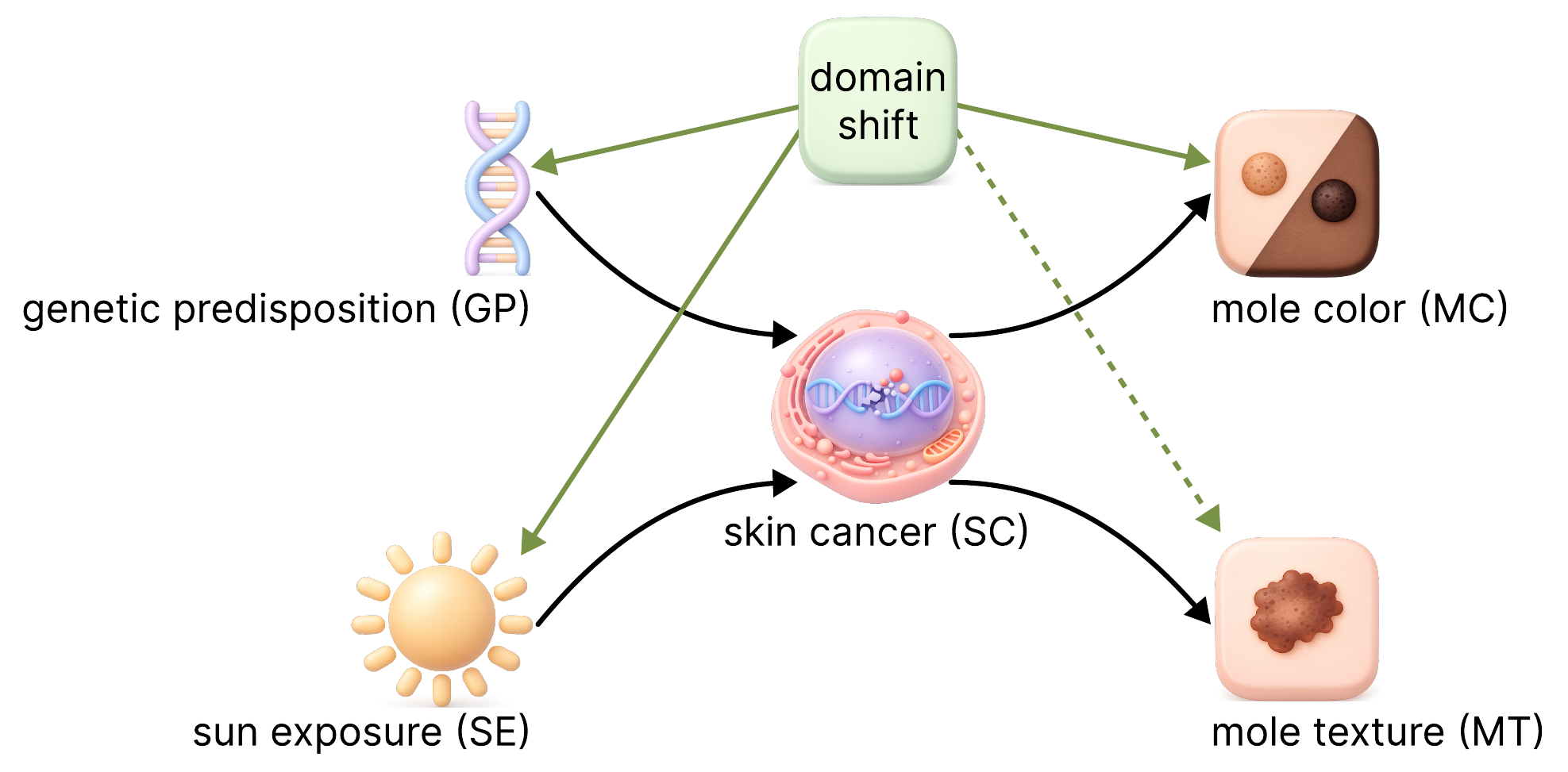}
\caption{A toy causal supervised adaptation problem. The graph depicts simplified relationships between the most common causes and symptoms of skin cancer. The light green shift node indicates which variables may be shifted between two patient subpopulations (domains) $P$ and $Q$, where the dashed edge indicates a symptom, which, unbeknownst to us, remains unshifted between $P$ and $Q$. Two feature sets result in invariant models: The set of \emph{causes}, or parents, $I_1 = \{GP, SE\}$, which results in a model ignoring symptoms of the disease, and the bigger set $I_2 = \{GP, SE, MT\}$, which includes an unshifted symptom. }
\label{fig:dag-skin-cancer}
\end{figure}

To provide quantitative answers to these questions, we consider a linear regression setting in $\R^d$ with $n_P$ samples from a source distribution $P$, and $n_Q$ samples from a target $Q$. The goal is to return a regression estimate $\hat h$ with low $L_2$ excess risk ${\cal E}_Q(\hat h) \doteq R_Q(\hat h) - R_Q(h^*_Q)$ w.r.t. 
an unknown best linear function $h^*_Q$ under $Q$. 
While our analysis makes no assumption on the relation between 
$n_P, n_Q$, we are mostly interested in regimes with $n_P \gg n_Q$, and where prior knowledge of causal invariances might be leveraged to harness the large data from $P$ to do well on $Q$. 
\emph{Thus, we aim to characterize situations when prior causal knowledge, even if imperfect, can be used to achieve an excess risk rate strictly faster than any learner without access to such causal knowledge. }

As alluded to so far, prior causal knowledge and remaining uncertainties can for example be captured using a collection $\cal I$ that consists of all 
candidate subsets $I$ that may remain invariant under possible shifts 
(cf.  \Cref{sec:collections} for a more formal discussion). 
For now, consider the concrete example in
Figure \ref{fig:dag-skin-cancer} for intuition: suppose we would like to predict $Y$ (skin cancer) using 4 groups of features $X = \{\text{GP, SE, MC, MT}\}$, and assume the depicted causal graph is known a priori along with the fact that $Y$ is not intervened on in the shift $P\to Q$. 
Depending on the amount of knowledge about the structural shift between $P and Q$, this collection may contain different sets. 
For instance, suppose we know that $P$ and $Q$ are populations with different skin tones, modeled as a shift arrow on MC in the graph. At the same time assume that we do not know whether MT is shifted, as indicated by the dashed edge on MT in \Cref{fig:dag-skin-cancer}.
In that case, $I_1 \doteq \{\text{GP, SE}\}$ and $I_2 \doteq \{\text{GP, SE, MT}\}$ are the possible invariant sets 
and the collection ${\cal I} = \{I_1, I_2\}$ reflects the uncertainty about the shift in MT. If the causal mechanism of MT shifts from source to target, then $I_1$ would be the only valid invariant subset inside the collection $\cal I$. However, if MT does not shift, $I_2$ is invariant as well and could in fact yield better $Q$ risk since it uses an informative, unshifted \emph{symptom}. 
This example illustrates how $\cal I$ can encode uncertainty in invariants owing to partial knowledge of causal mechanisms.  

We can now describe our main results in the context of a simplifying assumption: 
Suppose a learner has access to prior (partial) causal knowledge encoded by the collection ${\cal I}$ that includes an invariant subset $\goodset$ with $h_{\goodset} = h_Q$. 
Can \emph{any learner} $\hat h$ given $\cal I$ recognize that $\goodset$ is a good (the best!) subset and, that using $P$ data to learn $h_{\goodset}$ would lead to low risk under $Q$? We show that the answer depends not only on the amount of data available in both $P$ and $Q$, but crucially on \emph{margins} 
$\margin_I \doteq R_Q(h_I) - R_Q(h_{\goodset})$ -- that is, the difference of the $Q$-risks between candidates $I\in \cal I$ and the a priori best choice, such as $\goodset$ in this example. In particular, we show that larger risk margins make it easier to leverage $\cal I$.
Our main technical effort is in deriving sufficient and necessary conditions on the magnitude of such $\margin_I$'s w.r.t. sample sizes $n_P, n_Q$ to guarantee benefits from prior causal knowledge. Our results imply the following:

$\bullet$ As a positive result, we show that margins satisfying 
$\inf_{I \in {\cal I}} \margin_I \gtrsim \log |{\cal I}|/ n_Q$ 
are sufficient to successfully leverage causal knowledge at a given $\ntarget$, provided {(i)} $n_P$ itself is sufficiently large to properly estimate regression functions, and {(ii)} the shift $P_X$ to $Q_X$ is sufficiently controlled, so that errors under $P$ may be related to errors under $Q$. 
Under such conditions, knowledge of causal invariances 
enables \emph{few-shot learning} in the following sense: in favorable settings, instead of $\Omega(d/\epsilon)$, we require only $\Omega(d)$ samples to achieve $\epsilon$ excess risk.
We emphasize that in order to 
identify the best predictive model $h_{\goodset}$ 
with a sufficient margin order of $n_Q^{-1}$, we 
require careful model selection. In particular, as the technical reader may note, using uniform convergence arguments, naïve cross-validation may in general require a margin of order $n_Q^{-1/2}$  to succeed. 
Instead, the small margin requirement of order $n_Q^{-1}$ is achieved via iterative refinements of carefully constructed confidence sets centered at optimal aggregation estimators.

$\bullet$ 
Further, we show that this upper bound is tight. In particular, 
we show that even in the two-model case and with infinite source data, if the risk margin satisfies 
$\margin \lesssim \log |{\cal I}|/ n_Q$, no algorithm can reliably achieve the risk of the better candidate with an error of order less than $\margin$. 
Thus, the margin-driven sample complexity $n_Q \gtrsim \Omega(1/\margin)$ in our upper bound is necessary.
Hence, we cannot always leverage $\cal I$---even when it contains arbitrarily good invariants---to achieve a better rate than that of a learner which ignores $\cal I$ altogether. 

With these main results at hand, we return to our original question: can causal knowledge improve supervised domain adaptation when a small number of labeled samples from the target distribution ($Q$) is available? In linear structural causal models,  we first show that large structural shifts induce large target-risk margins between candidate models. Our subsequent guarantees echo a familiar message from the causality literature: causal knowledge is particularly useful under large shifts. For the first time however, we establish usefulness of even incomplete causal knowledge under such shifts, and show how labeled samples from a \emph{fixed} target distribution  can be used more efficiently when some causal knowledge is available.  Thus, our setting differs from the classical domain generalization perspective, where causal invariance is used to guarantee robust performance over a class of possible target distributions; here, labeled samples from a fixed target ($Q$) change the benchmark to \emph{target-specific} performance relative to alternatives such as target-only learning. We now discuss the relation to prior work in more detail.

\subsection{Related work}\label{sec:related-work}

\textbf{Causality-based Domain Generalization.} A long line of work in domain generalization (DG) leverages shared structure between source and target distributions to derive predictors that remain valid under unseen shifts. Early instances assume global covariate or label shift \citep{shimodaira2000improving,sugiyama2007covariate,storkey2008training,lipton2018detecting,garg2020unified}; more structural approaches posit that the data-generating process is governed by shared causal mechanisms \citep{glennan1996mechanisms,machamer2000thinking}, and exploit the resulting invariances either through distributional  ($P(Y \mid X_{\Sstar})$) or regression function ($\EP[Y \mid X_{\Sstar}]$) invariance, specified by an ``invariant'' feature subset $X_{\Sstar}$. The invariance $h_{\Sstar,Q} = h_{\Sstar,P}$ is a special case of \emph{causal transportability}, where an object of interest under the target distribution $Q$, e.g., $h_{\Sstar,Q}$, can be uniquely computed from the source distribution $P$. When qualitative knowledge of causal relationships and structural shifts is available, even \emph{with no data from $Q$}, one may transport $h_{\Sstar}$ from $P$ to $Q$ \cite{pearlbareinboim2011,correa2019statistical,correa2020general,Jalaldoust_Bareinboim_2024}. Invariant predictions are of particular interest \citep{peters2016causal,christiansen2021causal,heinze2018invariant,huang2020,buhlmann2020invariance,rojas2018invariant,arjovsky2019invariant,wang2022generalizing,wald2021on,muandet2013domain,li2018domain}, since they allow the model  $h_{\Sstar,Q}$ to be directly estimated from $P$ data. The strength of an identified invariant predictor is that it provides robust generalization guarantees against worst-case shifts. In practice, however, these methods are frequently outperformed by plain source-based empirical risk minimization (ERM) on realistic benchmarks \citep{gulrajani2020search,nastl2024do,vedantam2021an}, for two reasons: (i) the invariant feature set is typically \emph{not identifiable} from the available source distributions, and (ii) even when it is, restricting prediction to invariant features can yield \emph{weakly predictive} models on a specific target. Recent relaxations partially address these issues: \emph{partial transportability} \citep{jalaldoust2024partial} and \emph{worst-case robustness} \citep{kostin2024achievable} extend worst-case guarantees to partially identifiable settings, while \emph{bounded robustness} \citep{rothenhausler2021anchor,shen2023causalityoriented,kook2022distributionalanchor,jakobsen2022distributional} and the \emph{stable blanket} of \cite{pfister2021stabilizing} retain non-invariant features when shifts are assumed bounded or partially structured. Notably, relaxations outlined above still aim to be robust against a worst-case target distribution; albeit from a smaller family. In contrast, in the field of domain adaptation, which we describe below, the target distribution is viewed as fixed. If some target data is available, one has to compete with both source and target ERM, and robustness to worst-case target distributions may hurt generalization in presence of target data.

\textbf{Domain adaptation.} This  related paradigm investigates to which extent access to some data from the target domain can improve transfer from the source distribution. The field is conventionally split into unsupervised (uDA) \citep{redko2020survey,ben-david-2006,ben2010theory,cortes2010,blitzer2006,mansour2008} and supervised (sDA) \citep{blitzer2011da,chattopadhyay13,hanneke2013,saha2011,valueoftargetdata,hanneke2025adaptivesampleaggregationtransfer} regimes, with the latter additionally assuming a (typically small) labeled target sample, also referred to as \emph{transfer learning}. Classical theory characterizes transferability through a distributional discrepancy between source and target \citep{bendavid2003,ben-david-2006,blitzer2006,mansour2008,ben2010theory}, and shows that without further structure, adaptation is provably hard whenever this discrepancy is unrestricted \citep{david2010impossibility,david2012hardness,ben2014domain}. Such distance-based guarantees can be vacuous in regimes where the source-target shift is large but structurally constrained---precisely the regime in which causal invariances ought to help. Several methods inject structural assumptions into DA \citep{gong2016domain,heinze2021conditional,chen2021domain,wu2023prominent}, but they predominantly target the \emph{unsupervised} setting, require strict identifiability, and have been observed to suffer from \emph{label flipping} when these assumptions fail \citep{zhao2019learning,johansson2019support,wu2023prominent}. Empirically, access to a handful of labeled target samples has been observed to substantially improve causality-based DA over pooled-source models \citep{chen2021domain,londschien2025domain}, yet---to the best of our knowledge---no prior work provides finite-sample sDA bounds that quantify when, and by how much, partial causal knowledge accelerates adaptation with target samples. 

\textbf{Model selection aggregation.} Our procedure builds on the \emph{model selection aggregation} literature \citep{tsybakov2003optimal,rigollet2012kullback}, which takes a fixed  dictionary of candidate predictors and an i.i.d. sample from a single distribution, and outputs an aggregate whose risk is close to that of the best dictionary element. Crucially, \emph{improper} aggregation procedures, such as exponential weights aggregation \citep{yang1999model,audibert2007progressive,juditsky2008learning}, Q-aggregation \citep{lecue2014optimal}, and the STAR estimator \citep{audibert2007progressive,audibert2007proof}, attain the fast rate $\log |\setI|/n$, in contrast to the slow rate $\sqrt{\log |\setI|/n}$ of na\"ive empirical risk minimization over the dictionary. The fast $\log |\setI|/n_Q$ rate of aggregation  allows us to convert risk margins between candidate models into the fast target sample complexity bounds of our main results. Standard aggregation results, however, neither address transfer between distributions nor permit the dictionary to depend on the data; we adapt them to a transfer setting in which candidates are first trained on source data and then aggregated on target. We discuss model aggregation more in \Cref{apx:model-aggregation}.

\section{Setting}

In this section we briefly introduce our supervised domain adaptation setting and notation, and give a short primer on causal graphs and shift mechanisms that the experienced reader is invited to skip. In \Cref{sec:collections} we discuss candidate set collections $\cal I$ that may be constructed using different degrees of causal knowledge. 

\subsection{Supervised domain adaptation setting}
\label{sec:setup-and-notation}

Given a  feature space $\mathcal{X} \subseteq \R^d$ and a label space $\mathcal{Y} \subseteq \R$, we denote by $D$ any joint probability distribution over $\mathcal{X} \times \mathcal{Y}$ (also referred to as a \emph{domain}). We aim to find a hypothesis with a low average loss on a certain data domain. A loss   $\ell: \hypo \times \domainX \times \domainY \to \Rpos$ takes a hypothesis $h: \domainX \to \domainY$ and a (feature, label)-pair $(x,y)$, and returns a non-negative penalty value. 
Throughout this paper, we focus on regression with
the \emph{squared loss} $\ell(h,(x,y)) = (h(x)-y)^2$. For any joint data distribution $\prob$ and hypothesis $h$, we define the \emph{risk} of $h$ as $$\riskD (h) = \E_{(x,y)\sim \prob}[\ell(h,(x,y))] = \ED[\ell(h,(x,y))].$$  Of special interest is the excess risk $$\excessD(h; \hypo) := \riskD(h) - \inf_{h' \in \hypo} \riskD(h'),$$ which expresses how much worse the hypothesis $h$ is compared to the best choice in $\hypo$. We also write $\excessD(h)$ when $\hypo$ is clear from the context. Given a dataset $\mathcal{D}_D = \{(x_i, y_i)\}_{i=1}^n$ of $n$ i.i.d. samples from $D$, we define the empirical risk $\riskempD (h) \coloneqq \frac{1}{n} \sum_{i=1}^n \ell(h,(x_i,y_i))$ and the empirical excess risk $\excessempD(h; \hypo) \coloneqq \riskempD(h) - \min_{h' \in \hypo} \riskempD(h')$. 


The learning setting we consider in this paper is supervised domain adaptation, sometimes also referred to as transfer learning:  we have access to a finite dataset $\datasource = \{(x_i, y_i)\}_{i=1}^{\nsource}$ containing $\nsource$ i.i.d. samples from the \emph{source} distribution $\probsource$ and a (typically small) labeled dataset $\datatarget = \{(x_i, y_i)\}_{i=1}^{\ntarget}$ from the \emph{target} distribution $\probtarget$.  
Whenever multiple source distributions $\{\probsource_e\}_{e \in \mathcal{E}}$ are available, we denote by $\probsource$ their mixture $\sum_{e} w_e \probsource_e$, where $w_e > 0$ is the probability of sampling from source $\probsource_e$. 
The goal is to construct a procedure which depends on $\datasource$ and $\datatarget$, and outputs a hypothesis $\hemp \in \hypo$ with a \emph{low target excess risk} $\excessQ(\hemp)$.

Our analysis focuses on 
the class of (bounded) linear functions $h: \domainX \to \domainY$ as $\hypolinB \coloneqq \{ x \mapsto \langle w, x \rangle \mid w \in \R^d, \| w \|_2 \leq B  \}$ 
where $B \in [0, \infty)$. We write $\mathcal{H} = \hypolinB$ when clear from the context. For an index set $I \subseteq [d] = \{1,...,d\}$, denote by $\projI: \mathcal{X} \to \mathcal{X}$ the projection onto the features $X_I$. We define the restricted class  $\hypoI = \hypo \circ \projI \subseteq \hypo $, that contains all linear functions only depending on features $X_I$. We denote any \emph{feature-subset} population and empirical risk minimizer as
\begin{align}
\label{eq:subsetminimizers}
    \hID \in \arg\min_{h \in \hypoI} \riskD(h) \quad \text{and} \quad \hempID \in \arg\min_{h \in \hypoI} \riskempD(h).
\end{align}
\begin{definition} \label{def:invariant-subset} 
For two distributions $D,D'$, we call a subset of indices $I \subseteq [d]$ and the corresponding feature set $X_I$ (as well as its corresponding model $\hIP$) $(D,D')$-invariant,
if the following holds:
\begin{equation*}
    D(Y\mid X_{I}) = D'(Y\mid X_{I}).
\end{equation*}
Further we have $\E_D[Y \mid X_I] =  \E_{D'}[Y \mid X_I]$ and under the realizability assumption that $\E_D[Y \mid X_I]\in \hypoI$, we also have $\hID = \hIDprime$. 
\end{definition}

If one abstracts away how the candidate models are obtained from source data,
supervised domain adaptation with a finite candidate set is closely related to
the model selection aggregation literature. In its classical form, model
selection aggregation considers a finite dictionary
\(\mathcal H = \{h_m : m \in [M]\}\) of predictors
\(h_m : \mathcal X \to \mathbb R\), together with an i.i.d. labeled sample
\(\mathcal D = \{(X_i,Y_i)\}_{i=1}^n\) from a distribution \(D\).
The goal is to output an aggregate predictor in the convex hull
\(h_{\mathrm{agg}} \in \operatorname{Conv}(\mathcal H)\) whose risk is close to that of the best dictionary element. Our algorithm uses such
aggregation procedures as subroutines, and our generalization bounds rely on
well-known fast-rate guarantees from that literature. In particular, we use
\emph{deviation-optimal} aggregation procedures, which satisfy, with probability
at least \(1-\alpha\),
\begin{equation*}
    \riskQ(\hagg) \leq \min_{m \in [M]} \riskQ(h_m) + C \frac{\log (M/\alpha)}{n}.
\end{equation*}
We review several deviation-optimal aggregation procedures and their guarantees
in \Cref{apx:model-aggregation}.

\subsection{Causality Primer}\label{sec:new-causal}
This paper considers the setting where data might come from different domains. Whether data from some domains can be leveraged for prediction in another depends crucially on how the shifts between domains are modeled.
Causality-based methods assume that domains share certain causal mechanisms, often formalized via a structural causal model (SCM). An SCM $\SCM$ over the observable variables $V_1,V_2,...,V_p$ is defined using a joint distribution over unobserved variables $\mathbf{U}$ and a set of \emph{structural equations} or \emph{causal mechanisms},
\begin{equation*}
    V_i \gets f^{D}_{i}(\mathbf{V}_{\Pa({V_i})},\mathbf{U}_{i}) \quad i = 1,...,p,
\end{equation*}
 where $\Pa({V_i}) \subseteq [d+1]$ are called the \emph{parents} or direct causes of $V_i$, and $\mathbf{U}_{i} \subseteq \mathbf{U}$ is a subset of unobserved variables that affect $V_i$.
For an SCM, its \emph{causal graph} $\graph$ is a graph with vertices $\mathbf{V}$, a directed edge from $V_i$ to $V_j$ whenever $i \in \Pa(V_j)$, and a bi-directed edge between $V_i$ and $V_{j}$ whenever they are \emph{confounded}, i.e., the noise vectors $\mathbf{U}_i$ and $ \mathbf{U}_j$ are dependent. 
We assume that the causal graph is acyclic \cite{pearl2009causality}. Further, we say that a distribution $D$ over $V \coloneqq (X_1,...,X_d,Y)$
has a causal structure if there exists an SCM $\SCM_{D}$ over $V$
characterized by causal mechanisms $\{f^D_1,...,f^D_d,f^D_Y\}$ and a joint distribution $D_{\mathbf{U}}$ over the noise variables, such that $D$ is induced by the SCM $\SCM_{D}$.

When we are dealing with different domains with distributions $D$ and $D'$ that we believe to follow the structural causal models $\SCM_D$ and $\SCM_{D'}$ respectively, one way to model their difference is to assume \emph{structural shifts}, or \emph{causal mechanism shifts}: that is, for some variables $V_i \in \{X_1,...,X_d,Y\}$, their assignment function and/or their noise distribution differ across the domains, i.e. $f^D_i \neq f^{D'}_i$ and/or the marginal distributions over the noise vectors $D_{\mathbf{U}_i} \neq D'_{\mathbf{U}_i}$. 
The \emph{domain discrepancy}, or \emph{shift set} $\shiftset{D}{D'} \subseteq [d+1]$ 
consists of all variables that experience structural shift 
across the domains. 
One prominent example of a shift set $\shiftset{D}{D'}$ in the literature are the set of variables that are \emph{intervened} on between $D,D'$ \cite{pearl2009causality}, while the structural equations of all other variables are shared across $D, D'$. 
When both domains 
only differ in terms of such structural shifts,
one can represent both domains and the structural shift between them via a so-called \emph{selection diagram}: it is constructed by augmenting the causal graph with a square node $\squarenode{D}{D'}$ that points to all shifted variables $V_i$ with $i \in \shiftset{D}{D'}$. See \Cref{fig:main-causal-figure} for the formal selection diagram that captures the example in \Cref{fig:dag-skin-cancer} described in the introduction.

In the causal invariance literature, finding invariant (sometimes also referred to as stable)  
feature sets $I$ (cf. \Cref{def:invariant-subset}) is often considered the ultimate goal in the presence of unknown structural shifts. The rationale is that the corresponding feature-subset minimizers in \Cref{eq:subsetminimizers} have controlled risk for any admissible target distribution. However, in practice,  we often cannot guarantee full knowledge of the graph and structural shift, for example due to lack of heterogeneity in the source distributions. Instead, we sometimes can at most identify a few edges in the causal graph and only have partial knowledge about shift set. For instance, we could know that the structural equations of certain variables are definitely invariant between source and the target, while for some other variables they have possibly shifted.
As a consequence, we may only be able to identify a collection of many possible invariant sets (only few of which are truly invariant for a particular target). Such a collection then reflects the remaining uncertainty about the causal graph and structural shift. 
\begin{figure*}[t]
    \centering
    \includegraphics[width=\textwidth]{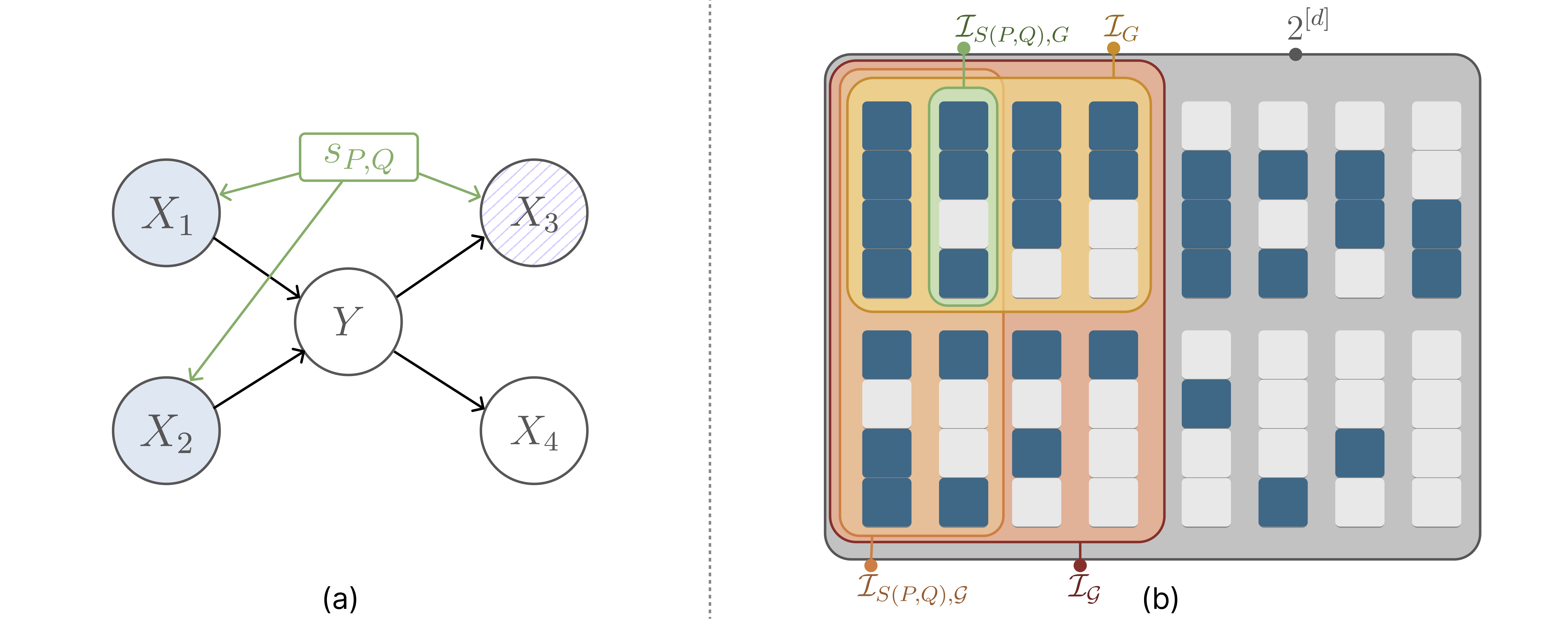}
    \caption{
    (Generally unknown) underlying causal structure of the data shared across source and target domains $P$ and $Q$. The parents of $Y$ are indicated in blue. The node $\squarenode{P}{Q}$ points to variables whose causal mechanisms shift  from $P$ to $Q$. The striped node $X_3$ is the shifted child of $Y$. \textbf{(b)} Corresponding sufficient invariance collections for varying degrees of causal knowledge. Here, $\shiftset{P}{Q} = \{1,2,3\}$. The green singleton collection $\mathcal{I}_{\shiftset{P}{Q},G}$ corresponds to full knowledge of the causal graph $G$ and the shift set  $\shiftset{P}{Q} = \{1,2,3\}$. The yellow collection $\mathcal{I}_G$ corresponds to knowledge of the graph but not the shift. The orange collection $\set_{\shiftset{P}{Q}, \mathcal{G}}$ corresponds to knowledge of the shift set and partial knowledge of the graph, where $\graphs = \{\graph': 1 \in \pa(Y) \text{ in } \graph'\}$ (cf. \Cref{sec:toy-example}) is discoverable using two source domains where only $X_1$ shifts between them. Finally, the red collection $\set_{\graph}$ corresponds to partial knowledge of graph $\mathcal{G}$ and no knowledge of the shift. The grey collection contains all feature subsets.
    } 

    \label{fig:main-causal-figure}
\end{figure*}

\textit{}


\subsection{Collections of feature sets}\label{sec:collections}


One broad objective could be to find a collection of predictors (trained on the source distribution $P$) that contains at least one predictor with bounded $Q$-risk. In this section we show how to construct such a candidate collection that reflects causal knowledge and in  \Cref{sec:main-results}, we show how this collection can be used for supervised domain adaptation. Our main results in \Cref{sec:main-results} however apply to \emph{any} collection of feature-restricted source models, whether or not they follow the specific constructions discussed here and throughout the causal inference literature. 

Consider a collection of feature subsets $\set \subseteq 2^{[d]}$, and the corresponding collection of feature-restricted source predictors $h_{I,P}$. 
Recall that under the realizability assumption, it holds that $h_{I,P} = h_{I,Q}$ for any $(P,Q)$-invariant $I\in \set$.
The $(P,Q)$-invariance of feature-restricted models allows us to train a model on the source and safely use it for prediction in the target. However, no amount of source data alone can determine whether a feature subset $I \subseteq [d]$ is $(P,Q)$-invariant, and therefore, access to 
knowledge of the causal graph and structural shift is necessary.

Given the causal graph $\graph$ and the shift set $\shiftset{P}{Q} \subseteq [d]$, we could directly 
find $(P,Q)$-invariant models: For a feature set $I\subseteq [d]$, if the label $Y$ and the shift node $\squarenode{P}{Q}$ are \emph{d-separated} given $X_I$ (denoted as $Y \dsep \squarenode{P}{Q}  \mid X_I \text{ in } \graph$) then $I$ is $(P,Q)$-invariant\footnote{The converse is not true in general, however, under
the additional assumption of causal faithfulness \cite{spirtes2001causation} 
we can obtain a direct correspondence between d-separation and invariance. 
}.
However, the causal graph $\graph$ is rarely fully known in advance. Instead, given the source $P$, one may still be able to \emph{partially} discover the causal graph and obtain a set $\graphs$ of possible candidate graphs: for example, under the causal faithfulness assumption, we can use the source data $P$ to obtain the Markov Equivalence Class (MEC) of causal diagrams represented by a PAG/CPDAG \cite{spirtes2000causation}. A complementary line of work \cite{wang2026confidence} quantifies the statistical uncertainty in discovery, returning valid confidence sets of causal orderings that can be translated into an uncertainty set of graphs $\graphs$.

We can now formally define a collection of most predictive invariant sets that depends on the degree of knowledge about the causal graph and the shift set.

\begin{definition} \label{def:sufficient-invariance-collection} We define the \emph{sufficient invariance collection} as the collection of \emph{maximal} feature subsets that could be $(P,Q)$-invariant under an admissible $Q$, and for different degrees of causal knowledge:
\begin{itemize}
\item Given full causal knowledge $\graph$ and $\shiftset{P}{Q}$, we have
\begin{equation*}
    \suffcollection = \set_{\shiftset{P}{Q},\graph} \coloneqq \big\{ I \subseteq [d] \text{ maximal such that } Y \dsep \squarenode{P}{Q}  \mid X_I \text{ in } \graph\big\},
\end{equation*}
where $I$ is maximal when $\nexists$ $I'$ with $I \subsetneq I'$ with $Y \dsep \squarenode{P}{Q}  \mid X_{I'}$.
\item When the causal graph is only partially known, i.e., $\graph \in \graphs$ where $\graphs$ is a set of possible causal graphs, we have
$$\suffcollection = \set_{\shiftset{P}{Q},\graphs} \coloneqq  \bigcup_{\graph \in \graphs} \set_{\shiftset{P}{Q},\graph} .$$
\item When the shift set is partially known, e.g., one has access to $\mathcal{S} \subseteq 2^{[d]}$ such that $\shiftset{P}{Q} \in \mathcal{S}$, we define,

 $$ \suffcollection= \set_{\mathcal{S},\graph}  \coloneqq  \bigcup_{S \in \mathcal{S}} \set_{S,\graph} .$$ 
 \item Finally, when both the causal graph and the shift set are partially known,  encoded by $\graphs \ni G,\mathcal{S}\ni S(P,Q)$, we use
 $$ \suffcollection= \set_{\mathcal{S},\graphs}  \coloneqq \bigcup_{S \in \mathcal{S}} \set_{S,\graphs}  = \bigcup_{\graph \in \graphs} \set_{\mathcal{S},\graph} .$$ 
When no knowledge of the shift set is given at all, we write $\set_{\graphs} \coloneqq \set_{2^{[d]},\graphs}$ and $\set_G \coloneqq \set_{2^{[d]},G} $ as a shorthand. 
\end{itemize}
\end{definition}
For an illustration of sufficient invariance collections, cf. \Cref{fig:main-causal-figure}. 
In the absence of confounding  (i.e., the Markovian setting), knowledge of the causal graph and the shift set is enough to obtain a single $(P,Q)$-invariant model with lowest $Q$-risk so that $\suffcollection = \{I\}$ with $I = [d] \setminus \bigcup_{X \in \Ch(Y) \cap \shiftset{P}{Q} } \De(X)$. This set has been referred to as the \emph{stable blanket} in \cite{pfister2021stabilizing} and the \emph{maximum invariant set} in \cite{gu2024causality}. In non-Markovian cases, however, even the full-knowledge  sufficient invariance collection $\set_{\shiftset{P}{Q},\graph}$ may contain as many as $2^{d/4}$ feature subsets \cite{Jalaldoust_Bareinboim_2024}.

From the domain adaptation perspective, a desirable property of a collection $\set$ of feature subsets is to contain $Q$-predictive $(P,Q)$-invariant models, 
as we may then
achieve a low $Q$-risk 
when we aggregate the collection of predictors  $\set$ 
using the $Q$ samples. The sufficient invariance collection defined in \Cref{def:sufficient-invariance-collection} satisfies this property, that is,
\begin{equation}
\label{eq:wantedproperty}
    \min_{I \in \suffcollection} \riskQ (h_{I,P}) \leq \min_{(P,Q)\text{-invariant } I \subset [d]} \riskQ (h_{I,P})
\end{equation}
for any degree of causal knowledge.



\paragraph{Construction of $\suffcollection$.} In the case of partial causal knowledge, 
we may bypass the exhaustive iteration of finding $\set_{\shiftset{P}{Q},\graph}$ over all graphs $\graph \in \graphs$.
For example, when we have a MEC, for every feature subset $I \subseteq [d]$, one can test the membership $\set_{\shiftset{P}{Q},\graphs}$ using \emph{possible} d-separation graphical criteria in the PAG/CPDAG that represents the MEC, cf., \cite{zhang2008completeness,perkovic2018complete}.
In the presence of heterogeneous source data distributions $\{ {P^e} \}_{e \in \mathcal{E}}$, where $\mathcal{E}$ denotes the set of source domains, we may use recent work (e.g., \cite{peters2016causal}) to identify some parents of $Y$ --  another way to induce a set $\graphs$ of possible graphs, as we will discuss next.  Multi-source data can also be used to obtain 
Markov equivalence classes \citep{li2023causal,hauser2012characterization,yang2018characterizing}, which yield sufficient invariance collections akin to \Cref{def:sufficient-invariance-collection}. 

Besides the causal graph, the shift set $\shiftset{P}{Q} \subseteq [d]$ 
may also be unknown. 
In the absence of any knowledge about $S(P,Q)$, we may simply choose $\mathcal{S} = 2^{[d]} \ni S(P,Q)$, resulting in $\suffcollection = \set_\graphs$.
Note that adding a parent to a feature subset that d-separates $Y$ from $s_{P,Q}$ preserves d-separation. Therefore, due to maximality in \Cref{def:sufficient-invariance-collection} all definite parents of $Y$ in $\graphs$
must be included in every $I \in \set_{ \graphs}$, that is
\begin{align} \label{eq:parents-in-I}
    X_i \in \Pa(Y) \text{ for all } \graph \in \graphs &\implies i \in \bigcap_{I \in \set_{ \graphs}} I.
\end{align}
\paragraph{Alternative collections using multiple environments.}\label{sec:multiple-environments} Now note that \Cref{eq:parents-in-I} can in fact be interpreted in the inverse direction: one can find parents of $Y$ by finding sufficient invariance collections and considering their intersection. In fact, this approach is taken in Invariant Causal Prediction (ICP), which  outputs an intersection of source-invariant subsets to gain (partial) knowledge of the causal parents and obtain a set $\graphs$. The key difference is that the construction of that collection does not need explicit causal knowledge to begin with but is directly based on the invariance using the different domains.  We now formalize these statements.
More concretely, consider the setting where data from multiple source distributions $\{ {P^e} \}_{e \in \mathcal{E}}$ is available, and let $P = \sum_{e \in \mathcal{E}} w_e P^e$ denote the pooled source distribution.
Suppose for every source index $e \in \mathcal{E}$, the distribution $P^e$ is generated by its own SCM over $X,Y$, and the structural equation of $Y$ remains unchanged across all source and target SCMs.
Now for every $I \subseteq [d]$, one can verify whether the invariance $h_{I,P^e} \overset{?}{=} h_{I,P}$ holds for all $e \in \mathcal{E}$, and this defines the following collection.

\begin{definition} \label{def:source-invariance-ICP} For a set of source domains $\env$, we define the \emph{source-invariance collection} of feature subsets as
\begin{equation*}
    \invariantset(\env) \coloneqq \{I \subseteq [d] \text{ such that } I \text{ is } (P^e,P^{e'})\text{-invariant } \forall e,e' \in \mathcal{E} \}.
\end{equation*}
where we usually omit the explicit dependence on $\env$.
\end{definition}

Note that the source-invariance collection itself may not contain the most $Q$-predictive $(P,Q)$-invariant model (cf. \Cref{sec:toy-example}), and therefore, does not satisfy \Cref{eq:wantedproperty} and hence differs significantly from $\suffcollection$. However, it 
contains at least one $(P,Q)$-invariant model; so if that model happens to be $Q$-predictive, then we may gain in the supervised domain adaptation setting. 
In \Cref{sec:optimistic-collection} we elaborate more on collections that have  desirable properties for supervised domain adaptation. As mentioned earlier, a major advantage compared to   
sufficient invariance collection is that the $(P^e,P^{e'})$-invariances are testable using source domains alone,
without \emph{any} explicit knowledge about the SCM a priori.

At the same time, one \emph{can} also use $\invariantset$ to obtain partial knowledge --- encoded via $\graphs$ --- about the underlying graph, and, with that, build the corresponding sufficient invariance collection.
In particular, akin to the converse of \Cref{eq:parents-in-I},
source-invariance collection can be used to reason about the possible parents of $Y$, that in turn gives rise to a set $\graphs$ of possible graphs.

Invariant Causal Prediction (ICP) \cite{peters2016causal}  
takes the intersection of all feature subsets in the collection $\invariantset$, which, under certain conditions, is guaranteed to contain definite parents of $Y$:
\begin{equation*}
    I_{\mathrm{ICP}} \coloneqq \bigcap_{I \in \invariantset} I \subseteq \pa(Y) \text{ in all } \graph \in \graphs.
\end{equation*}
With access to \emph{sufficiently diverse} source domains, ICP is guaranteed to 
perfectly identify
the parent set $\Pa(Y)$. 
In practice, the set $I_{\mathrm{ICP}}$ is often empty, or contains only a very small subset of the parents $\Pa(Y)$, e.g. 
when the source domains are not diverse enough 
or if $Y$ is confounded. In contrast, under the same assumptions, the collection $\invariantset$ is \emph{guaranteed} to contain a $(P,Q)$-invariant model even if the source heterogeneity is insufficient, making it a desirable object for adaptation.

\section{Main results}\label{sec:main-results}
In the previous section, we have introduced the concept of a collection of candidate invariants, and, as an example, discussed so-called sufficient invariance collections (cf. \Cref{def:sufficient-invariance-collection}). The remaining question is whether we can leverage \emph{any} such   
collection and a (small) target sample $\dataQ$
to obtain a model with low target risk; in particular, whether there exists a procedure depending on $\dataP, \dataQ$ and $\set$ which results in much lower target risk than the retraining-on-target baseline $\hempQ$. If so, we would like to quantify how many target samples are sufficient and necessary to benefit from knowledge of a collection $\set$, and whether there are collections which enable especially fast target adaptation. The rest of this section is agnostic to the "causal nature" of the candidate collection $\cal I$. In particular, the following results are valid regardless of whether $\cal I$ contains a feature set with low target risk, or a $(P,Q)$-invariant feature set. We discuss how the degree of causal knowledge in $\cal I$ affects the resulting adaptation guarantees in linear SCMs in \Cref{sec:benefits-of-causal-invariance}.
\smallskip

Denote by $\hypoinit \coloneqq \{ \hempIP: I \in \set \}$ the collection of models resulting from training a feature-subset model $\hempIP$ on $\datasource$ for each candidate feature set $I \in \set$.  As a first thought, we might want to take the straightforward approach and output the best model given $\dataQ$, i.e., compute the empirical risk minimizer $\herm (\hypoinit) \in \arg\min_{I\in\cal I} \risktargetemp(\hempIP)$ over the finite collection of the candidate models $\hempIP$.
This naive strategy has multiple drawbacks: first, empirical risk minimization over finite classes only guarantees a \emph{slow rate} of the form $\sqrt{\log(| \set |) / \ntarget}$.  Thus, the corresponding excess target risk guarantee for the naive approach would be 
\begin{equation*}
    \excesstarget(\herm(\hypoinit)) \leq \min_{I \in \set} \excesstarget(\hempIP) + C \sqrt{\frac{\log | \set |}{\ntarget}}.
\end{equation*}
In particular, if the collection is large, i.e. $| \set | = \Theta(2^d)$, such an estimator is generally outperformed by the target linear model $\hempQ$, which achieves the \emph{fast rate} of linear regression $\excessQ(\hempQ) \leq \Creg d/\ntarget$ \cite{hsu2012random} (cf. \Cref{lem:convergence-linear-regression}).
Moreover, if all candidate models in $\hypoinit$ have high target excess risk (for instance, in case of a large $Y$-shift, or bad estimation of $\hIP$ caused by small $\nsource$ or large covariate shift), $\herm$ is bound to have a high target risk even for very large $\ntarget$, where it is outperformed by the target baseline $\hempQ$. These caveats imply that the potential disadvantage of $\herm$ compared to retraining on target data, also called \emph{negative transfer}, is unavoidable. In the following sections, we show that it is possible to avoid negative transfer and achieve no worse than the best risk $\min_{I \in \set} \excesstarget(\hempIP)$ in the collection given potentially a small amount of target samples, even if the collection $\set$ is large. Further, we show that a similar amount of target samples is \emph{necessary} to benefit from a collection $\set$.   First, we outline a two-step adaptive procedure which achieves our target risk guarantees. 

\vspace{-6pt}
\subsection{Adaptive procedure} 

\begin{algorithm}[t]
  \caption{Two-step Adaptation via Iterative Localized Aggregation}
  \label{alg:iterative-step-agg}
  \begin{algorithmic}[1]

    \Require Source data $\dataP = \{ (x_i, y_i)\}_{i\in [\nsource]}$, target data
    $\dataQ = \{ (x_i, y_i)\}_{i\in [\ntarget]}$ s.t. $| \dataQ | \geq 2$ split into $(\datatargetone,\datatargettwo)$, collection of feature subsets
    $\set \subseteq 2^{[d]}$, constants $C_1, C_2 > 0$, confidence level $0 < \alpha < 1$.

    \State \textbf{Step 0: Split the data and compute models.}
    \State For each $I \in \set$, compute the feature-subset model $\hempIP$ using source data.
    \State Define $\hypoinit \gets \{ \hempIP: I \in \set \}$.
    \State Split target data into two equal parts, $\datatargetone$ and $\datatargettwo$.
    \State Compute the target ERM $\hempQ$ on $\datatargetone$.

    \State \textbf{Step 1: Guard against negative transfer.}
    \State On $\datatargetone$, retain candidates within confidence band of $\hempQ$:
\State $\hypoacc^0 \gets \left\{\, \hempIP :\,
\|\hempIP - \hempQ\|_{\hat{\Sigma}_Q}^2 \le C_1 \tfrac{d + \log(1/\alpha)}{\ntarget} \,\right\}$.
\If{$\hypoacc^0 =\emptyset$}
\State \Return $\houtput =\hempQ$
\EndIf
    \State \textbf{Step 2: Aggregate, refine and iterate.}
    \State Iteratively reject models worse than their aggregate:
    \State Set $i \gets 0$
    \While{true}
      \State On $\datatargettwo$, compute model selection aggregate $\hagg(\hypoacci)$. 
      \State Retain candidates within confidence band of $\hagg(\hypoacci)$:
\State $\hypoacciplusone \gets \{ \hempIP \in \hypoacci : \|\hempIP - \hagg(\hypoacci)\|_{\hat{\Sigma}_Q}^2 \le C_2 \frac{\log |\hypoacci| + \log(1/\alpha)}{\ntarget} \}$.
      \If{$\hypoacciplusone = \emptyset$}
        \State \Return $\hadapt 
         \in \arg\min_{\hempIP \in \hypoacci} \|\hempIP - \hagg(\hypoacci) \|_{\hat{\Sigma}_Q}^2$
      \ElsIf{$\hypoacciplusone = \hypoacci$}
        \State \Return $\hadapt = \hagg(\hypoacci)$.
      \EndIf
      \State $i \gets i + 1$.
    \EndWhile
  \end{algorithmic}
\end{algorithm}
\vspace{-0pt}

We outline and discuss \Cref{alg:iterative-step-agg}, a theoretical procedure for supervised domain adaptation under partial knowledge of causal invariances in form of a collection $\set \subseteq 2^{[d]}$ of candidate feature sets.

\textbf{Input. } We are given datasets $\dataP$ and $\dataQ$ from the source and target distributions, together with a collection of feature subsets $\set \subseteq 2^{[d]}$ which induces a collection of feature-restricted linear models $\hypoinit := \{\hempIP: I \in \set \}$\footnote{In the following, we assume that $\set$ is always augmented by $[d]$, i.e. our collection always includes the full-feature source model.}. The algorithm receives a confidence level $0 < \alpha < 1$ and constants $C_1, C_2 > 0$. We split the target dataset into two equal folds, $\datatargetone$ and $\datatargettwo$.
We state the steps of the procedure informally:
\paragraph{Step 1. Guard against negative transfer.} In the first step, we use the first fold of the target data $\datatargetone$ to retain all models which are guaranteed to have (population) target risk \emph{at most} of the same order as the target ERM. The goal of Step 1 is to \emph{avoid negative transfer}, i.e. outputting a model with a worse target risk compared to retraining. We refer to the retained set as $\hypoacc^0$. If $\hypoacc^0$ is empty, $\hempQ$ is returned.
\paragraph{Step 2. Aggregate, refine and iterate.} The second step can be performed once or iteratively. On $\datatargettwo$, we compute a model selection aggregate $\hagg(\hypoacci)$ over the current retain set $\hypoacc^i$ (cf. \Cref{apx:model-aggregation} for a discussion of different aggregation methods and their guarantees). We then reject any model whose $\Sigmatargetemp$-weighted distance to the aggregate is not within the confidence band $\frac{\log |\hypoacci| }{\ntarget}$, which corresponds to the convergence rate of the aggregate.
Finally, we update the retain set $\hypoacc^{i}$ to exclude the rejected models. Iterative refinement stops when either no more models are rejected or the retain set is empty. 

In the following sections, we provide target generalization guarantees for \Cref{alg:iterative-step-agg} and show how it allows for fast adaptation under favorable target risk margin conditions in the candidate collection $\hypoinit$.


\subsection{Supervised adaptation guarantees}\label{sec:upper-bound}
Our adaptation guarantees rely on some 
distributional assumptions on the source and target domains that we discuss below. 
\subsubsection{Distributional assumptions}\label{sec:distributional-assumptions}
\begin{assumption}[Boundedness]\label{asm:tail-of-X-and-Y}
    For $\prob \in \{\source, \target\}$, the covariates $X$ are bounded with $\| X\|_2 \leq B_X$ a.s., and $|Y| \leq \sigma_Y$.  Additionally, we consider the bounded linear hypothesis class $\hypolinB$ with $B < \infty$, implying bounded restricted hypothesis classes $\hypoI$ for any feature subset $I \subseteq [d]$. 
\end{assumption}
We note that \Cref{asm:tail-of-X-and-Y} is widely regarded necessary to establish fast deviation-optimal guarantees for model selection aggregation of form $\frac{\log |\hypo| }{n}$. Model selection aggregation is part of  \Cref{alg:iterative-step-agg} (cf. \Cref{alg:iterative-step-agg}, Step 2). Consequently, generalizations of aggregation guarantees to sub-Gaussian and heavy-tailed noise designs immediately establish our result in these settings. We provide more details on extensions beyond bounded designs in \Cref{sec:limitations} and \Cref{apx:model-aggregation}. 
\begin{assumption}[Conditioning of covariances]\label{asm:conditioning-of-covariances}
    There exists $\lambda_{\min} > 0$ such that for $\prob \in \{\source, \target\}$, the matrix $\Sigma_D \coloneqq \ED[X X^\top]$ is positive definite with $\lambda_{\min}(\ED[X X^\top]) \geq \lambda_{\min}$.  
\end{assumption}
In particular, \Cref{asm:conditioning-of-covariances} implies that the quantity $\lambda_{\max} \coloneqq \lambda_{\max}(\Sigma_P^{-1}\Sigma_Q)$, which is a measure of covariate shift between $\probsource$ and $\probtarget$, is finite. 
\subsubsection{Upper bound}\label{sec:upper-bound-subsection}
Denote by $\bestinset \in \argmin_{I\in \setI} \excessQ(\hIP)$ any best candidate feature set from $\setI$ on $Q$ in population, and by $\bestrisk \coloneqq \excessQ(\hbestP)$ its target excess risk (similarly, $\epsilon_I \coloneqq \excessQ(\hIP)$ for other candidates). We express our target risk guarantees in terms of the following key quantities. For any $I \in \setI$, a key quantity are the \emph{target risk margins}
\begin{align*}
    \gapIbest \coloneqq \epsilon_I - \bestrisk .
\end{align*}
Further, the $\gapIbest$-estimation error due to having only $\nsource$ source samples 
is captured by the residual 
\begin{align}\label{eq:rnp-definition}
    \maxmarginerror \coloneqq \lambdamax \Creg \frac{d + \log( | \hypoinit | /\alpha)}{\nsource} + 2\sqrt{ \left(\sup_{I \in \setI} \epsilon_I \right) \lambdamax \Creg \frac{d + \log( | \hypoinit | /\alpha)}{\nsource}},
\end{align}
where $\Creg$ is the linear regression concentration constant as in \Cref{lem:convergence-linear-regression}. We note that $\maxmarginerror = \mathcal{O}(\sqrt{d/\nsource})$, in particular, $\maxmarginerror \to 0$ as $\nsource \to \infty$.
Next, by Step 1 of \Cref{alg:iterative-step-agg}, our upper bounds inherit  the fast rate of the excess risk of model selection aggregation for strongly convex losses $\erroragg(\ntarget) \coloneqq \Cagg \frac{\log | \hypoacczero | + \log(1/\alpha)}{\ntarget}$, where $\Cagg$ is the aggregation concentration constant (see \Cref{apx:model-aggregation}) and $| \hypoacczero | \leq  |\hypoinit| \leq 2^d$ is the size of the accepted collection in Step 1 of \Cref{alg:iterative-step-agg}. 
In particular, this rate appears in the largest margin error incurred at  $(\nsource,\ntarget)$:
\begin{align}\label{eq:deltamax}
\gapmax(\nsource,\ntarget)\coloneqq\sup\bigl\{\gapIbest:\gapIbest\le c_0(\bestrisk+\erroragg(\ntarget)+\maxmarginerror)\bigr\}.
\end{align}


We can now state our main result.

\begin{theorem}\label{thm:main-result}
Let $\hypoinit = \{ \hempIP: I \in \set \}$ be a collection of feature-subset linear models indexed by a collection $\set$. Let $0 < \alpha < 1$. Let $\hadapt$ be the output of \Cref{alg:iterative-step-agg}. Assume that $\ntarget \geq C(B_X, \lambda_{\min},d,\alpha)$ as in \Cref{lem:matrix-concentration}, $C_1 \geq 4 \Creg$ and $C_2 \geq 3\Cagg$. 
    Then, under \Cref{asm:tail-of-X-and-Y,asm:conditioning-of-covariances}, it holds with probability at least $1-\alpha$:
{\small
\begin{align*}\label{eq:main-equation}
    \excesstarget(\hadapt)
    &\le \min\Bigl\{
    \underbrace{C\frac{d + \log(1/\alpha)}{n_Q}\vphantom{\excessQ(\hbestP)}}_{\targetrateterm},
    \underbrace{\bestrisk \vphantom{\frac{d}{n_Q}}}_{\bestriskterm}\!+\!\underbrace{\gapmax(\nsource,\ntarget)+3\maxmarginerror\vphantom{\frac{d}{n_Q}}}_{\marginerrorterm}
    \Bigr\},
\end{align*}
}
where $C, c_0 > 0$ are universal constants.
\end{theorem}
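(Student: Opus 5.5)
We prove the bound on a single good event $\mathcal{A}$ of probability at least $1-\alpha$, obtained by a union bound over four sub-events, each with confidence $\alpha/4$. The first sub-event is the target regression bound of \Cref{lem:convergence-linear-regression} for $\hempQ$ on $\datatargetone$: $\|\hempQ-\hQ\|_{\Sigma_Q}^2 \le \Creg\frac{d+\log(1/\alpha)}{\ntarget}$. The second is matrix concentration from \Cref{lem:matrix-concentration}. Since $\ntarget \ge C(B_X,\lambda_{\min},d,\alpha)$, it gives $\tfrac12\Sigma_Q \preceq \hat\Sigma_Q \preceq \tfrac32 \Sigma_Q$, so empirical and population $\Sigma_Q$-norms are interchangeable up to constants. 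The third is the source regression bounds, applied uniformly over the $|\hypoinit|$ candidates and transferred to $Q$ via $\lambdamax$: $\|\hempIP-\hIP\|_{\Sigma_Q}^2 \le \lambdamax\Creg\frac{d+\log(|\hypoinit|/\alpha)}{\nsource}=:r$. The fourth is the deviation-optimal aggregation guarantee on $\datatargettwo$, applied uniformly over all retain sets $\hypoacci$ that could arise.

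The delicate point is that the retain sets depend on $\datatargettwo$. Since the sets are nested, the iteration length is at most $|\hypoacczero|$. We will condition on the first fold and the source data, then union bound over iterations, which costs only an extra $\log|\hypoacczero|$ absorbed into $\Cagg$. Combining the third sub-event with the exact identity $\excessQ(h)=\|h-\hQ\|_{\Sigma_Q}^2$ for linear $h$ and the triangle inequality gives $|\excessQ(\hempIP)-\epsilon_I|\le r+2\sqrt{\epsilon_I r}\le \maxmarginerror$. This shows that empirical margins track the population margins $\gapIbest$ up to $\maxmarginerror$.

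\textbf{Target-rate term.} If $\hypoacczero=\emptyset$, the output is $\hempQ$ and the first sub-event gives the bound. Otherwise every candidate in $\hypoacczero$ satisfies $\|\hempIP-\hempQ\|_{\Sigma_Q}^2\le 2C_1\frac{d+\log(1/\alpha)}{\ntarget}$. The aggregate lies in $\mathrm{Conv}(\hypoacci)\subseteq\mathrm{Conv}(\hypoacczero)$, and by convexity of the squared norm it inherits the same bound. Adding the distance from $\hempQ$ to $\hQ$ gives $\excessQ\lesssim (d+\log(1/\alpha))/\ntarget$ for every possible output.

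\textbf{Best-candidate term.} We show that $\bestinset$, or any near-best candidate, survives Step 1 whenever $\bestrisk+\maxmarginerror\lesssim d/\ntarget$. If it does not, the target-rate term already dominates, so we may assume it survives.

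In Step 2 the aggregation guarantee gives $\excessQ(\hagg(\hypoacci))\le \min_{\hypoacci}\excessQ+\erroragg$. Strong convexity of the squared loss in the $\Sigma_Q$ geometry then yields, for each retained $h$, the sandwich
\[
\|h-\hagg\|_{\Sigma_Q}^2 \asymp \excessQ(h)-\excessQ(\hagg)\pm O(\erroragg).
\]
This is the Q-aggregation style inequality. It holds because $\hagg$ nearly minimizes over the convex hull, so $\excessQ(h)-\excessQ(\hagg)\gtrsim \|h-\hagg\|^2-\erroragg$.

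By induction on $i$, using $C_2\ge 3\Cagg$, we show two things. First, the best candidate is never rejected while it is in $\hypoacci$. Second, any candidate with $\gapIbest > c_0(\bestrisk+\erroragg+\maxmarginerror)$ is rejected, because its distance to the aggregate then exceeds the band.

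When the loop terminates with $\hypoacciplusone=\hypoacci$, every retained candidate has margin at most $\gapmax$. Hence $\excessQ(\hagg)\le \bestrisk+\gapmax+\maxmarginerror+\erroragg$. If instead $\hypoacciplusone=\emptyset$, the returned candidate is closest to the aggregate, and its excess is bounded by the same quantity plus $O(\erroragg)$. The $\erroragg$ term is absorbed into $\gapmax$ or into the constants. The remaining $\maxmarginerror$ contributions, from converting the estimated best candidate's risk and the rejection thresholds, total $3\maxmarginerror$.

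\textbf{Main obstacle.} The hardest step is the Step 2 induction. It requires showing, with the correct constants, that the band $C_2\log|\hypoacci|/\ntarget$ retains the near-best candidate but excludes all candidates beyond margin $\gapmax$. It also requires handling the case where the best candidate is absent from $\hypoacci$ because it was removed in Step 1. For that case I would first try the localized sandwich inequality together with a case split: either $\bestrisk$ is at least of order $d/\ntarget$, in which case the first term wins, or the best candidate survives.
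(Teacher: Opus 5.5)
\textbf{Verdict: there is a genuine gap in your Step~2 analysis.} Your treatment of Step~1, of the target-rate term (including the convexity argument for the aggregate), and of the conversion from empirical to population margins at cost $3\maxmarginerror$ matches the paper. Two steps of Step~2, however, fail as stated.

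First, for $i\ge 1$ the dictionary $\hypoacci$ is selected using $\datatargettwo$, the same fold on which $\hagg(\hypoacci)$ is computed. The oracle inequality is therefore not available for it. A union bound over the at most $|\hypoacczero|$ iterations does not help, because the problem is that the dictionary itself is random, not the number of iterations. Uniformity over every subset that could be selected means up to $2^{|\hypoacczero|}$ dictionaries, which costs order $|\hypoacczero|/\ntarget$ rather than $\log|\hypoacczero|/\ntarget$.

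Second, the invariant that the best candidate is never rejected is false in general. Let $h^\star$ be the best element of $\hypoacci$. The only available bound is $\|h^\star-\hagg\|_{\Sigma_Q}^2\le 4\excessQ(h^\star)+2\erroragg$, and for any fixed $C_2$ this is compatible with rejection once $\excessQ(h^\star)$ is a large multiple of $\erroragg$. Concretely, take two candidates placed symmetrically about $\hQ$ with equal excess risk $\delta\gg\erroragg$. STAR or exponential weights typically output an aggregate near $\hQ$, and both candidates are rejected. Your induction uses this invariant to keep the comparison point fixed. Without it, each later aggregate competes with a possibly worse minimum, and the constants compound in $i$.

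The two-sided ``sandwich'' is also unjustified. Model-selection aggregation only competes with the best dictionary element, not with the best point of the convex hull. In the example above, $\hagg=h_1$ is an admissible output and violates the sandwich. The direction you actually need, that candidates far from optimal are rejected, follows from the triangle inequality alone.

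The missing idea, which is how the paper proceeds, is that no aggregation guarantee beyond the first one is needed. Apply the oracle inequality once, to $\hagg(\hypoacczero)$. This is legitimate because $\hypoacczero$ depends only on $\dataP$ and $\datatargetone$. Then split on whether $\hypoaccone$ is empty.
\begin{itemize}
\item If $\hypoaccone$ is empty, the returned element is at least as close to the aggregate as the best element of $\hypoacczero$, whose excess risk is $\empbestemprisk$. The triangle inequality then gives excess risk $\lesssim\empbestemprisk+\erroragg$, without that best element ever being retained.
\item If $\hypoaccone$ is non-empty, the triangle inequality gives $\excessQ(h)\lesssim\empbestemprisk+\erroragg$ for every $h\in\hypoaccone$.
\end{itemize}
The retain sets are nested, and every later output is either an element of some $\hypoacci\subseteq\hypoaccone$ or a convex combination of such elements. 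Convexity of $\excessQ$ therefore carries this first-iteration bound to whatever the loop returns, with no further probabilistic events.

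The $\gapmax$ term then appears by definition. After the source-to-population conversion, every candidate that can contribute to the output has margin at most $c_0(\bestrisk+\erroragg+\maxmarginerror)$, hence at most $\gapmax$. You never need to prove which candidates survive.
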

We prove \Cref{thm:main-result} in \Cref{sec:actual-proof-of-theorem}.
The upper bound in \Cref{thm:main-result} is a minimum of two rates: $\targetrateterm$ (\textcolor{Bittersweet}{target rate}), the convergence rate of the target linear model, and $\bestriskterm + \marginerrorterm$ (\textcolor{ForestGreen}{best risk} + \textcolor{RoyalBlue}{margin error}), the target excess risk of the best candidate model plus a margin error term $\marginerrorterm$ that converges to zero  as $n_P, n_Q\to\infty$. $\marginerrorterm$ corresponds to the largest risk margin not rejectable by the procedure.  
The $\targetrateterm$  term guarantees \emph{no negative transfer}: the generalization error is never worse than the rate of the linear model which uses target data only and ignores source data.
The second term in the minimum, $\bestriskterm + \marginerrorterm$,  
represents the possibly much lower excess risk one could obtain 
by utilizing source data and partial causal structure as encoded in the  
candidate set collection $\set$. 
In particular, if $\marginerrorterm=0$, we achieve the "best of both worlds" -- the generalization guarantee is both never worse than that of the target model, and potentially as low as the lowest possible excess risk among candidate models from the collection $\set$ trained using source data only. In what follows, we refer to this as the "optimal" gain.

For the following discussion, we assume that the excess risk of the best candidate model is small, $\bestrisk \lesssim  \log | \hypoacczero | / \ntarget$. We ask: in which sample size regimes and margin constellations can we successfully leverage access to a collection $\cal I$ -- and its induced best target risk $\bestrisk$? In other words, in which settings does the bound get close to the "optimal" gain $\min \{ \targetrateterm, \bestriskterm \}$? 
We observe that if the 
source data is sufficiently  abundant such that $\maxmarginerror \lesssim  \log | \hypoacczero | / \ntarget $, we achieve excess risk of at most $\bestrisk$ up to the \emph{fast decreasing margin error}
\begin{equation}\label{eq:fast-rate}
    \gapmax(\nsource, \ntarget) + 3 \maxmarginerror \lesssim \log | \hypoacczero | / \ntarget. 
\end{equation}
Note that the resulting rate depends only on the size $| \hypoacczero |$ of the accepted collection, which can be much smaller than the initial number of candidates $| \hypoinit |$ that appears in the term $\maxmarginerror$. In particular, if most models are "outperformed by the target baseline", in the sense that $\excessQ(\hIP) \gtrsim \frac{d}{\ntarget}$, then even for exponential-sized collections $ | \hypoinit | = 2^d$, it may be that $ | \hypoacczero | = \mathcal{O}(1)$, resulting in the rate $\mathcal{O}(1/\ntarget)$.
\smallskip

We note that \Cref{thm:main-result} requires a minimum number of target samples $\ntarget \geq C(B_X, \lambda_{\min},d,\alpha)$, where $C(B_X, \lambda_{\min},d,\alpha) = \mathcal{O}\left(\frac{B_X^2}{\lambda_{\min}} \log( \frac{d}{\alpha} ) \right)$. In particular, \Cref{asm:conditioning-of-covariances} implies that $\ntarget \gtrsim d \log \frac{d}{\alpha}$ target samples are required for our generalization bound to hold. Although the minimum number of samples scales linearly in $d$, it does not depend on the excess risk which the bound potentially achieves: while the standard linear regression estimator requires $\ntarget \gtrsim \frac{d}{\epsilon}$ samples to achieve an excess risk of at most $\epsilon$, in case of favorable margin structure, an arbitrarily low excess risk $\bestrisk$ can be achieved via \Cref{thm:main-result} given an amount of target samples which does not depend on $\bestrisk$. In particular, in such settings, our bound can enable \emph{few-shot learning}: achievement of much lower excess risk than the target estimator given the same amount of samples.

\subsubsection{Corollaries}
For the rest of the discussion, we assume $\nsource \to \infty$ for simplicity. Under the conditions of small excess risk $\bestrisk$, the amount $\ntarget \gtrsim \max \{ \frac{ \log | \hypoacczero |}{\inf_{I \in \set} \gapIbest}, C(B_X, \lambda_{\min},d,\alpha) \} $ of target samples suffice to achieve the excess risk $\bestrisk$ of the best candidate model: In this case, we have $\gapmax(\nsource,\ntarget) = 0$ so that the convergence 
rate $\gapmax(\nsource, \ntarget) + 3 \maxmarginerror=0$. 
The following corollary directly follows from the theorem and illustrates how the margin determines the sample complexity in the two-model case. 
\begin{corollary}[Two models]\label{cor:two-models}
    Let $\hypoinit = \{\hbestP, \hsecondbestP\}$. 
    Recall the risk margin $\Delta_I = \epsilon_I - \bestrisk$. There exist  constants $0<c_0<1$, $C_0 > 0$ such that if  $\bestrisk \leq c_0 \Delta_I$, or $\bestrisk \leq C_0 \erroragg(\ntarget)$, and $\ntarget \gtrsim \max \{\frac{\log(1/\alpha)}{\Delta_I}, C(B_X, \lambda_{\min},d,\alpha) \}$, it holds with probability at least $1-\alpha$ for the output $\hadapt$ of \Cref{alg:iterative-step-agg} that
    \begin{align*}
        \excesstarget(\hadapt) \leq \min \left\{C\frac{d + \log(1/\alpha)}{n_Q}, \bestrisk \right\}. 
    \end{align*}
\end{corollary}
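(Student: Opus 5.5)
The plan is to read \Cref{cor:two-models} off \Cref{thm:main-result} in the regime $\nsource \to \infty$, so that $\maxmarginerror = 0$. The target-rate term of the theorem is unchanged, so the whole task is to show that the margin-error term vanishes, i.e. $\gapmax(\nsource,\ntarget) = 0$, under the stated conditions. With only two candidates, the supremum defining $\gapmax$ in \eqref{eq:deltamax} ranges over $\Delta_{\bestinset} = 0$ and the single margin $\Delta_I$. It therefore suffices to show that $\Delta_I$ violates the admissibility condition
\begin{equation*}
\Delta_I \le c_0\bigl(\bestrisk + \erroragg(\ntarget)\bigr).
\end{equation*}
Here $|\hypoacczero| \le 2$, so $\erroragg(\ntarget) \le \Cagg \frac{\log 2 + \log(1/\alpha)}{\ntarget}$.

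\textbf{Bounding the two terms.} I would bound $\bestrisk$ and $\erroragg(\ntarget)$ each by a small multiple of $\Delta_I$.
\begin{itemize}
\item For $\erroragg$: choosing the implicit constant in $\ntarget \gtrsim \log(1/\alpha)/\Delta_I$ large enough (absorbing $\log 2$, and assuming $\alpha$ bounded away from $1$, or else writing $\log(2/\alpha)$) gives $\erroragg(\ntarget) \le \Delta_I/(4c_0^{\mathrm{thm}})$. Here $c_0^{\mathrm{thm}}$ denotes the universal constant of the theorem.
\item For $\bestrisk$, in the first case $\bestrisk \le c_0\Delta_I$ with the corollary's $c_0$. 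Choosing this $c_0 \le 1/(4c_0^{\mathrm{thm}})$ makes the right-hand side of the admissibility condition at most $\Delta_I/2 < \Delta_I$ whenever $\Delta_I > 0$. Hence $\Delta_I$ is not admissible and $\gapmax = 0$.
\item In the second case $\bestrisk \le C_0\erroragg(\ntarget)$. Then the right-hand side is at most $c_0^{\mathrm{thm}}(1+C_0)\erroragg(\ntarget)$, and the sample-size condition again forces this below $\Delta_I$ once the constant in $\ntarget \gtrsim$ is taken proportional to $(1+C_0)c_0^{\mathrm{thm}}\Cagg$.
\end{itemize}
If $\Delta_I = 0$, the two candidates have equal risk and $\gapmax = 0$ trivially.

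\textbf{Conclusion.} Plugging $\gapmax = 0$ and $\maxmarginerror = 0$ into \Cref{thm:main-result} gives $\excesstarget(\hadapt) \le \min\{C(d+\log(1/\alpha))/\ntarget,\ \bestrisk\}$. The remaining hypotheses of the theorem ($\ntarget \ge C(B_X,\lambda_{\min},d,\alpha)$ and the choices of $C_1, C_2$) are assumed in the corollary.

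\textbf{Main obstacle.} The only delicate point is bookkeeping of constants: the corollary's $c_0$ and the implicit constant in $\ntarget \gtrsim$ must be chosen in terms of the theorem's universal $c_0^{\mathrm{thm}}$ and $\Cagg$. I would also verify that the limit $\nsource \to \infty$ is legitimate, i.e. that the bound holds for every finite $\nsource$ with $\maxmarginerror \to 0$ and the failure probability uniform in $\nsource$. Alternatively, one states the corollary for $\nsource$ large enough that $\maxmarginerror \le \Delta_I/(8c_0^{\mathrm{thm}})$ and accepts an additive $3\maxmarginerror$ term.
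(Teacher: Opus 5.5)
Your proposal is correct and follows the paper's own route. The paper likewise reads the corollary off \Cref{thm:main-result} with $\nsource \to \infty$ (so $\maxmarginerror = 0$), and notes that the margin condition together with $\ntarget \gtrsim \log(|\hypoacczero|/\alpha)/\Delta_I$ forces $\gapmax(\nsource,\ntarget) = 0$. Your extra bookkeeping makes explicit what the paper leaves implicit: the $\log 2$ term from $|\hypoacczero| \le 2$, and choosing the corollary's $c_0$ below $1/c_0^{\mathrm{thm}}$.
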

In other words, given just $\ntarget \gtrsim \frac{1}{\Delta_I}$ target samples, it is possible to pick either the best  candidate model or the target estimator. For instance, if the two-model collection consists of the source model $\hP$ and one identified invariant model $\hIP$, \Cref{cor:two-models} allows us to pick the "best of both worlds", as opposed to the structure-agnostic transfer bound $\excesstarget(\hadapt) \leq \min \left\{C\frac{d + \log(1/\alpha)}{n_Q}, \excessQ(\hP)  \right\}$ in \cite{hanneke2025adaptivesampleaggregationtransfer} involving just the source model. As we illustrate in \Cref{sec:benefits-of-causal-invariance}, this allows gains w.r.t. the source model in large-shift scenarios, as well as w.r.t. the invariant model in small-shift scenarios. 
\smallskip

We can easily extend the spirit of the two model case to multiple models. In the following corollary, we illustrate an  advantageous case for adaptation in which a group of "$\tau$-optimal" models is separated from a group of much worse models by a large margin $\taugap$:

\begin{corollary}[Adaptation with low- and high-risk models]\label{cor:stable-unstable-models}
    For $\tau \geq \bestrisk$, let $\taugap \coloneqq \inf_{\epsilon_I > \tau} \epsilon_I - \sup_{\epsilon_{I'} \leq \tau} \epsilon_{I'}   $. There exists $0 < c_0 < 1$, such that if $\tau \leq c_0 \taugap$ and $\ntarget \gtrsim \max \{ \frac{\log | \hypoacczero | + \log(1/\alpha)}{\taugap}, C(B_X, \lambda_{\min},d,\alpha) \}$, with probability at least $1-\alpha$ it holds that
        \begin{align*}
        \excesstarget(\hadapt) \leq \min \left\{C\frac{d + \log(1/\alpha)}{n_Q}, \tau \right\}. 
    \end{align*}
\end{corollary}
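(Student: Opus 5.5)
The corollary follows by specializing \Cref{thm:main-result}. Throughout we work in the regime $\nsource \to \infty$ assumed in this subsection, so that $\maxmarginerror = 0$. The bound of \Cref{thm:main-result} then reads
\[
\excesstarget(\hadapt) \le \min\Bigl\{C\tfrac{d+\log(1/\alpha)}{\ntarget},\ \bestrisk + \gapmax(\nsource,\ntarget)\Bigr\}.
\]
The goal is to show that, under the hypotheses, $\bestrisk + \gapmax(\nsource,\ntarget) \le \tau$. We cannot hope for $\gapmax = 0$ here, because models with $\bestrisk < \epsilon_I \le \tau$ are not separated from the best one. Instead we aim for $\gapmax \le \sup_{\epsilon_{I'}\le\tau}\epsilon_{I'} - \bestrisk$, which gives $\bestrisk + \gapmax \le \tau$.

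\textbf{Step 1: only low-risk margins survive.} By definition \eqref{eq:deltamax}, $\gapmax$ is the supremum of $\gapIbest$ over $I$ with $\gapIbest \le c_0(\bestrisk + \erroragg(\ntarget))$. Take any $I$ with $\epsilon_I > \tau$. By the definition of $\taugap$, and since $\bestrisk \le \tau$ means $\bestinset$ lies in the low group, we have $\gapIbest = \epsilon_I - \bestrisk \ge \taugap$.

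It therefore suffices to ensure $c_0(\bestrisk + \erroragg(\ntarget)) < \taugap$. We bound the two pieces separately.
\begin{itemize}
\item First, $\bestrisk \le \tau \le c_0'\taugap$ by the hypothesis of the corollary, where the hypothesis constant $c_0'$ is chosen small relative to the theorem's $c_0$.
\item Second, $\erroragg(\ntarget) = \Cagg\frac{\log|\hypoacczero| + \log(1/\alpha)}{\ntarget} \le \taugap/(4c_0)$. This holds once $\ntarget \ge 4c_0\Cagg\,(\log|\hypoacczero| + \log(1/\alpha))/\taugap$, which is the assumed sample-size condition with a suitable implicit constant.
\end{itemize}
Choosing the constants so that $c_0 c_0' < 1/2$ gives $c_0(\bestrisk + \erroragg) < \taugap$. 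Hence every high-risk $I$ is excluded from the supremum defining $\gapmax$.

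\textbf{Step 2: conclude.} The remaining indices in the supremum all satisfy $\epsilon_I \le \tau$, so $\gapIbest \le \tau - \bestrisk$. It follows that $\gapmax \le \tau - \bestrisk$, and so $\bestrisk + \gapmax \le \tau$. The requirement $\ntarget \ge C(B_X,\lambda_{\min},d,\alpha)$ and the conditions on $C_1$ and $C_2$ are inherited directly from \Cref{thm:main-result}. Plugging in yields the claimed $\min\{C\frac{d+\log(1/\alpha)}{\ntarget}, \tau\}$ with probability at least $1-\alpha$.

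\textbf{Main obstacle.} The only delicate point is bookkeeping. The quantity $|\hypoacczero|$ is random, since it is produced by Step 1 of the algorithm. The sample-size condition must therefore be read either on the same high-probability event as the theorem or with the deterministic bound $|\hypoacczero| \le |\hypoinit|$. In addition, the constant $c_0$ in the corollary has to be chosen compatibly with the theorem's $c_0$. I would handle the first issue by working on the theorem's event and using the deterministic upper bound on $|\hypoacczero|$ where needed.
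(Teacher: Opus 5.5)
Your proposal is correct and takes the route the paper intends. The paper gives no separate proof and treats the corollary as a direct specialization of \Cref{thm:main-result} with $\nsource\to\infty$, so that $\maxmarginerror=0$. You take the corollary's constant small relative to the theorem's (large) $c_0$, show that every candidate with $\epsilon_I>\tau$ then has margin at least $\taugap > c_0(\bestrisk+\erroragg(\ntarget))$, and conclude that only low-risk candidates enter $\gapmax$, giving $\bestrisk+\gapmax\le\tau$; this is exactly the bookkeeping that specialization requires.
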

In other words, even if there is a large group of candidate models with target risk at most $\tau \ll \frac{d}{\ntarget}$, $\ntarget \gtrsim \frac{\log |\hypoacczero|}{\taugap}$ target samples suffice to achieve risk $\tau$, and large margins $\taugap$ between groups of "good" and "bad" models enable fast adaptation even if the best model cannot be identified (cf. \Cref{fig:tau-risk}). We show how this situation can occur under shared causal structure of the data and strong structural shifts in \Cref{sec:supervised-adapt-toy-example}.
\begin{figure}[h!]
    \centering
    \includegraphics[width=0.8\linewidth]{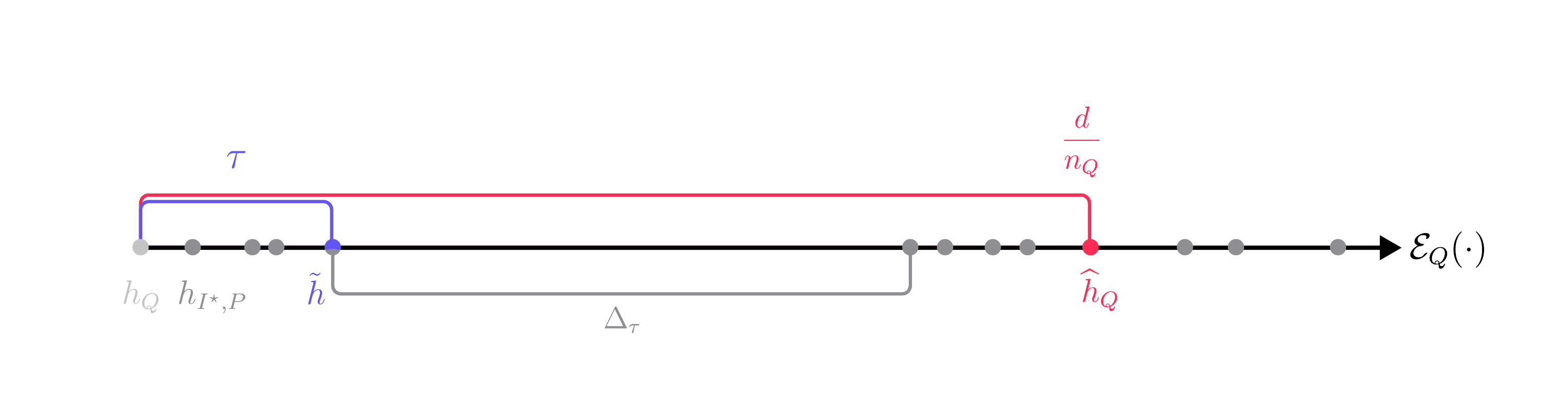}
    \caption{Guarantees for our procedure $\hadapt$ (shown in purple) in the case of \Cref{cor:stable-unstable-models}. A small excess risk $\tau$ can be achieved given $\ntarget \gtrsim \log | \hypoacczero| / \taugap$ samples, provided $\taugap$ is sufficiently large. In particular, $\hadapt$ does not have to select the best model $\hbestP$ to achieve the guarantee.  }
    \label{fig:tau-risk}
\end{figure}
\subsubsection{Comparison with naïve model selection and aggregation}
We now discuss how some natural approaches, which utilize a collection of models for adaptation, generally result in slower rates than \Cref{thm:main-result}. 
As we have suggested earlier, one straightforward way of utilizing the candidate model collection $\hypoinit$ is to simply perform ERM over these models using target data. Even if we include a target model (trained on a separate target data split) as a candidate in our collection, we merely obtain 
\begin{equation*}
    \excessQ(\herm(\hypoinit)) \leq \min \left\{\Creg \frac{d + \log(1/\alpha)}{n_Q}, \bestrisk \right\}  + C \sqrt{\frac{\log |\set|}{\ntarget}}.
\end{equation*}
The \emph{slow rate} $\sqrt{\frac{\log |\set|}{\ntarget}}$ of ERM over models does not prevent negative transfer, depends on the – potentially very large - initial number $|\set|$ of candidate models, and is worse than our guarantee in most adaptation regimes (cf. \Cref{fig:excess-risk-plot}). A more advanced baseline---Step 1 of \Cref{alg:iterative-step-agg} followed by ERM over the accepted models---results in margin error term of order $\sqrt{\frac{\log |\hypoacczero|}{\ntarget}}$, a slow rate compared to the margin error  incurred in \Cref{thm:main-result}.   
\begin{figure}[h!]
    \centering
    \includegraphics[width=0.8\linewidth]{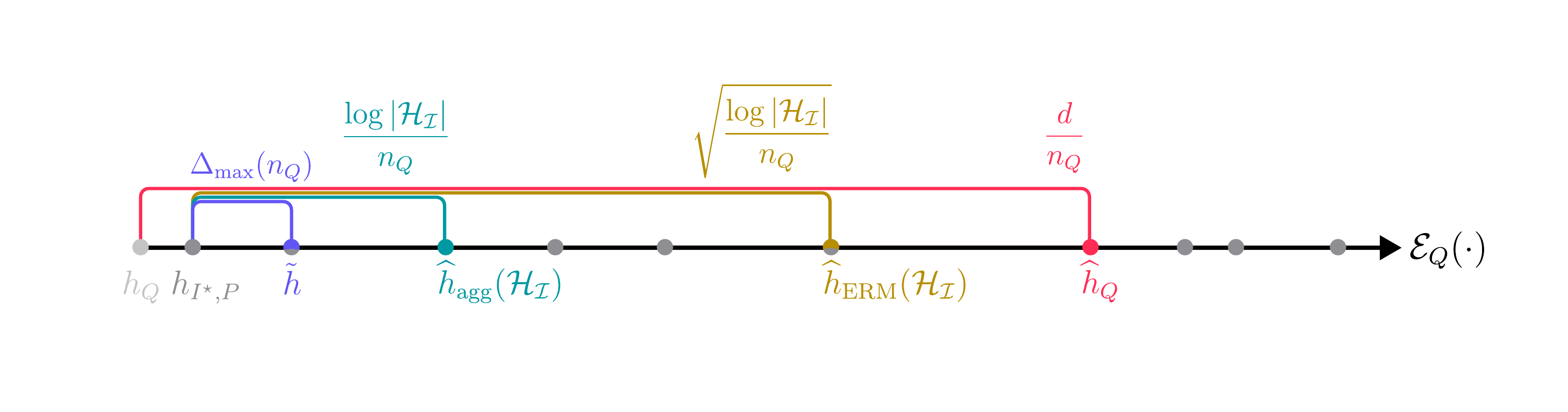}
    \caption{Guarantees for our procedure $\hadapt$ (shown in purple), model aggregation $\hagg(\hypoinit)$ (teal) and ERM over models $\herm(\hypoinit)$ (yellow) are shown with respect to the excess risk of the best candidate model $\hbestP$. Excess risk guarantee for the target linear model $\hempQ$ is shown in red. Grey dots denote candidate models $\hIP \in \hypoinit$. The figure illustrates the scenario $\nsource \to \infty$, $\bestrisk \lesssim \log |\hypoacczero|/\ntarget$.  }
    \label{fig:excess-risk-plot}
\end{figure}

Another line of well-developed literature \cite{tsybakov2003optimal,lecue2014optimal,dai2012deviation,rigollet2012kullback} would suggest directly leveraging the candidate collection
by more efficient model aggregation (cf. \Cref{apx:model-aggregation})
that can achieve an excess risk bound
\begin{equation*}
    \excessQ(h_{\mathrm{agg}}) \leq \excessQ(\hbestP) + \Cagg \frac{\log |\hypoinit|}{\ntarget}.
\end{equation*}
Again, the initial number of models $|\hypoinit|$ can be much larger than $| \hypoacczero |$ which enters our rate via $\erroragg(\ntarget)$.  Further, our margin error $\gapmax(\nsource,\ntarget)$ is upper bounded by the aggregation rate in case of small $\bestrisk$. Additionally, it can in fact be zero, and thus much smaller than the aggregation rate (cf. \Cref{fig:excess-risk-plot}) in case the risk margins $\Delta_I$ to the other candidates  are sufficiently large (cf. \Cref{cor:two-models} and \Cref{cor:stable-unstable-models}).

\subsection{Lower bounds}

In the following lower bounds, we show necessity of our margin-driven sample complexity $\ntarget \gtrsim 1/ \Delta$ in the two model case $| \set | = 2$. For brevity, we assume $\nsource = \infty$ and note that the lower bounds trivially hold for $\nsource < \infty$. For a fixed best excess target risk $\epsilon_1 \geq 0$ and margin $\Delta \geq 0$, consider the family $\Sigma(\epsilon_1,\Delta, \set)$ of pairs of distributions $(P,Q)$ on $\R^d \times \R$ for which there exists a collection $\set = \{I_1,I_2\}$ s.t. 
\begin{align*}
 \min \{ \excessQ(h_{I_1,P}),\excessQ(\h_{I_2,P})\} &\leq \epsilon_1; \\
 \max \{ \excessQ(h_{I_1,P}),\excessQ(\h_{I_2,P})\} &\leq \epsilon_1 + \Delta.
\end{align*}
In other words, we consider all pairs of distributions $(P,Q)$ for which there exists a feature-set collection $\set = \{I_1,I_2\}$ such that the best candidate model trained on $P$ has target excess risk at most $\epsilon_1$, and the second-best candidate model has target excess risk at most $\epsilon_1 + \Delta$. 

First, we outline a lower bound for all selector algorithms, i.e., all algorithms which can output either $h_{I_1,P}$ or $h_{I_2,P}$. We implicitly assume $ \max \{ \epsilon_1, \epsilon_1 + \Delta \} \lesssim \frac{d}{\ntarget}$, as we are interested in the optimality of selection of the candidate models, as opposed to interpolation between the candidates and the target model. For latter, we refer to the lower bound in \cite{hanneke2025adaptivesampleaggregationtransfer}.
\begin{theorem}[Selection lower bound]\label{thm:lower-bound1} 
Let $d \geq 3$. 
 Consider the family $\Sigma(\epsilon_1,\Delta)$ of pairs of distributions $(P,Q)$ on $\R^d \times \R$. 
Given datasets $\dataP$, $\dataQ$, 
consider any selector algorithm $\mathcal{A}: \dataP \times \dataQ \times \{I_1,I_2\} \to \{h_{I_1,P}, h_{I_2,P} \}$.
There exist constants $c_0, c_1 > 0$ such that if  $\ntarget \leq c_0 /\Delta$, 
\begin{align*}
    \inf_{\mathcal{A}} \sup_{\Sigma} Q(\excessQ(\mathcal{A}) \geq \epsilon_1 + \Delta) \geq c_1. 
\end{align*}
\end{theorem}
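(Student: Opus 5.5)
We will prove \Cref{thm:lower-bound1} by a two-point (Le Cam) argument in which the two hypotheses differ only in which candidate is the better one. Fix $I_1=\{1\}$, $I_2=\{2\}$ (this uses two coordinates, and $d\ge 3$ leaves a spare coordinate to absorb the residual signal). We will build one source $P$, shared by both hypotheses and with $\nsource=\infty$, so that $h_{I_1,P}$ and $h_{I_2,P}$ are fixed, known linear functions. We then build two targets $Q_+$ and $Q_-$ with the following properties. Under $Q_+$ we want $\excessQ(h_{I_1,P})\le\epsilon_1$ and $\excessQ(h_{I_2,P})\ge\epsilon_1+\Delta$, and under $Q_-$ the reverse. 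Both pairs $(P,Q_\pm)$ then lie in $\Sigma(\epsilon_1,\Delta)$. Since the source data carry no information about the sign, any selector's output is a function of $\dataQ$ alone (plus the common $\dataP$). A selector avoids excess risk at least $\epsilon_1+\Delta$ under both targets only if it picks $I_1$ under $Q_+$ and $I_2$ under $Q_-$, that is, only if it tests $Q_+$ against $Q_-$. Le Cam's inequality therefore gives
\[
\max_\pm Q_\pm^{\ntarget}\bigl(\excessQ(\mathcal A)\ge\epsilon_1+\Delta\bigr)\ \ge\ \tfrac12\bigl(1-\mathrm{TV}(Q_+^{\ntarget},Q_-^{\ntarget})\bigr),
\]
and it remains to show $\mathrm{TV}\le 1-2c_1$ whenever $\ntarget\le c_0/\Delta$.

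\textbf{Construction.} Let $X$ have i.i.d. bounded symmetric coordinates (e.g.\ Rademacher, or uniform on $[-1,1]$) with identity covariance under both $P$ and $Q_\pm$; this keeps \Cref{asm:conditioning-of-covariances} trivially true and $\lambda_{\max}=1$. Under $P$, set $Y=a X_1+aX_2+bX_3+\xi$. Under $Q_\pm$, set
\[
Y=aX_1+aX_2+bX_3 \pm \gamma (X_1 - X_2)+\xi,
\]
where $\xi$ is bounded noise, for instance $\xi=\sigma\cdot$Rademacher, chosen so that $|Y|\le\sigma_Y$. With orthonormal design, $h_{I_1,P}=aX_1$ and $h_{I_2,P}=aX_2$. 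Under $Q_+$ the best linear predictor is $(a+\gamma)X_1+(a-\gamma)X_2+bX_3$, so
\[
\excessQ(aX_1)=\gamma^2+(a-\gamma)^2+b^2,\qquad \excessQ(aX_2)=(a+\gamma)^2+\gamma^2+b^2 .
\]
The difference of these two excess risks is $4a\gamma$, and $Q_-$ is the mirror image. We choose $a,b,\gamma$ so that $4a\gamma=\Delta$ and the smaller excess risk equals $\epsilon_1$. For example, take $a$ of constant order and $\gamma=\Delta/(4a)$, then solve for $b^2\ge0$ from the smaller excess-risk equation. If $\epsilon_1$ is too small for this (we need roughly $a^2\lesssim\epsilon_1$), we instead shrink $a$, which forces $\gamma$ up and costs a constant factor. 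This bookkeeping of constants is routine.

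\textbf{Main obstacle: bounding the KL divergence.} The hardest step is to bound $\mathrm{KL}(Q_+\|Q_-)\lesssim \Delta$ per sample, because $\Delta$ enters linearly and not quadratically. The means differ by $2\gamma(X_1-X_2)$, which gives a KL of order $\gamma^2/\sigma^2$. Under the choice above this is of order $\Delta^2$, which is better than needed. The difficulty is realizing a given $\epsilon_1$ while keeping $\Delta\lesssim$ the scale of $\gamma^2$, and in the regime $\epsilon_1\approx0$ this forces $a\approx\gamma$, so $\Delta\approx4\gamma^2$ and $\mathrm{KL}\asymp\gamma^2\asymp\Delta$. In either case the per-sample KL is at most $C\Delta$. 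With bounded noise we will instead use the $\chi^2$ or Hellinger distance between the two discrete, noise-perturbed laws; the same computation goes through, since both laws have full common support once $\sigma>2\gamma\max|X_1-X_2|$. Tensorizing then gives $\mathrm{KL}(Q_+^{\ntarget}\|Q_-^{\ntarget})\le C\ntarget\Delta\le Cc_0$. By Pinsker's inequality $\mathrm{TV}\le\sqrt{Cc_0/2}$, and choosing $c_0$ small yields the constant $c_1$. We must also check that the sign flip cannot be detected from any other feature of $Q_\pm$. Here $Q_\pm$ have identical $X$-marginals, so the divergence comes only from the conditional law of $Y$, and the bound above is complete.
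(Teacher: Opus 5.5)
There is a genuine gap. Your Le Cam reduction is fine, and for $\epsilon_1\gtrsim\Delta$ with Gaussian noise your construction works. It cannot, however, reach the regime that matters.

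Membership in $\Sigma(\epsilon_1,\Delta)$, together with the event $\mathcal E_Q\ge\epsilon_1+\Delta$, forces the worse candidate to have excess risk exactly $\epsilon_1+\Delta$ and the better one at most $\epsilon_1$, for every $\epsilon_1\ge 0$. The case $\epsilon_1\lesssim\Delta$, in particular $\epsilon_1=0$, is exactly where the theorem is used to certify tightness. In your parametrization the smaller excess risk is $s=\gamma^2+(a-\gamma)^2+b^2$ and the gap is $4a\gamma=\epsilon_1+\Delta-s\ge\Delta$. Since
\[
\gamma^2+(a-\gamma)^2-2(\sqrt2-1)\,a\gamma=(\sqrt2\,\gamma-a)^2\ge 0 ,
\]
you get $s\ge\tfrac{\sqrt2-1}{2}\cdot 4a\gamma\ge\tfrac{\sqrt2-1}{2}\,\Delta\approx 0.2\,\Delta$ for every choice of $a,b,\gamma$. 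Shrinking $a$ only inflates $\gamma$. Your claim that $\epsilon_1\approx 0$ is reached with $a\approx\gamma$ is false: at $a=\gamma$ the smaller excess risk is $\gamma^2+b^2\ge\Delta/4$. At $\epsilon_1=0$ the construction is degenerate, since $aX_1$ equals $(a+\gamma)X_1+(a-\gamma)X_2+bX_3$ only if $a=\gamma=b=0$. The root cause is that each target is defined as the source regression plus a perturbation. The common part $a(X_1+X_2)$ of both target regression functions is therefore far from both candidates.

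The fix, essentially the paper's construction, is to decouple the targets from the source. Make each target regression function equal to one candidate plus a component orthogonal to both. Keep your source, so $h_{I_1,P}=aX_1$ and $h_{I_2,P}=aX_2$, with $2a^2=\Delta$. Then set $Q_+:\ Y=aX_1+\sqrt{\epsilon_1}\,X_3+\xi$ and $Q_-:\ Y=aX_2+\sqrt{\epsilon_1}\,X_3+\xi$, with $\xi\sim\mathcal N(0,1)$. The excess risks are exactly $\epsilon_1$ and $\epsilon_1+\Delta$ for every $\epsilon_1\ge 0$. The per-sample KL is $\tfrac12\mathbb E[a^2(X_1-X_2)^2]=\Delta/2$. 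The linear dependence on $\Delta$ comes for free, because the margin equals $\|m_+-m_-\|_{L^2}^2$, so there is no obstacle to balance. The paper uses $(X_1,X_2)\in\{e_1,e_2\}$, a constant source label $Y=\sqrt\Delta$, and target $Y=\sqrt\Delta\,\mathbf 1\{X_\omega=1\}+\sqrt{\epsilon_1}X_3+\xi$. It then obtains $c_0=1/16$ and $c_1=7/16$ via Pinsker, exactly as in your reduction.

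Separately, drop the bounded-noise detour, because it is wrong as stated. With $\xi=\sigma\cdot$Rademacher, the conditional laws of $Y$ given $X$ under $Q_\pm$ sit on disjoint two-point sets whenever $X_1\neq X_2$. They are then mutually singular, and one such sample reveals the sign. With uniform noise the KL is infinite, and TV/Hellinger scale linearly in the mean shift rather than quadratically. That loses the $1/\Delta$ rate whenever the shift is of order $\sqrt\Delta$, as it must be when $\epsilon_1\lesssim\Delta$. The family $\Sigma(\epsilon_1,\Delta)$ imposes no boundedness, so Gaussian noise is legitimate. If you want bounded labels, use a binary $Y$ with shifted conditional probabilities, as in the paper's proof of the information-theoretic lower bound.
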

We prove \Cref{thm:lower-bound1} in \Cref{apx:proof-of-lower-bound}. \Cref{thm:lower-bound1} implies tightness of our upper bound in the target sample size $\ntarget$ for selector algorithms, as well as necessity of the margin-driven target sample complexity $\ntarget \gtrsim 1/\Delta$. In words, any selector algorithm, if given $\ntarget \lesssim 1/\Delta$ samples, necessarily incurs the target risk $\epsilon_1 + \Delta$ of the worst candidate model. In particular, our lower bound outlines a regime in which there is indeed \emph{no benefit} of causal knowledge in the  form of the collection $\set$ compared to unstructured sDA which picks a single model from the collection (source model): the resulting guarantees are trivially upper bounded by the risk of the worst model in the collection.  

Next, we state our information-theoretic lower bound, which holds for any algorithm not limited to selectors.
\begin{theorem}[Information-theoretic lower bound]\label{thm:lower-bound2}
    Consider the family $\Sigma(\epsilon_1,\Delta)$ and any learning algorithm $\mathcal{A}$ with domain $\dataP \times \dataQ \times \{I_1,I_2\}$. There exist constants $c_1, c_0 > 0$ such that if $\ntarget \leq c_0/\Delta$, 
\begin{align*}
    \inf_{\mathcal{A}} \sup_{\Sigma} Q(\excessQ(\mathcal{A}) \geq \frac{1}{8} \Delta) \geq c_1.
\end{align*}
\end{theorem}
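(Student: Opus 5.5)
We reduce to a two-point testing problem (Le Cam's method) and build two source/target pairs that an algorithm cannot tell apart with $\ntarget \le c_0/\Delta$ target samples. Since $\nsource=\infty$ is granted, the source can be taken identical in both instances, so $\dataP$ carries no information. We fix $P$ and two disjoint feature sets $I_1=\{1\}$, $I_2=\{2\}$, which gives two fixed source predictors $h_1=h_{I_1,P}$ and $h_2=h_{I_2,P}$.

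We then build two targets $Q_+$ and $Q_-$ with the same covariate marginal, say $X$ uniform on a bounded set with identity covariance, and regression functions $f_\pm=\bar h\pm\frac12 u$. Here $\bar h=(h_1+h_2)/2$ and $u=h_1-h_2$, with $\|u\|_{\Sigma_Q}^2\asymp\Delta$. We design the geometry so that under $Q_+$ the candidate $h_1$ has excess risk $\epsilon_1$ and $h_2$ has $\epsilon_1+\Delta$, and the roles swap under $Q_-$. One concrete choice is to let $h^*_{Q_\pm}$ differ from $h_{1}$ or $h_2$ by an orthogonal component (using the third coordinate, hence $d\ge3$) of squared norm $\epsilon_1$. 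That component is the same in both instances, and the $\pm$ part moves along $u$ with size $\sqrt{\Delta}$. Both pairs then lie in $\Sigma(\epsilon_1,\Delta)$.

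Next we make the label noise bounded and non-degenerate. For example, let $Y$ take two values with conditional mean $f_\pm(X)$ and constant variance bounded below, with $Y$ bounded as in \Cref{asm:tail-of-X-and-Y}. Then $\mathrm{KL}(Q_+^{\ntarget}\|Q_-^{\ntarget})\lesssim \ntarget\,\E\bigl[(f_+-f_-)^2\bigr]/\sigma^2\asymp \ntarget\Delta$. Choosing $\ntarget\le c_0/\Delta$ with $c_0$ small keeps this KL below a constant, and Pinsker (or Bretagnolle–Huber) then bounds the total variation away from 1.

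The key inequality is a separation condition. For any predictor $h$, $\excess_{Q_+}(h)+\excess_{Q_-}(h)=\|h-h^*_{+}\|^2+\|h-h^*_-\|^2\ge\frac12\|h^*_+-h^*_-\|^2\ge\Delta/4$, after normalizing constants so that $\|h^*_+-h^*_-\|^2=\Delta/2$ or so. Hence no single output can have excess risk below $\Delta/8$ under both targets. Le Cam's lemma then gives $\max_\pm Q_\pm(\excess_{Q_\pm}(\mathcal A)\ge\Delta/8)\ge\frac12(1-\mathrm{TV})\ge c_1$. The main obstacle is calibrating this construction so that all three requirements hold at once. The first is membership in $\Sigma$, with the swapped excess risks exactly $\epsilon_1$ and $\epsilon_1+\Delta$, including that the optimal linear predictor is in $\hypolinB$. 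The second is that the distance between $h^*_+$ and $h^*_-$ is of order $\sqrt\Delta$ with the constant $1/8$. The third is that the bounded-$Y$ requirement still allows the KL bound. We would first try Gaussian-like computations with a clipped two-point label distribution, and adjust constants at the end.
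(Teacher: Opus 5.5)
Your proposal is correct and follows essentially the same route as the paper. Both arguments are a Le Cam two-point reduction with a common source distribution and two targets sharing $Q_X$, whose regression functions coincide with $h_{I_1,P}$ and $h_{I_2,P}$ respectively and are $\sqrt{\Delta}$ apart in $L^2(Q_X)$; two-valued bounded labels then give $\mathrm{KL}\lesssim \ntarget\Delta$, and Pinsker finishes the bound. Two differences are worth noting. The paper takes $\epsilon_1=0$ with $X$ supported on $\{e_1,e_2\}$, which suffices because $\Sigma(0,\Delta)\subseteq\Sigma(\epsilon_1,\Delta)$, so your third-coordinate calibration for $\epsilon_1>0$ is unnecessary. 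The paper also uses an explicit threshold test at $a/2$ where you use the inequality $\|h-h^*_+\|^2+\|h-h^*_-\|^2\ge\frac12\|h^*_+-h^*_-\|^2$; both work, and yours even yields the threshold $\Delta/4$.
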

We prove \Cref{thm:lower-bound2} in \Cref{apx:proof-of-lower-bound2}. \Cref{thm:lower-bound2} shows that no learning algorithm, even improper, can utilize the candidate models for adaptation beyond the risk margin error $\Delta$ if $\ntarget \lesssim 1/\Delta$. We conclude that our upper bound is tight in the regime $\epsilon_1 \lesssim \Delta$ and $|\hypoacczero| = \mathcal{O}(1)$. 

In total, our lower bounds \Cref{thm:lower-bound1} and \Cref{thm:lower-bound2} show optimality of the $1/\Delta$ margin dependence of our generalization upper bound. Further, they illustrate an important scenario in which we indeed \emph{cannot benefit} from partial knowledge of causal invariances: whenever the margins $\Delta$ are not sufficiently large with respect to the target sample size $\ntarget$, or, equivalently, not enough target samples are observed to distinguish predictive and non-predictive candidates, an excess risk price of order $\Delta$ has to be paid. 
Still, our results leave open the question whether the $\log |\hypoacczero|$ dependence on the number of accepted candidates is necessary or can be further reduced, for instance, in large-margin scenarios such as \Cref{cor:stable-unstable-models}. We leave this as an open question for future work.  

\section{Benefits of causal invariance for sDA}\label{sec:benefits-of-causal-invariance}


In this section, we  instantiate and interpret the bounds in \Cref{sec:main-results} in concrete causal examples, in particular, in the case of full and partial causal knowledge. We specifically focus on risk margins and their relation to structural shift magnitude, as discussed in causal \emph{finite robustness} literature \cite{rothenhausler2021anchor,shen2023causalityoriented}.
\subsection{Toy causal example} \label{sec:toy-example}\label{sec:supervised-adapt-toy-example}
Recall the skin cancer example illustrated in \Cref{fig:dag-skin-cancer}.
We can formalize it as a regression problem with features $X = (X_1,X_2,X_3,X_4) \in \R^4$ and label $Y \in \R$. Assume that the source and target distributions $P,Q$ are generated by the following linear structural causal models (SCMs) indexed by $D \in \{P,Q\}$:
\begin{align}
\label{eq:runningexample}
  U_1,&U_2,U_3,U_4,U_Y \overset{i.i.d.}{\sim} \Unif([-\sqrt{3},\sqrt{3}])   \nonumber\\
  X_1 &= U_1 + \mu_{D,1}, \quad
  X_2 = U_2 + \mu_{D,2}, \\
  Y   &= X_1 + X_2 + U_Y,\quad
  X_3 = Y + U_3 + \mu_{D,3}, \nonumber\\
  X_4 &= Y + U_4 + \mu_{D,4}.\nonumber
\end{align}
Evident from the structural equations above, the feature $X_i$ \emph{shifts} between domains $D,D'$ if $\mu_{D,i} \neq \mu_{D',i}$. Suppose $\mu_{Q,4} = \mu_{P,4}$, while $\mu_{Q,i} - \mu_{P,i} = t > 0$ for $i\in \{1,2,3\}$, i.e., the causal mechanism of $X_4$ is shared across source and target domains, whereas $X_1, X_2, X_3$ each undergo a structural shift of magnitude $t$. In this sense, the shift set is $\shiftset{P}{Q} = \{1,2,3\}$, 
and the corresponding selection diagram is shown in \Cref{fig:main-causal-figure} (a). Depending on the shift magnitude $t$, different feature-subset models 
achieve best risk on $Q$: for instance, if $t = 0$, we have trivially $P=Q$ and $\hP$ corresponds to the best predictor.
More concretely, the ``causal'' predictor $h_{\{1,2\},P}$ on the parents $\Pa(Y) = \{1,2\}$ of the label $Y$ is $(P,Q)$-invariant,
and thus – for the linear SCMs in \Cref{eq:runningexample} – it has \emph{invariant risk} $\riskQ(h_{\{1,2\},P}) = \riskP(h_{\{1,2\},P}) = 1$ regardless of the target shift strength $t$, and even regardless of the shift set $\shiftset{P}{Q}$. However, since $\shiftset{P}{Q} = \{1,2,3\}$, there exists another $(P,Q)$-invariant feature set $\{1,2,4\}$, which allows us to achieve an even better target risk $\riskQ(h_{\{1,2,4\},P}) = 1/2$ regardless of the shift strength $t$. In contrast, the full-feature model $\hP$ attains potentially unbounded target risk $\riskQ(\hP) = 1/3 + t^2/9$.
At the same time, for small shifts $t  \leq \sqrt{3/2}$ 
, the full-feature model outperforms both invariant models (cf. \Cref{fig:simple-example-plot}).


\paragraph{Comparison with prior work.}
As discussed in \Cref{sec:multiple-environments}, prior work in causality-based DG often relies on invariance tests on multi-source data.
Suppose in this example, we have data from two source domains, $P^1$ and $P^2$ that are generated by the template \Cref{eq:runningexample}, with $\mu_{P^1,1} - \mu_{P^2,1} > 0$, and $\mu_{P^1,i} = \mu_{P^2,i}$ for $i\in \{2,3,4\}$. This means that the cross-source shift set is $\shiftset{P^1}{P^2} = \{1\}$ and the source-target shift set is $\shiftset{P}{Q} = \{1,2,3\}$. 
By testing invariance across the sources, we will obtain the source-invariance collection $\invariantset = \{I \subseteq [d]: \{1 \} \subseteq I\}$, and consequently $I_{\mathrm{ICP}} = \{1\}$ that corresponds to the model $h_{\{1\},P}$ with unbounded target risk (cf. \Cref{fig:simple-example-plot}), i.e., the subset of parents of $Y$ discovered by ICP is not $(P,Q)$-invariant. Additionally, other multi-source domain generalization methods such as IRM \cite{arjovsky2019invariant}, VREx \cite{krueger2021out} and groupDRO \cite{sagawa2019distributionally} all coincide with the source model $\hP$ which also has unbounded target risk (cf. \Cref{fig:simple-example-plot}). In contrast, in our example, the collection $\invariantset$ satisfies \Cref{eq:wantedproperty} regardless of the magnitude of the shift $t > 0$, since it contains the most $Q$-predictive $(P,Q)$-invariant model $h_{\{1,2,4\},P }$.  

\begin{figure}[t]
    \centering
    \begin{subfigure}[t]{0.32\linewidth}
        \centering
        \includegraphics[width=\linewidth]{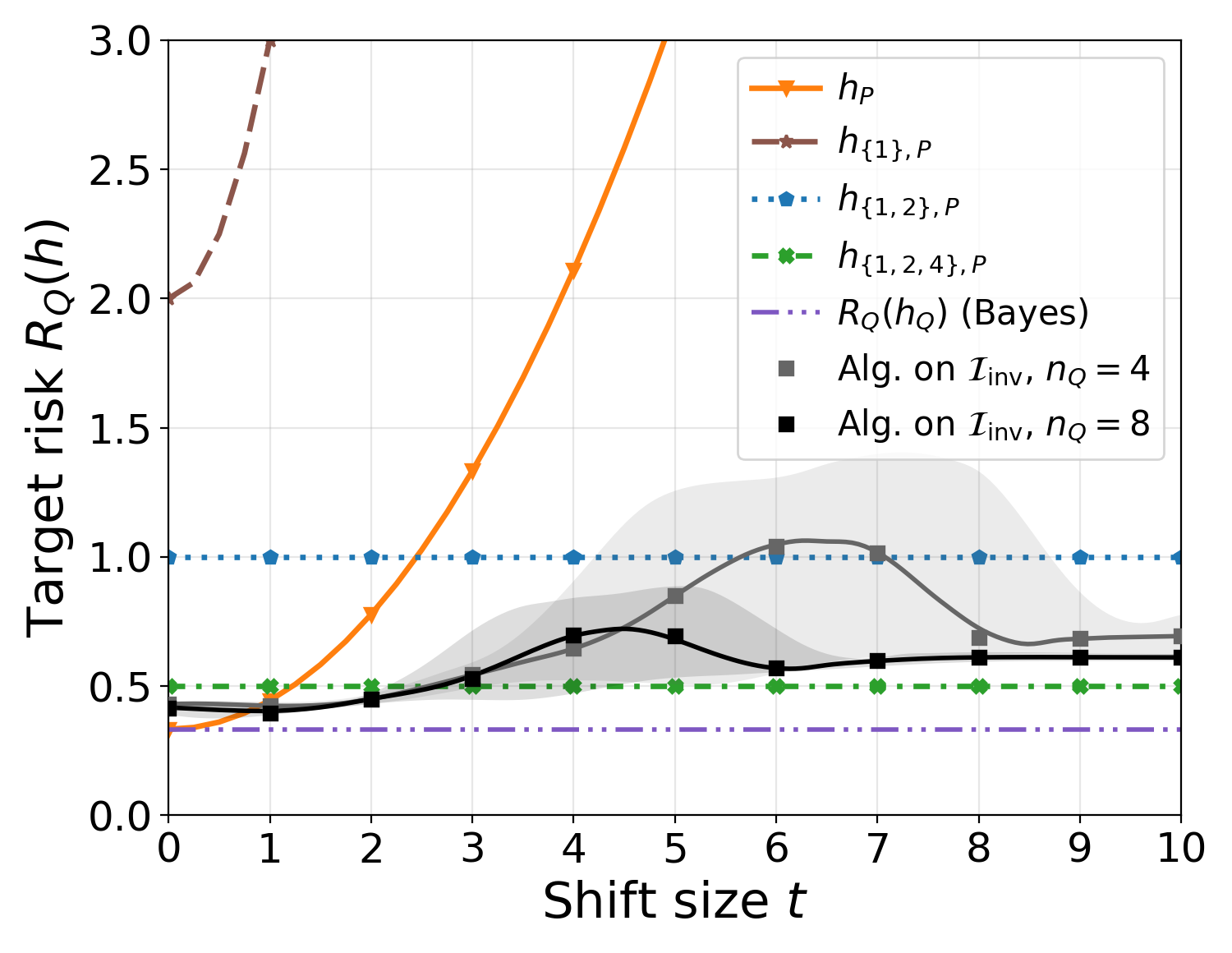}
        \caption{Source subset models and $\hadapt$ for $\invariantset$.}
        \label{fig:simple-example-plot-a}
    \end{subfigure}\hfill
    \begin{subfigure}[t]{0.32\linewidth}
        \centering
        \includegraphics[width=\linewidth]{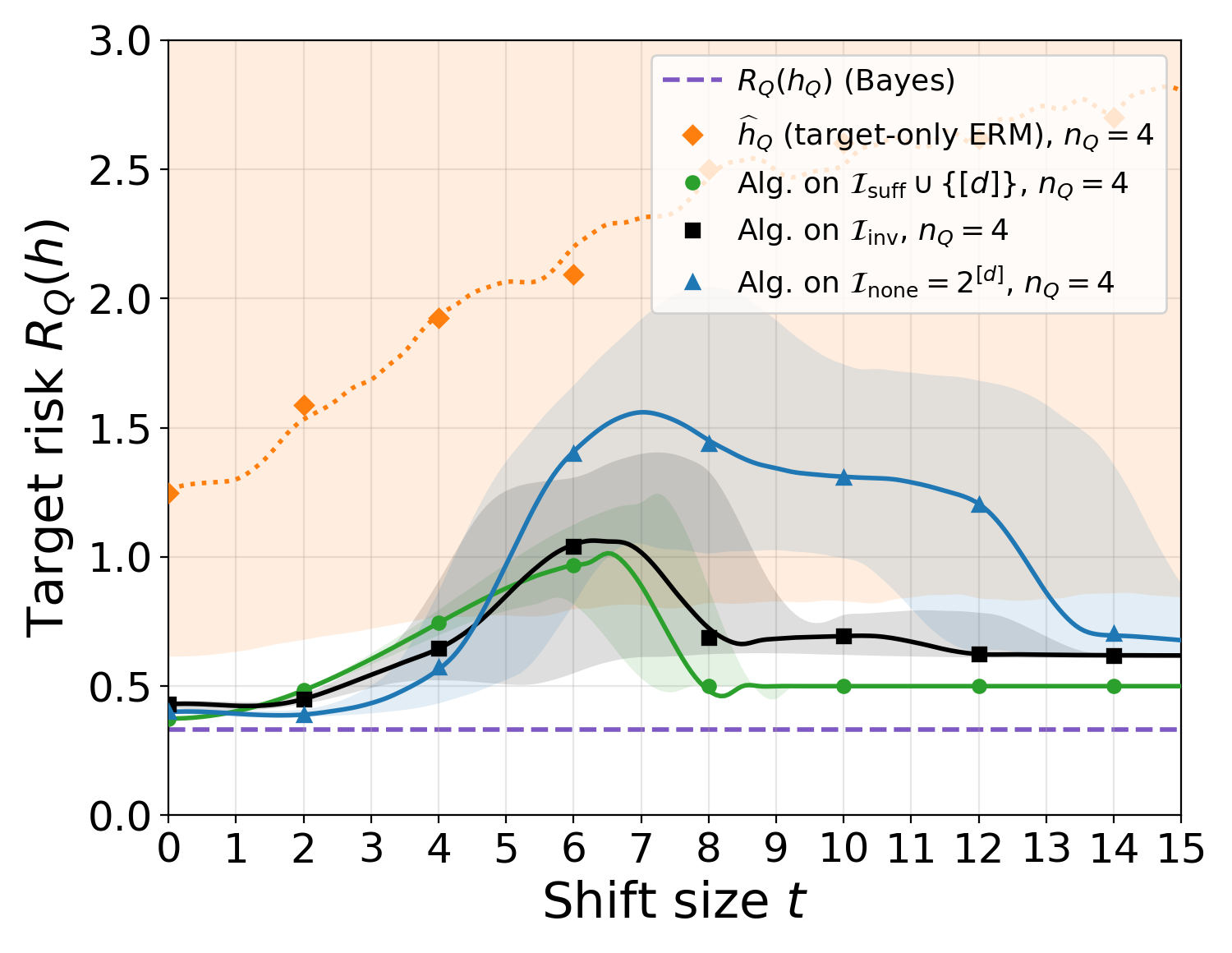}
        \caption{$\hadapt$ compared with $\hempQ$ for $\ntarget=4$.}
        \label{fig:simple-example-plot-b}
    \end{subfigure}\hfill
    \begin{subfigure}[t]{0.32\linewidth}
        \centering
        \includegraphics[width=\linewidth]{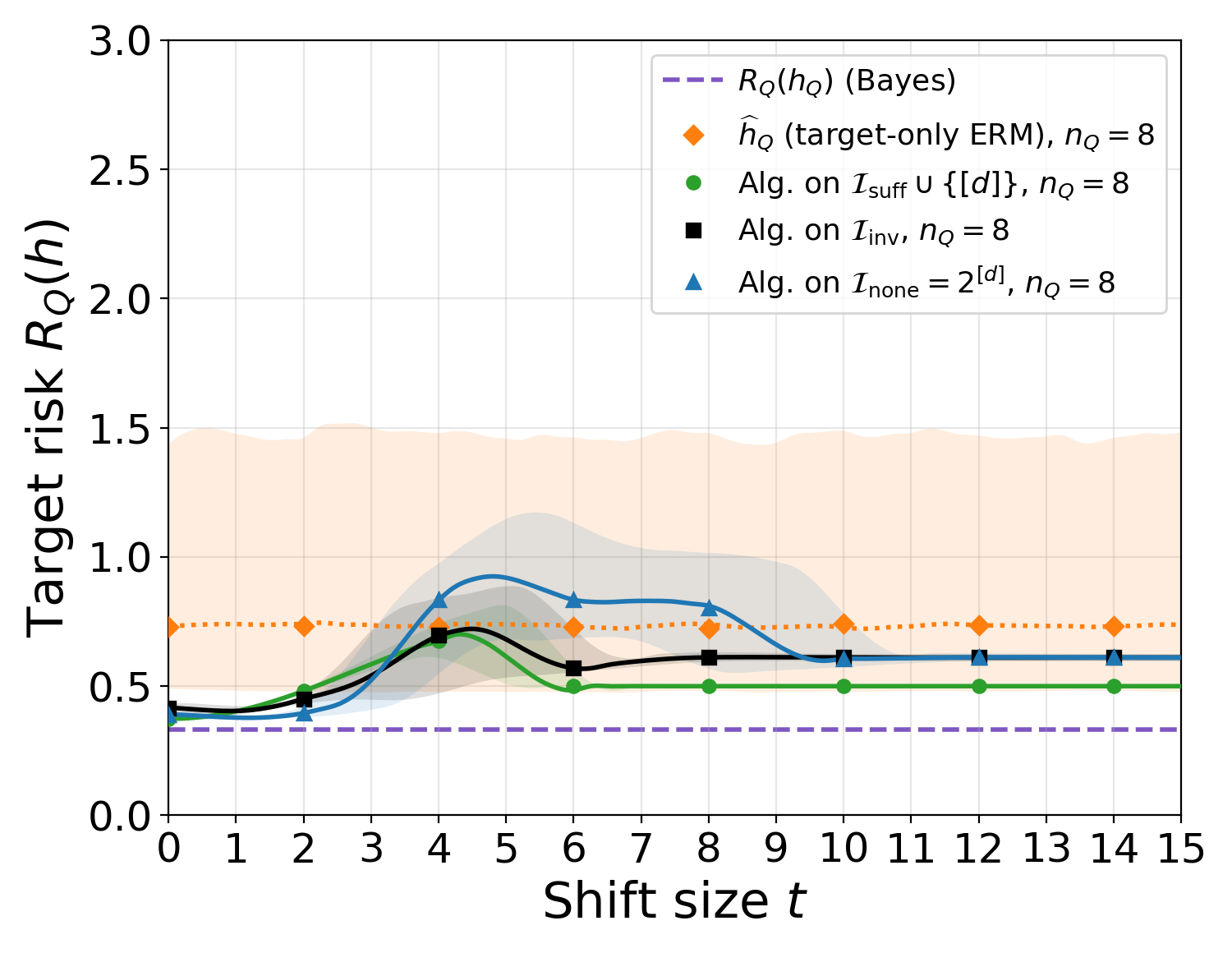}
        \caption{$\hadapt$ compared with $\hempQ$ for $\ntarget=8$.}
        \label{fig:simple-example-plot-c}
    \end{subfigure}
    \caption{Target risk for the toy example in \Cref{eq:runningexample} as a function of the shift size $t$. \textbf{(a)} Population target risks of the candidate feature-subset models trained on $P$ ($\nsource=1000$): $\hP$ (orange), $h_{\{1\},P}$ (brown), $h_{\{1,2\},P}$ (blue), $h_{\{1,2,4\},P}$ (green); dashed purple line: Bayes risk $\riskQ(h_Q) = 1/3$. Overlaid in black are two instances of \Cref{alg:iterative-step-agg} on $\invariantset$ at $\ntarget \in \{4, 8\}$ (light shade for $\ntarget=4$, dark shade for $\ntarget=8$). \textbf{(b)} $\hadapt$ at $\ntarget=4$ across three candidate collections: $\suffcollection \cup \{[d]\}$ (green), $\invariantset$ (black), $\set_{\mathrm{none}} = 2^{[d]}$ (blue); the target-only baseline $\hempQ$ at $\ntarget=4$ in orange (dashed). \textbf{(c)} Same as (b), at $\ntarget=8$. Each curve is the smoothed median across $3000$ runs; the shaded band is the $16$th--$84$th percentile of $\riskQ(\widehat{h}_r)$ over runs.}
    \label{fig:simple-example-plot}
\end{figure}
We now discuss how this picture changes when we are given a limited amount of target data from $Q$, and how the bound in \Cref{thm:main-result} applies to our toy example.


\textbf{Two-model collection.}
We first consider the simplest case in which both the causal graph and the shift set $\shiftset{P}{Q} = \{1,2,3\}$ are known, but the shift magnitude $t$ is not.
Under full causal knowledge, the sufficient invariance collection (cf.\ \Cref{sec:collections}) reduces to the singleton $\suffcollection = \{\{1,2,4\}\}$; we augment it with the full feature set $[d]$ as a baseline, giving the two-element collection $\suffcollection \cup \{[d]\} = \{\{1,2,4\}, [d]\}$ and the corresponding two-model dictionary $\hypoinit = \{h_{\{1,2,4\},P}, \hP\}$. Recall from above that, for any such target $Q$,
\begin{equation*}
    \riskQ(h_{\{1,2,4\},P}) = 1/2, \qquad \riskQ(\hP) = \frac{1}{3} + \frac{t^2}{9}.
\end{equation*}
First, consider the \emph{small shift} case $t \leq \sqrt{3/2}$, in which case $\hP$ is the best model. 
For the SCM in \Cref{eq:runningexample}, additive shifts leave the residual structure intact, so $R_Q(h_Q) = 1/3$ at every $t$ and the two-model excess risks are $\excessQ(\hP) = t^2/9$ and $\excessQ(h_{\{1,2,4\},P}) = 1/6$. Then, the risk margin is $\Delta = 1/6 - t^2/9$ and for $\ntarget \gtrsim 1/\Delta$, the procedure in \Cref{alg:iterative-step-agg} correctly selects the full-feature source model and achieves the best target excess risk $\excessQ(\hadapt) = \excessQ(\hP)$ as long as $\excessQ(\hP) = t^2/9 \lesssim 1/\ntarget$ (cf. \Cref{cor:two-models}). In particular, for very small shifts $t$, the procedure achieves near-zero excess risk given a constant $\ntarget$. Next, consider the \emph{large shift} case $t \geq  \sqrt{3/2}$, resulting in the invariant model $ h_{\{1,2,4\},P}$ as the best model. The risk margin is $\Delta = t^2/9 - 1/6 = \Theta(t^2)$ and the excess target risk of the invariant model is the constant $\excessQ( h_{\{1,2,4\},P}) = 1/6$. Thus, in the target sample regime $1/t^2 \lesssim \ntarget$, our procedure achieves the low target risk of the best invariant model $\excessQ(\hadapt) = \excessQ( h_{\{1,2,4\},P}) = 1/6$. This shows how large shifts $t$ result in large risk margins and thus enable fast adaptation -- excess risk capped at a constant given only $\ntarget \gtrsim 1/t^2$ target samples, while structure-agnostic procedures incur the unbounded source-model risk $\riskQ(\hP) = 1/3 + t^2/9$.
\smallskip


\textbf{Adaptation using multiple source domains.} Returning to the multi-source setting introduced above, suppose we now additionally observe a few target samples from $Q$. Recall that cross-source invariance testing yields only the source-invariance collection $\invariantset = \{ I : \{1\} \subseteq I\}$, since it cannot distinguish which features beyond $X_1$ are stable from source to target. Within $\invariantset$, the two $(P,Q)$-invariant models $h_{\{1,2\},P}$ and $h_{\{1,2,4\},P}$ have the stable target risks $1$ and $\tfrac{1}{2}$, while every remaining candidate has target risk scaling as $\Theta(t^2)$. For large shifts $t$, this separation between ``stable-risk'' and ``unstable-risk'' models places us in the regime of \Cref{cor:stable-unstable-models} with the  ``stable-risk threshold'' $\tau \leq \tfrac{2}{3}$ and margin $\Delta_\tau = \Theta(t^2)$. For the adaptive procedure, \Cref{cor:stable-unstable-models} then implies:  
$$ \text{If }  \ntarget \gtrsim \frac{1}{t^2}, \quad \excessQ(\hadapt) \leq \frac{2}{3}.$$
Whether the procedure can further refine ``an invariant'' to ``the best invariant'' model  --- i.e., improve from the parents-based model $h_{\{1,2\},P}$ to the true best $(P,Q)$-invariant model $h_{\{1,2,4\},P}$ --- depends on the risk margin $1/2$ between the two, which does not increase with the shift strength $t$. 

We illustrate these regimes empirically in \Cref{fig:simple-example-plot}. \Cref{fig:simple-example-plot} (a) shows two instances of \Cref{alg:iterative-step-agg} on the cross-source invariance collection $\invariantset$ at $\ntarget \in \{4, 8\}$ together with the four source candidate models: at small shifts, the algorithm selects $\hP$; at large shifts, the margin grows as $\Theta(t^2)$ and the procedure switches to the maximal-invariant $h_{\{1,2,4\},P}$.
The two $\ntarget$ values differ in the intermediate-shift regime: $\ntarget=4$ target samples do not suffice to iteratively prune the eight candidates in $\invariantset$; at $\ntarget=8$ the band is much tighter and the excess risk is markedly smaller. In \Cref{fig:simple-example-plot} (b) and (c), the collection is varied across $\suffcollection \cup \{[d]\}$, $\invariantset$ and $\set_{\mathrm{none}} = 2^{[d]}$, at $\ntarget=4$ (panel (b)) and $\ntarget=8$ (panel (c)). Three regimes are visible. \emph{(i) Small shifts.} For both $\ntarget$ values the three algorithm curves nearly coincide and track $\riskQ(\hP) = 1/3 + t^2/9$: in every collection, $\hP$ has the lowest target risk and is well separated from the invariants by the constant margin $\Delta = 1/6 - t^2/9$, so the procedure falls back to the source model. \emph{(ii) Medium shifts}. The risks of the non-invariant candidates scale as $\Theta(t^2)$ (cf.\ \Cref{cor:linear-scm}); at $\ntarget=4$ such candidates cannot be reliably pruned, and the algorithm climbs above the invariant baseline, more so the larger the collection, with a hump roughly ordered by collection size.
At $\ntarget=8$ the bands tighten and the three curves move toward the invariant baseline. \emph{(iii) Large shifts.} The risk margin $\Theta(t^2)$ is now large enough that even $\set_{\mathrm{none}}$'s iterative filter rejects $\hP$ and the other non-invariant models at both $\ntarget$ values, and all three curves converge to $\riskQ(h_{\{1,2,4\},P}) = 1/2$.
%

In the next section, we generalize our observations made on the toy SCM example to all linear SCMs under additive shifts.




\subsection{Supervised adaptation bound for linear SCMs under additive shifts}
In this section, we show that our toy example illustrates a phenomenon which holds for a general type of shared causal structure, namely when the data $(X,Y)$ are generated by a linear Markovian structural causal model and $P$  and $Q$ differ via an \emph{additive shift}. More concretely, we fix a DAG over the variables $V = (X,Y) \in \R^d$. Let $B \in \R^{(d+1) \times (d+1)}$ be a weighted adjacency matrix such that $B_{ij} = 0$ whenever $i \notin \pa(V_j)$ and $U \in \R^{d+1}$ a vector of independent unobserved variables with $\E_P[U] = \E_Q[U] = 0$, $\Var_P(U) = \Var_Q(U) = \Sigma \coloneqq \diag(\sigma_1^2,...,\sigma_d^2,\sigma_Y^2)$. The SCM is given by
\begin{equation}\label{eq:linear-scm}
    V = B V + U + t \addshift_D,
\end{equation}
where w.l.o.g. $\addshift_P = 0$ $P$-a.s., and $\addshift_Q$ is a random variable satisfying $U \perp \addshift_Q$, $\E_Q[\addshift_Q] = \mu $ with $\| \mu \|_2 = 1$ and $\Var_Q(\addshift_Q) = H = \diag(\eta)$. Hence, $\E_Q[t \addshift_Q] = t \mu$ and $\Var_Q(t \addshift_Q) = t^2 H$. In other words, for $t > 0$, $t \addshift_Q$ is a (random) additive shift with varying \emph{shift strength} $t$. If $H = 0$, it reduces to a deterministic mean shift as illustrated in our toy example. Let $\shiftset{P}{Q} \subset [d]$ be the shift set from $P$ to $Q$. Observe that in $\mu,\eta$ the non-zero indices are a subset of $\shiftset{P}{Q}$. As before, we augment the causal graph with the shift node $\squarenode{P}{Q}$ pointing to nodes in $\shiftset{P}{Q}$ (cf. \Cref{sec:new-causal}). The following supervised adaptation guarantee illustrates how the magnitude of additive shifts in linear SCMs influences the risk margin structure between the candidate models:
\begin{corollary}[Linear SCMs under additive shifts]\label{cor:linear-scm}
    Assume that $P$ and $Q$ follow the data-generating process \eqref{eq:linear-scm} and $\nsource = \infty$.
    Consider a collection $\set \subseteq 2^{[d]}$ of feature sets and the corresponding collection $\hypoinit$ of feature-subset models $\hIP$. Define the sub-collection $\setsep \coloneqq \{I \in \set: Y \perp_d \squarenode{P}{Q} \mid X_I \}$, which are $(P,Q)$-invariant feature sets in $\set$.
    Then, for all $(B,\Sigma,\mu)$ except a measure-zero set, it holds for the target risks of the candidate models:
    \begin{align*}
        \riskQ(\hIP) &= \riskP(\hIP) \quad &&\text{if } I \in \setsep; \\
        \riskQ(\hIP) &= \riskP(\hIP) + \Theta(t^2) \quad &&\text{otherwise.}
    \end{align*}
In particular, for $\ntarget \gtrsim \max \{ \frac{\log |\set| }{t^2}, C(B_X, \lambda_{\min},d) \}$, we have 
\begin{equation*}
    \riskQ(\hadapt) \leq \sup_{I \in \setsep} \riskP(\hIP).
\end{equation*}
\end{corollary}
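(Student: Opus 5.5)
The plan has two parts: first compute the target risk of each candidate explicitly in the linear SCM, then feed the resulting margin structure into \Cref{cor:stable-unstable-models}.

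\textbf{Step 1: separated candidates are invariant.} Write the reduced form $V = (I-B)^{-1}(U + t\addshift_D)$. For $I \in \setsep$, the d-separation $Y \perp_d \squarenode{P}{Q} \mid X_I$ yields $(P,Q)$-invariance of $Y \mid X_I$. Here I would treat the shift node as a root pointing into $\shiftset{P}{Q}$ and use the global Markov property in the augmented graph. Since $\nsource=\infty$, $\hIP$ is the population $P$-minimizer. I would rather not rely on realizability of a non-Gaussian conditional mean, so I would argue at the level of second moments. Define the residual $Y - \langle w_{I,P}, X_I\rangle$. In the linear model it is a linear combination of $U$ and $\addshift$, and d-separation makes its coefficient on $\addshift$ vanish. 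Its projection onto $X_I$ in the $Q$-geometry then has the same coefficients as under $P$. Hence $\riskQ(\hIP)=\riskP(\hIP)$.

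\textbf{Step 2: non-separated candidates pick up $\Theta(t^2)$.} For $I \notin \setsep$, expand the residual as
\[
r_I \;=\; a_I^\top U + t\, b_I^\top \addshift_Q ,
\]
with $b_I$ a linear function of $(B,\Sigma)$ through $w_{I,P}$ and $(I-B)^{-1}$. Using $U \perp \addshift_Q$ and $\E U = 0$,
\[
\riskQ(\hIP) \;=\; \riskP(\hIP) + t^2\bigl((b_I^\top\mu)^2 + b_I^\top H b_I\bigr).
\]
The key claim is that d-connection implies the coefficient $(b_I^\top\mu)^2 + b_I^\top H b_I$ is nonzero for generic $(B,\Sigma,\mu)$. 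This is a faithfulness-type argument, and I expect it to be the main obstacle. Each entry of $b_I$ is a rational function of the edge weights and variances. For d-connected $I$, I would exhibit one parameter choice, for example an active path with all other edges set to zero, at which the coefficient is nonzero. Then the zero set of a nonzero rational function has Lebesgue measure zero, and taking the union over finitely many $I$ gives the measure-zero exceptional set. The coefficient also depends only on the parameters and not on $t$, so the $\Theta(t^2)$ is uniform in $t$.

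\textbf{Step 3: apply the corollary.} Choose $\tau$ equal to the largest target excess risk among candidates in $\setsep$; by Step 1 this excess is bounded independently of $t$. By Step 2, every other candidate has excess risk $\riskP(\hIP)-\riskQ(h_Q)+c_I t^2$. Here I would also note that $\riskQ(h_Q)\le \riskQ(h_{\emptyset,P})$ stays bounded, or at worst grows more slowly, so the separation survives. For $t$ large this gives $\taugap=\Theta(t^2)$ and $\tau\le c_0\taugap$.

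Plugging into \Cref{cor:stable-unstable-models} with $|\hypoacczero|\le|\set|$ yields $\excessQ(\hadapt)\le\tau$ whenever $\ntarget\gtrsim\max\{\log|\set|/t^2,\,C(B_X,\lambda_{\min},d)\}$. Adding back $\riskQ(h_Q)$ gives $\riskQ(\hadapt)\le\sup_{I\in\setsep}\riskP(\hIP)$. Constants are absorbed into $\gtrsim$, so the large-$t$ requirement is hidden in the implicit constants. Where needed, I would use the boundedness of \Cref{asm:tail-of-X-and-Y} for the shifted $Q$.
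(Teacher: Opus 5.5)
Your proposal is correct and follows essentially the same route as the paper: write the residual in reduced form as $\alpha_I^\top U + t\,\alpha_I^\top \addshift_Q$, obtain $\riskQ(\hIP)=\riskP(\hIP)+t^2\bigl((\alpha_I^\top\mu)^2+\alpha_I^\top H\alpha_I\bigr)$, show the shifted coefficients vanish under d-separation and are generically nonzero otherwise (rational function nonzero at one point, plus a hyperplane argument for $\mu$), and then invoke \Cref{cor:stable-unstable-models}.

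One small slip is in Step 3. $\riskQ(h_{\emptyset,P})=\E_Q[Y^2]$ grows like $t^2$ whenever $Y$ has a shifted ancestor, so it does not bound $\riskQ(h_Q)$. You do not need that bound, though: $\riskQ(h_Q)$ cancels in $\taugap$, and $\tau\le\sup_{I\in\setsep}\riskP(\hIP)$ already follows from $\riskQ(h_Q)\ge 0$ (alternatively, use $\riskQ(h_Q)\le\min_{I\in\setsep}\riskP(\hIP)$ when $\setsep\neq\emptyset$).
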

We prove \Cref{cor:linear-scm} in \Cref{apx:proof-of-linear-scm}.  \Cref{cor:linear-scm} instantiates the setting of low- and high-risk models in \Cref{cor:stable-unstable-models} with $\tau = \sup_{I \in \setsep} \riskP(\hIP)$ and $\taugap = \Theta(t^2)$. 
In particular, we observe how large additive shifts separate a  collection $\set$ into "stable" and "unstable" models with either invariant or large risk. Target adaptation is then governed by the worst-case target risk among $(P,Q)$-invariant models in the collection $\set$. This illustrates the influence of increasing causal knowledge on the speed of adaptation: knowing both the graph and the shift set enables us to have as little as one $(P,Q)$-invariant model in the collection, resulting in small target risk in \Cref{cor:linear-scm}. In contrast, if the causal graph and/or the shift set are not known, sufficient invariance collections can include many more $(P,Q)$-invariant models, resulting in more pessimistic risk guarantees even under large additive shifts.

\section{Experiments}\label{sec:experiments}
In this section, we outline real-world experiments in which we utilize varying collections of candidate feature sets for supervised domain adaptation on new target domains.

\begin{figure}[t]
  \centering

  \begin{subfigure}[t]{0.48\textwidth}
    \centering
    \includegraphics[width=\linewidth]{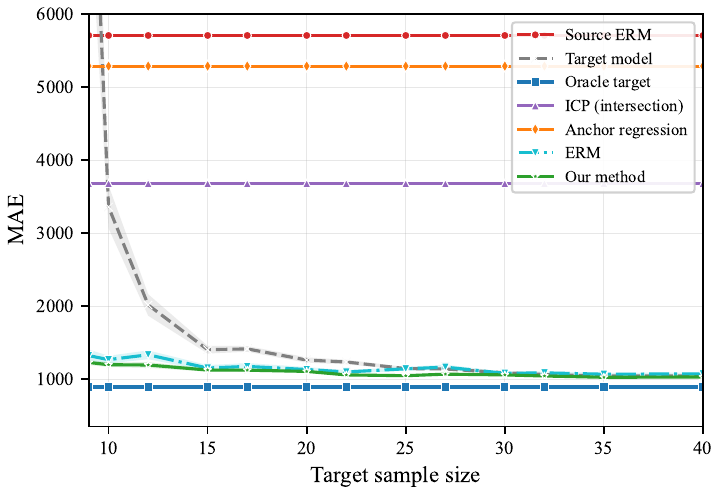}
    \caption{Causal Chambers experiment.}
    \label{fig:cc-exp}
  \end{subfigure}
  \hfill
  \begin{subfigure}[t]{0.48\textwidth}
    \centering
    \includegraphics[width=\linewidth]{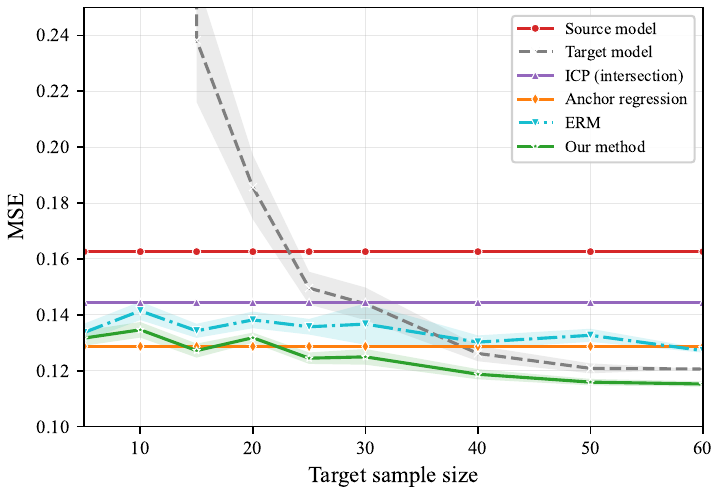}
    \caption{Gene expression experiment.}
    \label{fig:gene-exp}
  \end{subfigure}

  \caption{Target error (MAE and MSE, respectively) of the source and target model, causal DG methods, naive ERM and our procedure as a function of labeled target sample size $\ntarget$. Results are averaged over $50$ and $20$ runs, respectively. Shaded regions indicate standard error across runs.}
  \label{fig:main-experiments}
\end{figure}

\subsection{Causal Chamber data}\label{sec:exp-causal-chambers}
We consider the  \emph{light tunnel} collection of datasets from the Causal Chamber benchmark \cite{gamella2024causal}. The light tunnel dataset contains sensor recordings in a real-world physical system, and is comprised of one observational domain as well as multiple domains arising through interventions on concrete parts of the light tunnel. We take $3$ available domains as source domains and a previously unseen interventional domain as a target domain. We consider the collection of candidate models $\{\hempIP: I \in \invariantset \}$ satisfying invariances across multiple domains as described in \Cref{sec:collections}. We sample a few datapoints from the target domain and evaluate the target risk of some DG baselines (such as ICP intersection \cite{peters2016causal} and anchor regression \cite{rothenhausler2021anchor}), naive DA baselines, as well as our adaptive procedure. The results are shown in \Cref{fig:main-experiments} (a). Full details of the light tunnel experiment and implementation of our algorithm can be found in \Cref{apx:cc-linear-regression-details}. 
\subsection{Gene expression data}
We consider a subset of the RPE1 single-cell gene expression dataset \cite{replogle2022mapping}, which contains expression levels of the $10$ most highly expressed genes in the circuit. We consider a randomly selected gene as the target variable $Y$. As source domains, we pick the observational domain and $3$ randomly selected interventional domains. Since the considered gene subnetwork is a highly confounded system with many unobserved variables, we find that no candidate sets are accepted by ICP. Therefore, we resort to the collection $\set_{none}  = 2^{[d]}$ that corresponds to no causal knowledge of the shift set or the graph, and is trivially guaranteed to contain the most $Q$-predictive invariant model (cf. \Cref{eq:wantedproperty}).  We fit the corresponding $512$ linear models on source data. We evaluate multiple DG and DA baselines as well as our procedure given an increasing amount $\ntarget$ of available labeled target samples. Our results are shown in \Cref{fig:main-experiments} (b). Full details of the gene expression experiment can be found in \Cref{apx:gene-experiments}. 
\smallskip \\

\noindent Finally, in \Cref{apx:cc-linear-probes}, we explore a generalization of our causal domain adaptation setting beyond linear feature-subset models. Using the Causal Chamber image dataset \cite{gamella2024causal}, we construct a collection of candidate linear probes on pretrained CLIP embeddings. We describe the results and details of our image regression experiment in \Cref{apx:cc-linear-probes}.

\section{Outlook}{\label{sec:outlook}}
In this work, we studied linear supervised domain adaptation under partial knowledge of the shared causal structure. We showed 1) how access to a dictionary of potentially invariant models can improve supervised adaptation; 2) access to some target data can weaken the strict assumption-reliance of causality-based domain generalization methods. We put the notion of causal invariance in a finite-sample setting and demonstrated how the finite-sample effects of both source and target data cannot be ignored when utilizing causal invariance for generalization. 
We discuss promising future directions and potential extensions of our work in \Cref{sec:limitations}.

\section{Acknowledgments}
JK was supported by the SNF grant number 204439. We thank Tobias Wegel and Linus Kuehne for helpful discussions. This work is a result of collaboration  which started while JK, KJ, SK and FY  were visiting the Simons Institute for the Theory of Computing. 

\newpage
\bibliography{references}

@article{wang2026confidence,
  title={Confidence sets for causal orderings},
  author={Wang, Y Samuel and Kolar, Mladen and Drton, Mathias},
  journal={Journal of the American Statistical Association},
  volume={121},
  number={553},
  pages={690--703},
  year={2026},
  publisher={Taylor \& Francis}
}

@article{shimodaira2000improving,
  title={Improving predictive inference under covariate shift by weighting the log-likelihood function},
  author={Shimodaira, Hidetoshi},
  journal={Journal of Statistical Planning and Inference},
  volume={90},
  number={2},
  pages={227--244},
  year={2000},
  publisher={Elsevier}
}

@article{sugiyama2007covariate,
  title={Covariate shift adaptation by importance weighted cross validation.},
  author={Sugiyama, Masashi and Krauledat, Matthias and M{\"u}ller, Klaus-Robert},
  journal={Journal of Machine Learning Research},
  volume={8},
  number={5},
  year={2007}
}

@inproceedings{
nastl2024do,
title={Do causal predictors generalize better to new domains?},
author={Vivian Yvonne Nastl and Moritz Hardt},
booktitle={The Thirty-eighth Annual Conference on Neural Information Processing Systems},
year={2024},
}

@article{tropp2015introduction,
  title={An introduction to matrix concentration inequalities},
  author={Tropp, Joel A and others},
  journal={Foundations and Trends{\textregistered} in Machine Learning},
  volume={8},
  number={1-2},
  pages={1--230},
  year={2015},
  publisher={Now Publishers, Inc.}
}

@article{chen2021domain,
  title={Domain adaptation under structural causal models},
  author={Chen, Yuansi and B{\"u}hlmann, Peter},
  journal={Journal of Machine Learning Research},
  volume={22},
  number={261},
  pages={1--80},
  year={2021}
}

@article{christiansen2021causal,
  title={A causal framework for distribution generalization},
  author={Christiansen, Rune and Pfister, Niklas and Jakobsen, Martin Emil and Gnecco, Nicola and Peters, Jonas},
  journal={IEEE Transactions on Pattern Analysis and Machine Intelligence},
  volume={44},
  number={10},
  pages={6614--6630},
  year={2021},
  publisher={IEEE}
}

@article{wu2023prominent,
  title={Prominent Roles of Conditionally Invariant Components in Domain Adaptation: Theory and Algorithms},
  author={Wu, Keru and Chen, Yuansi and Ha, Wooseok and Yu, Bin},
  journal={arXiv preprint arXiv:2309.10301},
  year={2023}
}

@article{heinze2021conditional,
  title={Conditional variance penalties and domain shift robustness},
  author={Heinze-Deml, Christina and Meinshausen, Nicolai},
  journal={Machine Learning},
  volume={110},
  number={2},
  pages={303--348},
  year={2021},
  publisher={Springer}
}

@inproceedings{gong2016domain,
  title={Domain adaptation with conditional transferable components},
  author={Gong, Mingming and Zhang, Kun and Liu, Tongliang and Tao, Dacheng and Glymour, Clark and Sch{\"o}lkopf, Bernhard},
  booktitle={International Conference on Machine Learning},
  pages={2839--2848},
  year={2016},
  organization={PMLR}
}

@inproceedings{yang2018characterizing,
  title={Characterizing and learning equivalence classes of causal DAGs under interventions},
  author={Yang, Karren and Katcoff, Abigail and Uhler, Caroline},
  booktitle={International Conference on Machine Learning},
  pages={5541--5550},
  year={2018},
  organization={PMLR}
}

@book{pearl2009causality,
author = {Pearl, Judea},
title = {Causality: Models, Reasoning and Inference},
year = {2009},
isbn = {052189560X},
publisher = {Cambridge University Press},
address = {USA},
edition = {2nd}
}

@book{spirtes2001causation,
  title={Causation, prediction, and search},
  author={Spirtes, Peter and Glymour, Clark and Scheines, Richard},
  year={2001},
  publisher={MIT press}
}

@inproceedings{
gulrajani2020search,
title={In Search of Lost Domain Generalization},
author={Ishaan Gulrajani and David Lopez-Paz},
booktitle={International Conference on Learning Representations},
year={2021},
}

@inproceedings{muandet2013domain,
  title={Domain generalization via invariant feature representation},
  author={Muandet, Krikamol and Balduzzi, David and Sch{\"o}lkopf, Bernhard},
  booktitle={International conference on Machine Learning},
  pages={10--18},
  year={2013},
  organization={PMLR}
}

@article{heinze2018invariant,
  title={Invariant causal prediction for nonlinear models},
  author={Heinze-Deml, Christina and Peters, Jonas and Meinshausen, Nicolai},
  journal={Journal of Causal Inference},
  volume={6},
  number={2},
  pages={20170016},
  year={2018},
  publisher={De Gruyter}
}

@article{rothenhausler2021anchor,
  title={Anchor regression: Heterogeneous data meet causality},
  author={Rothenh{\"a}usler, Dominik and Meinshausen, Nicolai and B{\"u}hlmann, Peter and Peters, Jonas},
  journal={Journal of the Royal Statistical Society Series B: Statistical Methodology},
  volume={83},
  number={2},
  pages={215--246},
  year={2021},
  publisher={Oxford University Press}
}

@article{pfister2019invariant,
  title={Invariant causal prediction for sequential data},
  author={Pfister, Niklas and B{\"u}hlmann, Peter and Peters, Jonas},
  journal={Journal of the American Statistical Association},
  volume={114},
  number={527},
  pages={1264--1276},
  year={2019},
  publisher={Taylor \& Francis}
}

@article{peters2016causal,
  title={Causal inference by using invariant prediction: identification and confidence intervals},
  author={Peters, Jonas and B{\"u}hlmann, Peter and Meinshausen, Nicolai},
  journal={Journal of the Royal Statistical Society Series B: Statistical Methodology},
  volume={78},
  number={5},
  pages={947--1012},
  year={2016},
  publisher={Oxford University Press}
}

@article{redko2020survey,
  title={A survey on domain adaptation theory: learning bounds and theoretical guarantees},
  author={Redko, Ievgen and Morvant, Emilie and Habrard, Amaury and Sebban, Marc and Bennani, Youn{\`e}s},
  journal={arXiv preprint arXiv:2004.11829},
  year={2020}
}

@article{rojas2018invariant,
  author    = {Rojas-Carulla, Mateo and Sch{\"o}lkopf, Bernhard and Turner, Richard and Peters, Jonas},
  journal   = {The Journal of Machine Learning Research},
  number    = {1},
  pages     = {1309--1342},
  publisher = {JMLR. org},
  title     = {Invariant models for causal transfer learning},
  volume    = {19},
  year      = {2018}
}

@inproceedings{ahuja2020invariant,
  title={Invariant risk minimization games},
  author={Ahuja, Kartik and Shanmugam, Karthikeyan and Varshney, Kush and Dhurandhar, Amit},
  booktitle={International Conference on Machine Learning},
  pages={145--155},
  year={2020},
  organization={PMLR}
}

@inproceedings{krueger2021out,
  title={Out-of-distribution generalization via risk extrapolation ({RE}x)},
  author={Krueger, David and Caballero, Ethan and Jacobsen, Joern-Henrik and Zhang, Amy and Binas, Jonathan and Zhang, Dinghuai and Le Priol, Remi and Courville, Aaron},
  booktitle={International Conference on Machine Learning},
  pages={5815--5826},
  year={2021},
  organization={PMLR}
}

@inproceedings{
sagawa2019distributionally,
title={Distributionally Robust Neural Networks for Group Shifts: On the Importance of Regularization for Worst-Case Generalization},
author={Shiori Sagawa* and Pang Wei Koh* and Tatsunori B. Hashimoto and Percy Liang},
booktitle={International Conference on Learning Representations},
year={2020},
url={https://openreview.net/forum?id=ryxGuJrFvS}
}

@article{gnecco2023boosted,
  title={Boosted control functions},
  author={Gnecco, Nicola and Peters, Jonas and Engelke, Sebastian and Pfister, Niklas},
  journal={arXiv preprint arXiv:2310.05805},
  year={2023}
}

@article{buhlmann2020invariance,
  author    = {B{\"u}hlmann, Peter},
  journal   = {Statistical Science},
  number    = {3},
  pages     = {404--426},
  publisher = {Institute of Mathematical Statistics},
  title     = {Invariance, causality and robustness},
  volume    = {35},
  year      = {2020}
}

@article{jakobsen2022distributional,
  author    = {Jakobsen, Martin Emil and Peters, Jonas},
  journal   = {The Econometrics Journal},
  number    = {2},
  pages     = {404--432},
  publisher = {Oxford University Press},
  title     = {Distributional robustness of {K}-class estimators and the {PULSE}},
  volume    = {25},
  year      = {2022}
}

@article{kook2022distributionalanchor,
   title={Distributional anchor regression},
   volume={32},
   ISSN={1573-1375},
   number={3},
   journal={Statistics and Computing},
   publisher={Springer Science and Business Media LLC},
   author={Kook, Lucas and Sick, Beate and Bühlmann, Peter},
   year={2022},
   month=may }

@article{shen2023causalityoriented,
  title={Causality-oriented robustness: exploiting general additive interventions},
  author={Shen, Xinwei and B{\"u}hlmann, Peter and Taeb, Armeen},
  journal={arXiv preprint arXiv:2307.10299},
  year={2023}
}

@inproceedings{
vedantam2021an,
title={An Empirical Investigation of Domain Generalization with Empirical Risk Minimizers},
author={Shanmukha Ramakrishna Vedantam and David Lopez-Paz and David J. Schwab},
booktitle={Advances in Neural Information Processing Systems},
editor={A. Beygelzimer and Y. Dauphin and P. Liang and J. Wortman Vaughan},
year={2021},
}

@article{ben2010theory,
  title={A theory of learning from different domains},
  author={Ben-David, Shai and Blitzer, John and Crammer, Koby and Kulesza, Alex and Pereira, Fernando and Vaughan, Jennifer Wortman},
  journal={Machine learning},
  volume={79},
  pages={151--175},
  year={2010},
  publisher={Springer}
}

@inproceedings{lipton2018detecting,
  title={Detecting and correcting for label shift with black box predictors},
  author={Lipton, Zachary and Wang, Yu-Xiang and Smola, Alexander},
  booktitle={International conference on machine learning},
  pages={3122--3130},
  year={2018},
  organization={PMLR}
}

@article{garg2020unified,
  title={A unified view of label shift estimation},
  author={Garg, Saurabh and Wu, Yifan and Balakrishnan, Sivaraman and Lipton, Zachary},
  journal={Advances in Neural Information Processing Systems},
  year={2020}
}

@article{replogle2022mapping,
  title={Mapping information-rich genotype-phenotype landscapes with genome-scale {P}erturb-seq},
  author={Replogle, Joseph M and Saunders, Reuben A and Pogson, Angela N and Hussmann, Jeffrey A and Lenail, Alexander and Guna, Alina and Mascibroda, Lauren and Wagner, Eric J and Adelman, Karen and Lithwick-Yanai, Gila and others},
  journal={Cell},
  volume={185},
  number={14},
  pages={2559--2575},
  year={2022},
  publisher={Elsevier}
}

@article{gu2024causality,
  title={Causality pursuit from heterogeneous environments via neural adversarial invariance learning},
  author={Gu, Yihong and Fang, Cong and B{\"u}hlmann, Peter and Fan, Jianqing},
  journal={arXiv preprint arXiv:2405.04715},
  year={2024}
}

@inproceedings{jalaldoust2024partial,
  title={Partial transportability for domain generalization},
  author={Jalaldoust, Kasra and Bellot, Alexis and Bareinboim, Elias},
  booktitle={The Thirty-eighth Annual Conference on Neural Information Processing Systems},
  year={2024}
}

@article{kostin2024achievable,
  title={Achievable distributional robustness when the robust risk is only partially identified},
  author={Kostin, Julia and Gnecco, Nicola and Yang, Fanny},
  journal={Advances in Neural Information Processing Systems},
  volume={37},
  pages={83915--83950},
  year={2024}
}

@book{spirtes2000causation,
  title={Causation, prediction, and search},
  author={Spirtes, Peter and Glymour, Clark N and Scheines, Richard},
  year={2000},
  publisher={MIT press}
}

@article{hauser2012characterization,
  title={Characterization and greedy learning of interventional Markov equivalence classes of directed acyclic graphs},
  author={Hauser, Alain and B{\"u}hlmann, Peter},
  journal={The Journal of Machine Learning Research},
  volume={13},
  number={1},
  pages={2409--2464},
  year={2012},
  publisher={JMLR. org}
}

@article{zhang2008completeness,
  title={On the completeness of orientation rules for causal discovery in the presence of latent confounders and selection bias},
  author={Zhang, Jiji},
  journal={Artificial Intelligence},
  volume={172},
  number={16-17},
  pages={1873--1896},
  year={2008},
  publisher={Elsevier}
}

@article{li2023causal,
  title={Causal discovery from observational and interventional data across multiple environments},
  author={Li, Adam and Jaber, Amin and Bareinboim, Elias},
  journal={Advances in Neural Information Processing Systems},
  volume={36},
  pages={16942--16956},
  year={2023}
}

@article{perkovic2018complete,
  title={Complete graphical characterization and construction of adjustment sets in Markov equivalence classes of ancestral graphs},
  author={Perkovi{\'c}, Emilija and Textor, Johannes and Kalisch, Markus and Maathuis, Marloes H},
  journal={Journal of Machine Learning Research},
  volume={18},
  number={220},
  pages={1--62},
  year={2018}
}

@inproceedings{ben-david-2006,
 author = {Ben-David, Shai and Blitzer, John and Crammer, Koby and Pereira, Fernando},
 booktitle = {Advances in Neural Information Processing Systems},
 editor = {B. Sch\"{o}lkopf and J. Platt and T. Hoffman},
 pages = {},
 publisher = {MIT Press},
 title = {Analysis of Representations for Domain Adaptation},
   OPTurl = {https://proceedings.neurips.cc/paper_files/paper/2006/file/b1b0432ceafb0ce714426e9114852ac7-Paper.pdf},
 volume = {19},
 year = {2006}
}

@inproceedings{bendavid2003,
  title={Exploiting Task Relatedness for Multiple Task Learning},
  author={Ben-David, Shai and Schuller, Reba},
  booktitle={Proceedings of the 16th Annual Conference on Learning Theory (COLT)},
  pages={567--580},
  year={2003}
}

@inproceedings{blitzer2006,
  title={Domain Adaptation with Structural Correspondence Learning},
  author={Blitzer, John and McDonald, Ryan and Pereira, Fernando},
  booktitle={Proceedings of the 2006 Conference on Empirical Methods in Natural Language Processing},
  pages={120--128},
  year={2006}
}

@inproceedings{mansour2008,
 author = {Mansour, Yishay and Mohri, Mehryar and Rostamizadeh, Afshin},
 booktitle = {Advances in Neural Information Processing Systems},
 pages = {},
 publisher = {Curran Associates, Inc.},
 title = {Domain Adaptation with Multiple Sources},
 volume = {21},
 year = {2008}
}

@misc{hanneke2025adaptivesampleaggregationtransfer,
      title={Adaptive Sample Aggregation In Transfer Learning}, 
      author={Steve Hanneke and Samory Kpotufe},
      year={2025},
      eprint={2408.16189},
      archivePrefix={arXiv},
      primaryClass={stat.ML},
      url={https://arxiv.org/abs/2408.16189}, 
}

@inbook{valueoftargetdata,
author = {Hanneke, Steve and Kpotufe, Samory},
title = {On the value of target data in transfer learning},
year = {2019},
publisher = {Curran Associates Inc.},
address = {Red Hook, NY, USA},
booktitle = {Proceedings of the 33rd International Conference on Neural Information Processing Systems},
articleno = {885},
numpages = {11}
}

@article{Jalaldoust_Bareinboim_2024, title={Transportable Representations for Domain Generalization}, author={Jalaldoust, Kasra and Bareinboim, Elias}, volume={38}, number={11}, journal={Proceedings of the AAAI Conference on Artificial Intelligence}, year={2024}, month={Mar.}, pages={12790-12800} }

@article{glennan1996mechanisms,
  title={Mechanisms and the nature of causation},
  author={Glennan, Stuart S},
  journal={Erkenntnis},
  volume={44},
  number={1},
  pages={49--71},
  year={1996},
  publisher={Springer}
}

@article{machamer2000thinking,
  title={Thinking about mechanisms},
  author={Machamer, Peter and Darden, Lindley and Craver, Carl F},
  journal={Philosophy of science},
  volume={67},
  number={1},
  pages={1--25},
  year={2000},
  publisher={Cambridge University Press}
}

@article{arjovsky2019invariant,
  title={Invariant risk minimization},
  author={Arjovsky, Martin and Bottou, L{\'e}on and Gulrajani, Ishaan and Lopez-Paz, David},
  journal={arXiv preprint arXiv:1907.02893},
  year={2019}
}

@inproceedings{li2018domain,
  title={Domain generalization via conditional invariant representations},
  author={Li, Ya and Gong, Mingming and Tian, Xinmei and Liu, Tongliang and Tao, Dacheng},
  booktitle={Proceedings of the AAAI conference on artificial intelligence},
  volume={32},
  year={2018}
}

@article{wang2022generalizing,
  title={Generalizing to unseen domains: A survey on domain generalization},
  author={Wang, Jindong and Lan, Cuiling and Liu, Chang and Ouyang, Yidong and Qin, Tao and Lu, Wang and Chen, Yiqiang and Zeng, Wenjun and Yu, Philip},
  journal={IEEE Transactions on Knowledge and Data Engineering},
  year={2022},
  publisher={IEEE}
}

@inproceedings{
wald2021on,
title={On Calibration and Out-of-Domain Generalization},
author={Yoav Wald and Amir Feder and Daniel Greenfeld and Uri Shalit},
booktitle={Advances in Neural Information Processing Systems},
editor={A. Beygelzimer and Y. Dauphin and P. Liang and J. Wortman Vaughan},
year={2021},
    OPTurl={https://openreview.net/forum?id=XWYJ25-yTRS}
}

@article{storkey2008training,
  title={When training and test sets are different: characterizing learning transfer},
  author={Storkey, Amos},
  year={2008}
}

@article{bareinboim2014transportability,
  title={Transportability from multiple environments with limited experiments: Completeness results},
  author={Bareinboim, Elias and Pearl, Judea},
  journal={Advances in neural information processing systems},
  volume={27},
  year={2014}
}

@inproceedings{hsu2012random,
  title={Random design analysis of ridge regression},
  author={Hsu, Daniel and Kakade, Sham M and Zhang, Tong},
  booktitle={Conference on learning theory},
  pages={9--1},
  year={2012},
  organization={JMLR Workshop and Conference Proceedings}
}

@article{correa2020general,
  title={General transportability of soft interventions: Completeness results},
  author={Correa, Juan and Bareinboim, Elias},
  journal={Advances in Neural Information Processing Systems},
  volume={33},
  pages={10902--10912},
  year={2020}
}

@inproceedings{correa2019statistical,
  title={From Statistical Transportability to Estimating the Effect of Stochastic Interventions.},
  author={Correa, Juan D and Bareinboim, Elias},
  booktitle={IJCAI},
  pages={1661--1667},
  year={2019}
}

@inproceedings{blitzer2011da,
  title={Domain Adaptation with Coupled Subspaces},
  author={Blitzer, John and Kakade, Sham and Foster, Dean},
  booktitle={Proceedings of the Fourteenth International Conference on Artificial Intelligence and Statistics},
  series={Proceedings of Machine Learning Research},
  volume={15},
  pages={173--181},
  year={2011},
  publisher={PMLR}
}

@article{huang2020,
  author       = {Huang, Biwei and Zhang, Kun and Sch\"{o}lkopf, Bernhard},
  title        = {Invariant Causal Prediction for Nonlinear Models},
  journal      = {Journal of Causal Inference},
  volume       = {8},
  number       = {1}, 
  pages        = {350--367},
  year         = {2020},
  publisher    = {De Gruyter}
}

@inproceedings{johansson2019support,
  title={Support and invertibility in domain-invariant representations},
  author={Johansson, Fredrik D and Sontag, David and Ranganath, Rajesh},
  booktitle={The 22nd International Conference on Artificial Intelligence and Statistics},
  pages={527--536},
  year={2019},
  organization={PMLR}
}

@inproceedings{zhao2019learning,
  title={On learning invariant representations for domain adaptation},
  author={Zhao, Han and Des Combes, Remi Tachet and Zhang, Kun and Gordon, Geoffrey},
  booktitle={International conference on machine learning},
  pages={7523--7532},
  year={2019},
  organization={PMLR}
}

@INPROCEEDINGS{pearlbareinboim2011,
  author={Pearl, Judea and Bareinboim, Elias},
  booktitle={2011 IEEE 11th International Conference on Data Mining Workshops}, 
  title={Transportability of Causal and Statistical Relations: A Formal Approach}, 
  year={2011},
  volume={},
  number={},
  pages={540-547},
  doi={10.1109/ICDMW.2011.169}}

@article{gamella2024causal,
  author={Gamella, Juan L. and Peters, Jonas and B{\"u}hlmann, Peter},
  title={Causal chambers as a real-world physical testbed for {AI} methodology},
  journal={Nature Machine Intelligence},
  doi={10.1038/s42256-024-00964-x},
  year={2025},
}

@inproceedings{cortes2010,
 author = {Cortes, Corinna and Mansour, Yishay and Mohri, Mehryar},
 booktitle = {Advances in Neural Information Processing Systems},
 editor = {J. Lafferty and C. Williams and J. Shawe-Taylor and R. Zemel and A. Culotta},
 pages = {},
 publisher = {Curran Associates, Inc.},
 title = {Learning Bounds for Importance Weighting},
 url = {https://proceedings.neurips.cc/paper_files/paper/2010/file/59c33016884a62116be975a9bb8257e3-Paper.pdf},
 volume = {23},
 year = {2010}
}

@InProceedings{chattopadhyay13,
  title = 	 {Joint Transfer and Batch-mode Active Learning},
  author = 	 {Chattopadhyay, Rita and Fan, Wei and Davidson, Ian and Panchanathan, Sethuraman and Ye, Jieping},
  booktitle = 	 {Proceedings of the 30th International Conference on Machine Learning},
  pages = 	 {253--261},
  year = 	 {2013},
  editor = 	 {Dasgupta, Sanjoy and McAllester, David},
  volume = 	 {28},
  number =       {3},
  series = 	 {Proceedings of Machine Learning Research},
  address = 	 {Atlanta, Georgia, USA},
  month = 	 {17--19 Jun},
  publisher =    {PMLR},
  url = 	 {https://proceedings.mlr.press/v28/chattopadhyay13.html},
}

@article{hanneke2013,
author = {Yang, Liu and Hanneke, Steve and Carbonell, Jaime},
year = {2013},
month = {02},
pages = {},
title = {A Theory of Transfer Learning with Applications to Active Learning},
volume = {90},
journal = {Machine Learning},
doi = {10.1007/s10994-012-5310-y}
}

@InProceedings{saha2011,
author="Saha, Avishek
and Rai, Piyush
and Daum{\'e}, Hal
and Venkatasubramanian, Suresh
and DuVall, Scott L.",
editor="Gunopulos, Dimitrios
and Hofmann, Thomas
and Malerba, Donato
and Vazirgiannis, Michalis",
title="Active Supervised Domain Adaptation",
booktitle="Machine Learning and Knowledge Discovery in Databases",
year="2011",
publisher="Springer Berlin Heidelberg",
address="Berlin, Heidelberg",
pages="97--112",
isbn="978-3-642-23808-6"
}

@inproceedings{david2010impossibility,
  title={Impossibility theorems for domain adaptation},
  author={Ben-David, Shai and Lu, Tyler and Luu, Teresa and P{\'a}l, D{\'a}vid},
  booktitle={Proceedings of the Thirteenth International Conference on Artificial Intelligence and Statistics},
  pages={129--136},
  year={2010},
  organization={JMLR Workshop and Conference Proceedings}
}

@InProceedings{david2012hardness,
author="Ben-David, Shai
and Urner, Ruth",
editor="Bshouty, Nader H.
and Stoltz, Gilles
and Vayatis, Nicolas
and Zeugmann, Thomas",
title="On the Hardness of Domain Adaptation and the Utility of Unlabeled Target Samples",
booktitle="Algorithmic Learning Theory",
year="2012",
publisher="Springer Berlin Heidelberg",
address="Berlin, Heidelberg",
pages="139--153",
isbn="978-3-642-34106-9"
}

@article{ben2014domain,
  title={Domain adaptation--can quantity compensate for quality?},
  author={Ben-David, Shai and Urner, Ruth},
  journal={Annals of Mathematics and Artificial Intelligence},
  volume={70},
  number={3},
  pages={185--202},
  year={2014},
  publisher={Springer}
}

@article{londschien2025domain,
  title={Domain Generalization and Adaptation in Intensive Care with Anchor Regression},
  author={Londschien, Malte and Burger, Manuel and R{\"a}tsch, Gunnar and B{\"u}hlmann, Peter},
  journal={arXiv preprint arXiv:2507.21783},
  year={2025}
}

@inproceedings{tsybakov2003optimal,
  title={Optimal rates of aggregation},
  author={Tsybakov, Alexandre B},
  booktitle={Learning Theory and Kernel Machines: 16th Annual Conference on Learning Theory and 7th Kernel Workshop, COLT/Kernel 2003, Washington, DC, USA, August 24-27, 2003. Proceedings},
  pages={303--313},
  year={2003},
  organization={Springer}
}

@article{rigollet2012kullback,
  title={Kullback-Leibler aggregation and misspecified generalized linear models},
  author={Rigollet, Philippe},
  journal={The Annals of Statistics},
  pages={639--665},
  year={2012},
  publisher={JSTOR}
}

@article{audibert2007progressive,
  title={Progressive mixture rules are deviation suboptimal},
  author={Audibert, Jean-Yves},
  journal={Advances in Neural Information Processing Systems},
  volume={20},
  year={2007}
}

@article{yang1999model,
  title={Model selection for nonparametric regression},
  author={Yang, Yuhong},
  journal={Statistica Sinica},
  pages={475--499},
  year={1999},
  publisher={JSTOR}
}

@article{juditsky2008learning,
  title={Learning by mirror averaging},
  author={Juditsky, Anatoli and Rigollet, Philippe and Tsybakov, Alexandre B},
  year={2008}
}

@article{dai2012deviation,
  title={Deviation optimal learning using greedy Q-aggregation},
  author={Dai, Dong and Rigollet, Philippe and Zhang, Tong},
  year={2012}
}

@article{gaiffas2011hyper,
  title={Hyper-sparse optimal aggregation},
  author={Ga{\^\i}ffas, St{\'e}phane and Lecu{\'e}, Guillaume},
  journal={The Journal of Machine Learning Research},
  volume={12},
  pages={1813--1833},
  year={2011},
  publisher={JMLR. org}
}

@article{pfister2021stabilizing,
  title={Stabilizing variable selection and regression},
  author={Pfister, Niklas and Williams, Evan G and Peters, Jonas and Aebersold, Ruedi and B{\"u}hlmann, Peter},
  journal={The Annals of Applied Statistics},
  volume={15},
  number={3},
  pages={1220--1246},
  year={2021},
  publisher={JSTOR}
}

@article{lecue2014optimal,
  title={Optimal learning with Q-aggregation},
  author={Lecu{\'e}, Guillaume and Rigollet, Philippe},
  year={2014}
}

@article{audibert2007proof,
  title={Proof of the optimality of the empirical star algorithm},
  author={Audibert, Jean-Yves},
  journal={Technical note},
  year={2007}
}
\bibliographystyle{unsrt}

\clearpage
\appendix
\clearpage 

\hypersetup{
  linkcolor={pierCite},
  citecolor={pierCite},
  urlcolor={pierCite}
}

\section*{Appendix}
The following sections provide deferred discussions, proofs and experimental details.
\DoToC
\clearpage

\hypersetup{
  linkcolor={pierLink},
  citecolor={pierCite},
  urlcolor={pierCite}
}

\crefalias{section}{appendix}      
\crefalias{subsection}{subappendix}

\clearpage
\section{An Idealistic Construction of Feature subset Collections}\label{sec:optimistic-collection}

Suppose we have (partial) knowledge of the underlying SCMs over $(X,Y)$ in source $P$ and target $Q$ -- for instance, a causal graph, and some partial knowledge of the structural shift between $P$ and $Q$ (for instance, the shift set $\shiftset{P}{Q}$). This knowledge specifies a family of distributions $\mathcal{Q}$ containing all admissible target distributions $Q$. Recall that throughout this work we consider no structural shift on the label, i.e., $Y \not\in \shiftset{P}{Q}$. We will consider the following \emph{degrees of causal knowledge} (cf. \Cref{fig:main-causal-figure} (b)):
\begin{align*}
  \Qnone &\coloneqq \{Q: \text{ SCM } \SCM_Q \text{ differs from } \SCM_P \text{ in unknown } \shiftset{P}{Q} \not\ni Y \}  \quad \text{no causal knowledge}; \\
  \Qgraphs &\coloneqq \{Q: \text{ SCM $\SCM_Q$ induces } \graph \in \graphs, \text{ unknown } \shiftset{P}{Q} \not\ni Y\}  \quad \text{partial knowledge of graph}; \\
  \Qgraph &\coloneqq \{Q: \text{ SCM $\SCM_Q$ induces } \graph, \text{ unknown } \shiftset{P}{Q} \not\ni Y\}  \quad \text{full knowledge of the graph}; \\
  \Qgraphshift &\coloneqq \{Q: \text{ SCM $\SCM_Q$ induces } \graph, \text{ known } \shiftset{P}{Q} \not\ni Y\}  \quad \text{full causal knowledge}.
\end{align*}

In our example in \Cref{sec:toy-example}, we have observed that even under full causal knowledge $\Qgraphshift$, we could only identify a collection of \emph{candidate} best models $\{h_{\{1,2,4\},P}, \hP \}$ of which either could be preferred for a concrete $Q \in \Qgraphshift$.  Motivated by this observation, for a family $\mathcal{Q}$ of admissible target distributions, we might wish to  construct what we will call an  "\emph{optimistic}" collection:
\begin{equation*}
    \optset \text{ optimistic w.r.t. $\mathcal{Q}$ if } \quad \forall Q \in \mathcal{Q}: \argmin_{I \subseteq [d]} \riskQ(\hIP) \cap \optset \neq \emptyset. 
\end{equation*}
Notice that the sufficient invariance collection (\Cref{def:sufficient-invariance-collection}) are desirable specifically because they are optimistic collection. In words, $\optset$ is guaranteed to contain some best-performing feature subset model for any admissible target domain $Q$. In particular, the best model from such an optimistic collection is guaranteed to do \emph{at least as well as } any $(P,Q)$-invariant model and the full-feature source model.
An obvious collection that is optimistic w.r.t. any family of target distributions $\mathcal{Q}$ is $2^{[d]}$, containing all of the feature subsets. However, using the available causal knowledge, we might want to construct a smaller, non-trivial collection which is still optimistic w.r.t. $\mathcal{Q}$. 
For instance, in the example in \Cref{sec:toy-example}, if we only know $Y \not\in \shiftset{P}{Q}$, i.e., $Q \in \Qnone$, then the full collection $\optset = 2^{[4]}$ is the only option. A more refined knowledge is having a subset of the parents, i.e., $\graph \in \graphs = \{\graph: \text{ s.t } X_1 \in \Pa(Y) \}$; for every $Q \in \Qgraphs$ at least one of the best performing feature subsets must include $X_1$. If we further know the graph $G$ in \Cref{fig:main-causal-figure} (without the $s$-node), then we can deduce that $X_1,X_2$ belong to at least one of the best performing feature subsets for every $Q \in \Qgraph$. If we additionally know $\shiftset{P}{Q} = \{X_1,X_2,X_3\}$, then we can make a stronger claim that that every feature subset of every optimistic set contains $X_1,X_2,X_4$. Next, we propose a procedure to iteratively remove certain feature subsets from any optimistic collection while preserving it as an optimistic collection.\\

\begin{proposition} \label{prop:optsetmin-membership}
    Suppose we have full causal knowledge, i.e.,  $Q \in \Qgraphshift$. Also, assume causal faithfulness. For a subset $I \subseteq [d]$, consider the following: There exist disjoint subsets $I',I''\subseteq [d]$ where $I'\cup I'' \neq \emptyset$, and,
    \begin{enumerate}
    \setlength{\itemsep}{1pt}
        \item $Y$ is d-separated from $X_{I'}$ given $X_{I}$ in $\graph_{\overline{X_{I'}}}$.
        \item $X_{I''}$ is d-separated from $\shiftset{P}{Q}$ given $X_{I\cup I'},Y$;
        \item $X_{I''}$ is not d-separated from $Y$ given $X_{I\cup I'}$.
    \end{enumerate}
    If the feature set $I$ satisfies (1,2,3), then for every optimistic collection $\optset$, the collection $\optset \setminus \{I\}$ is also optimistic w.r.t. $\Qgraphshift$.
\end{proposition}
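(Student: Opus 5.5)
The plan is to show that whenever $I$ satisfies (1)--(3), the set $J \coloneqq I \cup I' \cup I''$ satisfies $\riskQ(h_{J,P}) < \riskQ(\hIP)$ for every $Q \in \Qgraphshift$. Then $I$ is never in $\argmin_{I\subseteq[d]} \riskQ(\hIP)$, so removing it from an optimistic collection $\optset$ cannot empty the intersection $\argmin \cap\, \optset$ for any admissible $Q$. It therefore suffices to produce, for each $Q$, a strictly better feature set. Iterating this removal is then immediate, since the conclusion holds for every optimistic collection.

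\textbf{Step 1.} Adding $X_{I'}$ does not change the conditional mean. I read condition (1) as a rule-3-type statement: in the graph with incoming edges into $X_{I'}$ removed, $Y$ is d-separated from $X_{I'}$ given $X_I$. I will use it to argue that $\E_Q[Y\mid X_I, X_{I'}]$ relates to $\E_Q[Y \mid X_I]$ in a controlled way, in the sense that $X_{I'}$ carries no additional information that could hurt. Concretely, I plan to show that $h_{I\cup I',P}$ has $Q$-risk no larger than $\hIP$.

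This step is the one I am least sure of. Condition (1) is stated in the mutilated graph, so translating it into an observational statement under $Q$ needs care. If it fails, my fallback is to treat $I'$ as an auxiliary set whose only role is to make condition (2) hold, and to compare $\hIP$ directly with $h_{J,P}$ via nested $Q$-projections, using that $\hIP$ lies in $\hypo_{J}$.

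\textbf{Step 2.} Condition (2) says $X_{I''} \perp_d \squarenode{P}{Q} \mid X_{I\cup I'}, Y$. By the global Markov property applied to the selection diagram, this gives $P(X_{I''}\mid X_{I\cup I'},Y) = Q(X_{I''}\mid X_{I\cup I'},Y)$. Together with invariance of $Y$'s mechanism, I will use this to argue that the extra predictive content of $X_{I''}$ about $Y$, beyond $X_{I\cup I'}$, is the same under $P$ and $Q$. The aim is a Pythagorean decomposition of the form
\[
\riskQ(\hIP) - \riskQ(h_{J,P}) \;\ge\; \E_Q\bigl[(h_{J,P}-h_{I\cup I',P})^2\bigr] + (\text{nonnegative term}).
\]
In the linear/Gaussian setting this follows by writing $h_{J,P}$ as $h_{I\cup I',P}$ plus the $P$-projection of the residual onto $X_{I''}$, and checking that this residual projection is also optimal under $Q$ by the invariance just derived.

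\textbf{Step 3.} Condition (3) says $X_{I''}$ is not d-separated from $Y$ given $X_{I\cup I'}$. Under faithfulness this gives a genuine dependence, so $h_{J,P}\neq h_{I\cup I',P}$ and the first term above is strictly positive. This yields the strict inequality needed in the first paragraph.

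The main obstacle is Step 2: turning the d-separation conditions, which are statements about conditional distributions, into a strict $L_2$-risk comparison for linear source-trained predictors under $Q$. I would try the argument first in the linear SCM setting of \eqref{eq:linear-scm}, and then state the general version under realizability.
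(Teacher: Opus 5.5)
Your overall route is the paper's: exhibit $J=I\cup I'\cup I''$ with $R_Q(h_{J,P})<R_Q(h_{I,P})$ for every admissible $Q$. Both arguments use (2) to replace $Q(X_{I''}\mid X_{I\cup I'},Y)$ by its $P$-counterpart, (3) plus faithfulness for strictness, and (1) to handle $I'$.

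The gap is the central claim of your Step 2, and it cannot be closed from (1)--(3). Condition (2) fixes only $Q(X_{I''}\mid X_{I\cup I'},Y)$. Invariance of $Y$'s own mechanism does not give $Q(Y\mid X_{I\cup I'})=P(Y\mid X_{I\cup I'})$, because $X_{I\cup I'}$ may contain shifted descendants of $Y$. Hence in general $Q(Y\mid X_J)\neq P(Y\mid X_J)$. The $P$-residual $Y-h_{J,P}$ is then not $Q$-orthogonal to $h_{J,P}-h_{I\cup I',P}$, and no Pythagorean decomposition holds under $Q$. Your fallback has the same defect: comparing via $h_{I,P}\in\mathcal H_J$ requires $h_{J,P}$ to be the $Q$-projection onto $\mathcal H_J$, i.e.\ $J$ to be $(P,Q)$-invariant, which (1)--(3) do not imply.

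Step 1 is also false as stated. Condition (1) is the rule-3 condition $Q(Y\mid do(X_{I'}),X_I)=Q(Y\mid X_I)$, a no-causal-effect statement that says nothing about the observational $Q(Y\mid X_I,X_{I'})$. Any childless child of $Y$ satisfies it, including a shifted one.

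Here is a concrete counterexample. Write $s$ for the shift node, and let $P$ be $Y=U_Y$, $X_1=Y+U_1$, $X_2=Y+U_2$ with i.i.d.\ $\mathcal N(0,1)$ noise and shift set $\{1\}$. Then $h_{\emptyset,P}=0$, $h_{\{1\},P}=x_1/2$, $h_{\{2\},P}=x_2/2$, and $h_{\{1,2\},P}=(x_1+x_2)/3$.

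First, $I=\{2\}$, $I'=\{1\}$ satisfies (1), since $X_1$ is isolated in the mutilated graph. Yet a mean shift $\mu$ on $X_1$ gives $R_Q(h_{\{1,2\},P})=1/3+\mu^2/9>1/2=R_Q(h_{\{2\},P})$ once $\mu^2>3/2$. This refutes Step 1.

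Second, take $I=\{1\}$, $I'=\emptyset$, $I''=\{2\}$. Condition (1) holds vacuously. Condition (2) holds because the only path $s\to X_1\leftarrow Y\to X_2$ is blocked at the conditioned non-collider $Y$. Condition (3) holds via $Y\to X_2$. Let $Q$ change only the mechanism of $X_1$, to $X_1=2Y+\gamma U_1$ with $0<\gamma^2<4/5$. The $Q$-risks of $h_{\emptyset,P}$, $h_{\{1\},P}$, $h_{\{2\},P}$, $h_{\{1,2\},P}$ are $1$, $\gamma^2/4$, $1/2$, $(1+\gamma^2)/9$. So $\{1\}$ is the unique minimizer, and the inequality you aim for fails. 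Moreover $\{1\}$ must lie in every optimistic collection, so the proposition itself is false for arbitrary mechanism shifts.

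The paper's proof has the same hole. Its strict step ``worse under $P(X_{I''}\mid Y,X_{I\cup I'})$'' sits inside an outer expectation over $Q(Y,X_{I\cup I'})$. But the gain of $\mathbb E_P[Y\mid X_J]$ over $\mathbb E_P[Y\mid X_{I\cup I'}]$ exists only after averaging $Y$ over $P(Y\mid X_{I\cup I'})$, not pointwise in $Y$.

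An extra hypothesis such as $Y\perp_d s\mid X_{I\cup I'}$ repairs the argument. Together with (2) it makes $J$ $(P,Q)$-invariant, so $h_{J,P}=\mathbb E_Q[Y\mid X_J]$. Your nested-projection fallback then yields $R_Q(h_{J,P})<R_Q(h_{I,P})$, with strictness from (3) and faithfulness, and condition (1) is not even needed.
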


We note that this result is not a \emph{complete} characterization of \emph{minimal} optimistic collections; by iterative application of the rule one can arrive at an  irreducible optimistic collection, and still, that collection might contain a strict optimistic collection. Thus, an idealistic goal is to construct a minimal optimistic collection,
\begin{equation*}
    \optsetmin \text{ minimal optimistic w.r.t. }\mathcal{Q} \text{ if }\quad  \optsetmin  \in \argmin_{\optset \text{ optimistic w.r.t. } \mathcal{Q}} | \optset |.
\end{equation*}

A parametric search over the space of SCMs may allow us to verify whether a given collection $\optset$ is minimal and optimistic w.r.t.\ $\Qgraphshift$ (i.e., given the full causal knowledge): One may consider the parametric SCM pairs $(\SCM_P,\SCM_Q)$, and constrain them to generate the source and target distributions $P,Q$ while inducing the selection diagram $\graph,\shiftset{P}{Q}$, and then verify if every feature subset $I \in \optset$ is the most $Q$-predictive $(P,Q)$-invariant model for \emph{some} plausible pair $(\SCM_P,\SCM_Q)$. For discrete variables, there exists such a canonical parameterization of SCMs, cf., \cite{jalaldoust2024partial}, but a canonical parametric form for real-valued SCMs remains an open research question. With additional assumptions it may be possible to find a minimal optimistic collection $\optsetmin$: For instance, in linear structural models under additive shifts, deciding whether a set $I$ belongs to $\optsetmin$ w.r.t. $\Qgraphshift$ reduces to a quadratic feasibility problem in the shift parameter, and finding $\optsetmin$ corresponds to solving $2^d$ such feasibility problems.

Beyond computational intractability, the next result outlines another issue with the goal of finding minimal optimistic collections $\optsetmin$ for this linear setting: \\

\begin{proposition}\label{prop:impossibility-for-optimistic}
    \begin{enumerate}
        \item There exists a distribution $P$ induced by a linear Gaussian SCM $\SCM_P$ on $(X,Y)$, and a shift set $\shiftset{P}{Q}$, for which, given full causal knowledge $\Qgraphshift$, we have for any $\optset$: $|\optset| = 2^d$.
        \item There exists a distribution $P$ induced by a linear Gaussian SCM $\SCM_P$  with $|\Pa(Y)| = d-1$ and $|\shiftset{P}{Q}| = 1$, for which, given  $\Qgraphshift$, we have for any $\optset$: $|\optset| \geq 2^{d-1} +1$.
    \end{enumerate}
\end{proposition}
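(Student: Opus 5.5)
The plan is to prove both parts by a \emph{forcing} argument. For every feature set $I$ that should appear in every optimistic collection, I will exhibit an admissible target $Q_I \in \Qgraphshift$ for which $I$ is the \emph{unique} minimizer of $I' \mapsto \riskQ(h_{I',P})$ over $2^{[d]}$. By the definition of an optimistic collection, $\optset$ must then contain $I$. Counting the forced sets gives the bounds.

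Admissible targets are linear Gaussian SCMs with the same graph $\graph$ as $\SCM_P$. They may differ from $\SCM_P$ only in the structural equations (coefficients, intercepts, noise variances) of variables in $\shiftset{P}{Q}$, and they must actually differ there. Since $\nsource=\infty$ in this idealized statement, $h_{I,P}$ is the population OLS fit $\beta_I^{P}$ on $X_I$. Its target risk is computed in closed form from the first and second moments of $Q$.

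\textbf{Part 1.} I take $Y = U_Y$ and features $X_j = Y + U_j$ for $j\in[d]$, all children of $Y$, with $\shiftset{P}{Q}=[d]$. Every $j$ carries information about $Y$ given any other subset. Hence $\riskP(h_{I,P})$ is strictly decreasing under strict inclusion of $I$.

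Fix $I$ and let $Q_I$ shift the intercepts as follows:
\begin{itemize}
\item $X_j \gets Y + U_j + t$ for $j\notin I$, with $t$ large;
\item $X_j \gets Y + U_j + \varepsilon$ for $j \in I$, with $\varepsilon>0$ tiny, so that these mechanisms still formally differ.
\end{itemize}
Then $\riskQ(h_{I',P}) = \riskP(h_{I',P}) + \bigl(\sum_{j\in I'} \beta^P_{I',j}\,\delta_j\bigr)^2$, where $\delta_j\in\{\varepsilon,t\}$ is the mean shift of $X_j$.

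Every $I'\not\subseteq I$ pays a bias of order $t^2$, because all OLS coefficients here are positive and so cannot cancel. Every $I'\subsetneq I$ pays a fixed strictly positive gap in $P$-risk. Choosing $t$ large and then $\varepsilon$ small relative to the smallest such gap makes $I$ the unique minimizer. This covers all $2^d$ sets, including $I=\emptyset$ (all shifts large), so $|\optset| = 2^d$.

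\textbf{Part 2.} Let $X_1,\dots,X_{d-1}$ be independent parents with $Y = \sum_{k<d} a_k X_k + U_Y$, and let $X_d = Y + U_d$ be a child with $\shiftset{P}{Q}=\{d\}$. Sets $I'\not\ni d$ are $(P,Q)$-invariant, so their $Q$-risk equals their $P$-risk. Among them, $[d-1]$ is the strict minimizer. So at most one set without $d$ is forced, namely $[d-1]$. It is forced by taking $Q$ with a large intercept shift of $X_d$: every set containing $d$ then pays order $t^2$.

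The remaining $2^{d-1}$ sets have the form $J\cup\{d\}$ with $J\subseteq[d-1]$. For each $J$ I will design the $Q$-mechanism of $X_d$ so that the \emph{fixed} $P$-fit $\beta^P_{J\cup\{d\}}$ becomes almost Bayes-optimal under $Q$. Concretely, I let $X_d \gets Y + c^\top X_{[d-1]} + \tilde U_d$ in $Q$ with small noise variance. I solve for $c$ so that
\[
\beta^P_{J}{}^{\top} X_J + \beta^P_{d} X_d \approx Y
\]
up to $\beta^P_d \tilde U_d + \beta_d^P U_Y$-type terms. The coefficient $c$ is supported on the parents, including those in $[d-1]\setminus J$ that the predictor does not see. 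It absorbs the parents' contribution to $Y$ that $\beta^P_J$ misses and cancels the $X_J$ mismatch. This is a linear system in $c$, solvable since $\beta^P_d\neq 0$. As the variance of $\tilde U_d$ tends to zero, $\riskQ(h_{J\cup\{d\},P})$ approaches the $Q$-Bayes risk. Meanwhile every other $P$-fit uses different coefficients on $(X_{[d-1]},X_d)$ and therefore keeps a residual bounded away from zero. Together these give $2^{d-1}+1$ forced sets.

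\textbf{Main obstacle.} The hard part is this last uniqueness claim in Part 2. I must show that for generic $(a,\Sigma)$ the $P$-OLS coefficient vectors of distinct sets $J\cup\{d\}$, padded with zeros, are pairwise distinct and also distinct from the Bayes predictor of $[d-1]$. If so, no other candidate can match the engineered $Q$ in the limit. Distinctness of the zero patterns, plus a measure-zero exclusion of cancellations, should suffice; causal faithfulness plays the role of ruling these out. I would verify it first for $d=2$ and then argue by the support pattern of $\beta^P_{J\cup\{d\}}$. A second check is that each constructed $Q_I$ genuinely changes only mechanisms in $\shiftset{P}{Q}$ while keeping the graph $\graph$.
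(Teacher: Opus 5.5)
Your Part 1 is correct and is essentially the paper's Example 1. The paper leaves the coordinates in $I$ unshifted and shifts the rest by a large $M$; your $\varepsilon$-shift is a sensible refinement that keeps the shift set literally equal to $[d]$.

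Part 2, however, has a genuine gap. In your target mechanism $X_d \gets Y + c^\top X_{[d-1]} + \tilde U_d$ the coefficient on $Y$ is fixed to $1$. After choosing $c$, the residual of $h_{J\cup\{d\},P}$ is $(1-\beta^P_d)U_Y - \beta^P_d\tilde U_d$, so its $Q$-risk tends to $(1-\beta^P_d)^2\,\mathrm{Var}(U_Y)>0$. The $Q$-Bayes risk instead tends to $0$, because $Y = X_d - c^\top X_{[d-1]} - \tilde U_d$. So the claim that the target candidate ``approaches the $Q$-Bayes risk'' is false, and without it nothing guarantees that $J\cup\{d\}$ beats the other candidates.

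It can in fact fail. Take $d=3$, $Y=X_1+X_2+U_Y$, $X_3=Y+U_3$ with unit variances, and $J=\{1,2\}$. The $P$-fits are $h_{[3],P}=\tfrac12(X_1+X_2+X_3)$, $h_{\{3\},P}=\tfrac34X_3$ and $h_{\{1,3\},P}=\tfrac13X_1+\tfrac23X_3$. Your recipe gives $c=0$, i.e.\ $Q$ only shrinks the noise of $X_3$. As $\mathrm{Var}(\tilde U_3)\to0$ one gets $R_Q(h_{[3],P})\to\tfrac14$, but $R_Q(h_{\{3\},P})\to\tfrac{3}{16}$ and $R_Q(h_{\{1,3\},P})\to\tfrac29$. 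These inequalities are strict, so the failure persists for nearby generic parameters.

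The missing idea is to rescale the $Y$-coefficient: set $X_d\gets(Y-\beta_J^{P\top}X_J)/\beta^P_d$, plus vanishing noise if you want a nondegenerate $Q$. In the paper's unit-coefficient model, where $\beta^P_J=\tfrac{1}{d-k+1}\mathbf{1}_J$ and $\beta^P_d=\tfrac{d-k}{d-k+1}$, this is exactly the paper's choice $X_d=\tfrac{d-k+1}{d-k}Y-\tfrac{1}{d-k}\sum_{j\in J}X_j$. Then $h_{J\cup\{d\},P}$ reproduces $Y$ exactly, and your ``main obstacle'' disappears. In the noiseless limit, a predictor $\gamma^\top X_{[d-1]}+\delta X_d$ has zero $Q$-risk only if $\delta=\beta^P_d$ (to cancel $U_Y$) and then $\gamma=\beta^P_J$. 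No other $P$-fit attains this, since their supports differ and their coefficients are nonzero. Every set without $d$ has $Q$-risk at least $\mathrm{Var}(U_Y)$.

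Second, the admissibility check you postpone is not cosmetic. In your source SCM $X_d=Y+U_d$, so $X_d$ has no parents among $X_{[d-1]}$. Any target mechanism that reads $X_{[d-1]}$ therefore changes the graph and is not an admissible target under full causal knowledge. This applies to your mechanism and also to the corrected one, which needs $X_J$ to cancel $\beta_J^{P\top}X_J$ whenever $J\neq\emptyset$. You must build the edges $X_k\to X_d$, $k<d$, into $G$ from the start, as the paper does: zero coefficients in $P$, then a continuity argument to make them strictly positive while preserving the strict risk inequalities. With these two changes your Part 2 becomes the paper's proof.
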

We prove \Cref{prop:impossibility-for-optimistic} in \Cref{sec:proof-of-impossibility-for-optimistic}. In particular, our result implies that even in benign settings with full causal knowledge where most features are causal, the mechanism shift is extremely sparse and known, $\optset$ generally cannot be reduced beyond exponential size.



\section{Future directions}\label{sec:limitations}
Our work makes the first step towards establishing scenarios under which  (partial) causal invariances enable fast supervised domain adaptation. Our setting already accounts for finite-sample effects in both source and target data, as well as model misspecification, both of which generalize the typical population-level assumptions in causal domain generalization. However, our theoretical results are, so far, limited to the linear regression setting and rely on further regularity assumptions such as boundedness of the noise and the function class. Below, we outline the main future directions of our work as well as potential challenges which might arise while extending our results.

\begin{enumerate}
   \item \textbf{Extension beyond the linear regression setting.}
Our analysis relies on linearity in multiple places. First, Step 1 of \Cref{alg:iterative-step-agg} makes use of so-called \emph{weak confidence sets} first introduced in \cite{hanneke2025adaptivesampleaggregationtransfer}. For a given hypothesis class $\mathcal{H}$ and any source model $\hempIP$,  guarantees of style $\min \{ \excessQ(\hempQ), \excessQ(\hempIP)\}$ can be easily given if one can compute the weak confidence sets $\hat{\hypo}_P$, $\hat{\hypo}_Q$ and output an element of their intersection. \cite{hanneke2025adaptivesampleaggregationtransfer} compute weak confidence sets for classification. However, weak confidence sets for linear regression enjoy convenient geometric properties and are easy to compute, both theoretically and empirically. An exciting direction for future work is thus to compute such confidence sets for other parametric classes, enabling guarantees for Step 1 of \Cref{alg:iterative-step-agg}. 
\item \textbf{Extension beyond bounded noise. } The boundedness assumption \Cref{asm:tail-of-X-and-Y} is required to establish high-probability guarantees for model selection aggregation (cf. \Cref{apx:model-aggregation}). \cite{gaiffas2011hyper} provide extended guarantees for model selection aggregation for sub-exponential and sub-Gaussian noise. The resulting rate $\erroragg = \frac{(\log(|\set|) \log(1/\alpha) \log n}{n}$ has an additional multiplicative $\log n$ dependency. Our results immediately transfer to the setting of bounded linear function class and sub-Gaussian noise via this guarantee, however, it is unclear whether guarantees without an additional multiplicative $\log n$ dependency are possible for unbounded, but "well-behaved" variables. In particular, it remains an important open question to establish optimal high-probability guarantees for model selection aggregation under fully sub-Gaussian designs. 
\item \textbf{Other adaptive procedures. } Although \Cref{alg:iterative-step-agg} achieves a fast adaptation guarantee, it is an interesting open question whether the same guarantee can be established for a procedure based on an end-to-end loss function. Further, \Cref{alg:iterative-step-agg} requires as input two constants $C_1,C_2$ which have to be fixed a priori or estimated from (limited) target data. It remains an important open question, especially for practitioners, whether a hyperparameter-free adaptation method for supervised adaptation under partial causal knowledge could be developed, and whether such a method could match our fast convergence guarantees theoretically.

    \item \textbf{Assumption of partially known causal mechanisms.}  
    Our assumptions go beyond the common assumption of full identifiability in causality-oriented robustness frameworks \citep{peters2016causal,heinze2018invariant,pfister2019invariant}. Unlike prior work that either assumes identifiability of the invariant representation or restricts distribution shift directions (e.g., \citep{rothenhausler2021anchor,gnecco2023boosted}), we do not impose such restrictions. 
    In \Cref{sec:new-causal}, we discuss how sets of candidate invariant models can be obtained from structure learning or multi-environment approaches.
    However, in practice, especially under confounding or model misspecification, these methods may fail to recover the correct set $\mathcal{I}$ of plausible invariant models. The performance of our adaptive procedure may thus degrade if this set is poorly estimated. This highlights the need for further research on \emph{partial-identifiability-aware} methods that output multiple candidate invariant models from multi-environment data, only committing to a single choice when full identifiability is given.
     \item \textbf{Extension to possibly invariant representations.}  
    A broad line of work (e.g., \citep{arjovsky2019invariant,ahuja2020invariant,wu2023prominent}) aims to learn a \emph{representation} $\phi(X)$ of the covariates such that the conditional distribution $P(Y \mid \phi(X))$, or the corresponding prediction risk, is invariant across environments. This setting generalizes ours, since restriction onto a subset of features is a special case of a representation map $\phi$. It is well known that in misspecified scenarios, such as insufficient distributional variability across training domains, there may exist multiple local minima of the invariance-penalized loss, potentially leading the method to select an incorrect, non-invariant representation. Due to the non-convex nature of these objectives, developing methods that output multiple invariant representations and allow interpolation between them using limited target data remains a challenging and important direction for future work.

\end{enumerate}
\clearpage
\section{Additional details on the real-world experiments}\label{apx:causal-chambers}
\subsection{The Causal Chamber dataset}

The Causal Chambers \cite{gamella2024causal} are physical testbeds designed to generate large-scale real-world datasets with known causal structure and controlled interventions. One of the chambers is a \emph{light tunnel}, which consists of a controllable RGB light source, two linear polarizers and multiple light sensors positioned before, between and after the polarizers. In the following, we describe two experiments on light tunnel data: 1) a linear regression experiment with light tunnel sensors as variables; 2) an image regression experiment in which the candidate models are varying linear probes based on a fixed CLIP representation. 
\subsubsection{Causal Chambers: linear regression experiment}\label{apx:cc-linear-regression-details}
\textbf{Data.} For the first experiment, we consider the $\mathrm{lt\_interventions\_standard\_v1}$ dataset consisting of sensor measurements under interventions. 
The light tunnel setup provides measurements of $41$ variables, out of which $32$ can be intervened on. Since the tunnel as a physical system is well-understood, the causal ground truth is available (cf. \Cref{fig:causal-chamber-details}). As demonstrated in \cite{gamella2024causal} (Task b1), the light tunnel can be used as a benchmark for out-of-distribution (OOD) generalization, since one has access to both observational data and data from various interventions. In the following, we follow the setting of Task b1 with some light modifications. We consider a subgraph of the light tunnel depicted in \Cref{fig:causal-chamber-details}. The subgraph contains the most important sensor variables. The omitted variables act as hidden confounders (depicted as bidirected dashed edges).

\begin{figure}[t]
    \centering
        \includegraphics[width=0.4\linewidth]{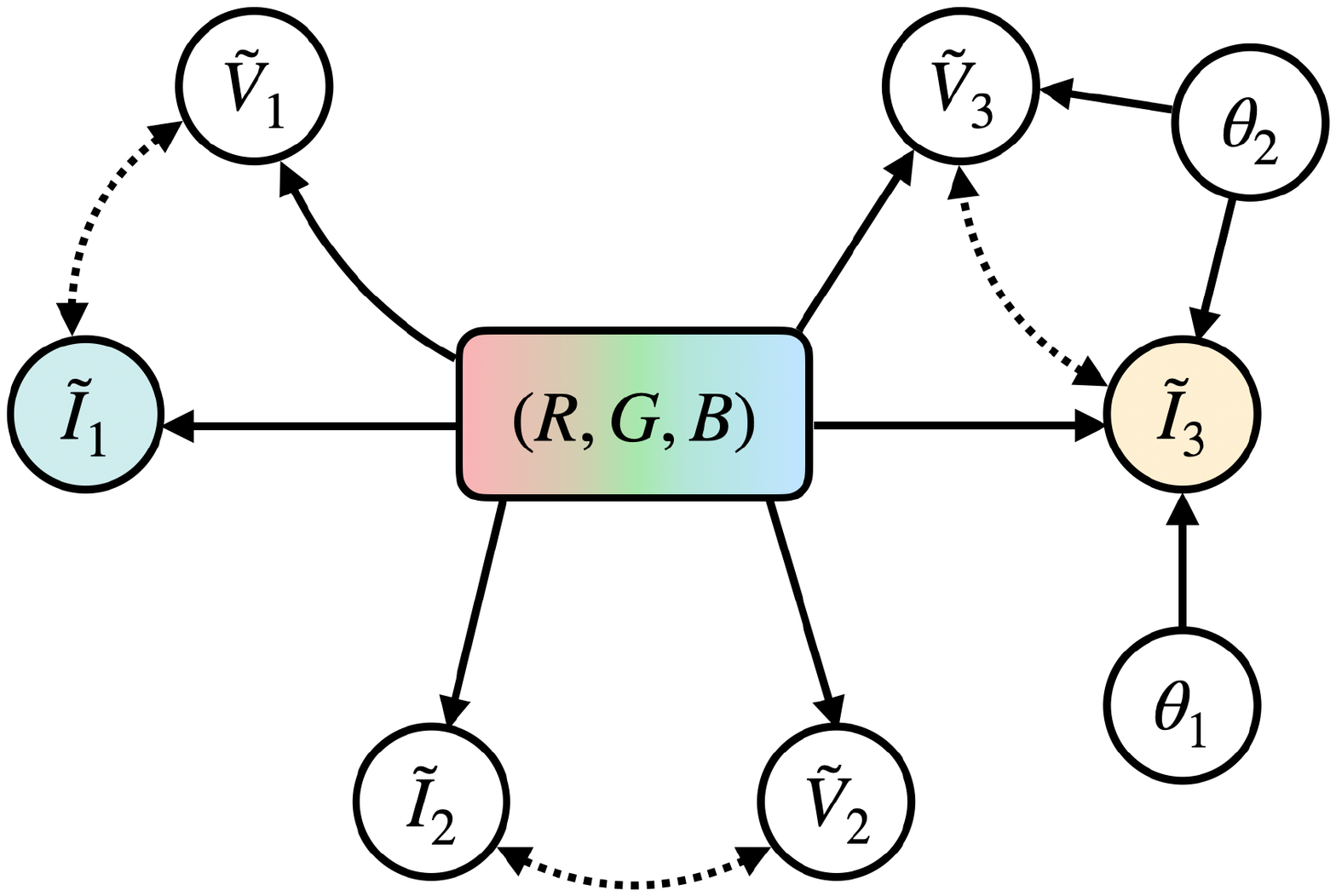}

    \caption{The causal graph of the light tunnel variables used in our experiment. }
    \label{fig:causal-chamber-details}
\end{figure}

\textbf{Multi-domain setup.} In our linear regression experiment, we consider $\tilde{I}_1$ as the target variable and exclude the polarizer angles $\theta_{1,2}$, resulting in $9$ variables in total. As multi-environment data for (partial) causal discovery, we take the reference (non-intervened) environment as well as two domains generated by interventions on unobserved variables $\theta_1$ and $T^I_2$, resulting in 3 domains in total with $10000$ samples each. We denote this collection as $\mathcal{E}_{\mathrm{train}}$. 

\textbf{Candidate sets.} We run invariant causal prediction (ICP) \cite{peters2016causal} on $\mathcal{E}_{\mathrm{train}}$. ICP is a hypothesis-testing based method which iterates over all subsets of features and for each subset $I$, tests the null hypothesis $H_{0,I}(\mathcal{E}_{\mathrm{train}})$: "there exists a linear relationship between $X_I$ and $Y$ with residuals invariant across all environments". We denote with $\invariantset$ the collection of all feature sets $I$ for which $H_{0,I}(\mathcal{E}_{\mathrm{train}})$ could not be rejected. This collection corresponds to the collection discussed in \Cref{sec:multiple-environments}. We compute the resulting collection of candidate models $\hypoinit = \{\hempIP: I \in \invariantset \}$.  

\textbf{(Domain generalization) baselines.} The output of ICP as a method is the intersection
$
    I_{\ICP} = \bigcap_{I \in \invariantset} I.
$
Thus, for ICP as a baseline, we fit a linear model trained on this subset of features and denote it by ICP (intersection). As another baseline, we fit the anchor regression estimator \cite{rothenhausler2021anchor} on $\mathcal{E}_{\mathrm{train}}$ (we pick the best regularization parameter $\lambda$ by cross-validation). The anchor regression estimator aims to be robust to bounded additive shifts under potential confounding between $X$ and $Y$, and, depending on the strength of the regularization parameter, interpolates between ERM and IV regression. We fit the source model on pooled source data. 

\textbf{Target-dependent baselines.} As the target distribution, we consider the intervention on $T^V_1$, which in the light tunnel dataset induces a strong distribution shift. We vary the available number $\ntarget$ of labeled target samples and compute the empirical target model $\hempQ$ based on the small sample as well as the oracle target model, which is computed on a large held-out target dataset. As a "naive" sDA baseline, we take ERM over the candidate model set. We split the target data and include the empirical target model as a candidate for the ERM baseline. 

\textbf{Our method.} For our procedure \Cref{alg:iterative-step-agg}, we use (inside the algorithm) exponential weights aggregation and pick a significance level $\alpha = 0.05$. The main challenge of \Cref{alg:iterative-step-agg} is picking the constants $C_1$ and $C_2$ which control the selection against the target model and refinement around the aggregator, respectively. It is well known from linear regression literature that the optimal value is $\Creg = \sigma_{\mathrm{res}}^2$ where $\sigma^2_{\mathrm{res}} = \Var(Y - \hQ(X))$ is the variance of the residuals of the optimal linear model. We use adaptive values for $C_1, C_2$: given a small target sample, we fit a linear model on one half of the data and estimate the variance of its residuals on the other half. We then compute the upper bound $\widehat{\sigma}_{\mathrm{res}}^2$ for the $95\%$-confidence interval for $\sigma_{\mathrm{res}}^2$ and average predictions on both folds. We then set $C_1, C_2 = \widehat{\sigma}_{\mathrm{res}}^2$. We note that \Cref{thm:main-result} sets a requirement for the relationship of $C_1,C_2$ and $\Creg, \Cagg$. This relationship is not guaranteed to hold with our heuristic choice of $C_1$ and $C_2$, but leads to the best performance in practice.   As with other target-dependent baselines, we add the empirical target model trained on another split to our set of candidates. However, we only do this if Step 1 of \Cref{alg:iterative-step-agg} is passed. Otherwise the full empirical target model is returned. 

\textbf{Results. } We evaluate all methods on $6000$ held-out target samples and plot the MAE (mean absolute error) as a function of the available target sample size $\ntarget$. The results are averaged over $50$ runs. The shaded regions depict standard error of the mean. We observe that the source model has the highest target MAE, followed by anchor regression and ICP (intersection). Our method achieves low target risk in the low-sample regime with advantage over the naive ERM in the larger sample regime. 

\subsubsection{Causal Chamber: image regression experiment with linear probes}\label{apx:cc-linear-probes}
\begin{figure}[t]
    \centering

    \begin{subfigure}[t]{0.7\linewidth}
        \centering
        \includegraphics[width=\linewidth,height=0.22\textheight,keepaspectratio]{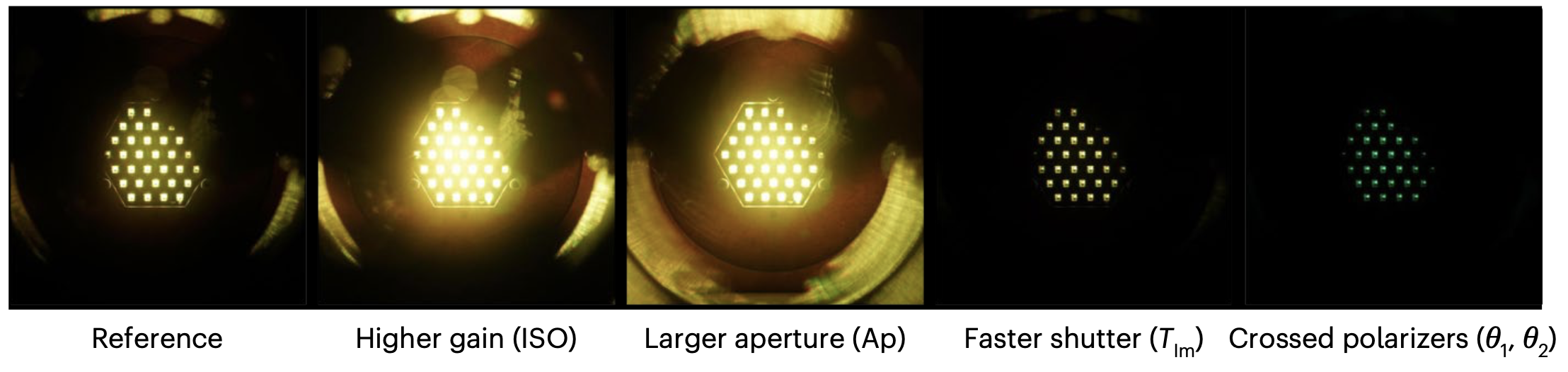}
        \caption{Light tunnel image data under interventions (Figure due to \cite{gamella2024causal}).}
        \label{fig:cc_light_tunnel_interventions}
    \end{subfigure}
    \begin{subfigure}[t]{0.4\linewidth}
        \centering
        \includegraphics[width=\linewidth]{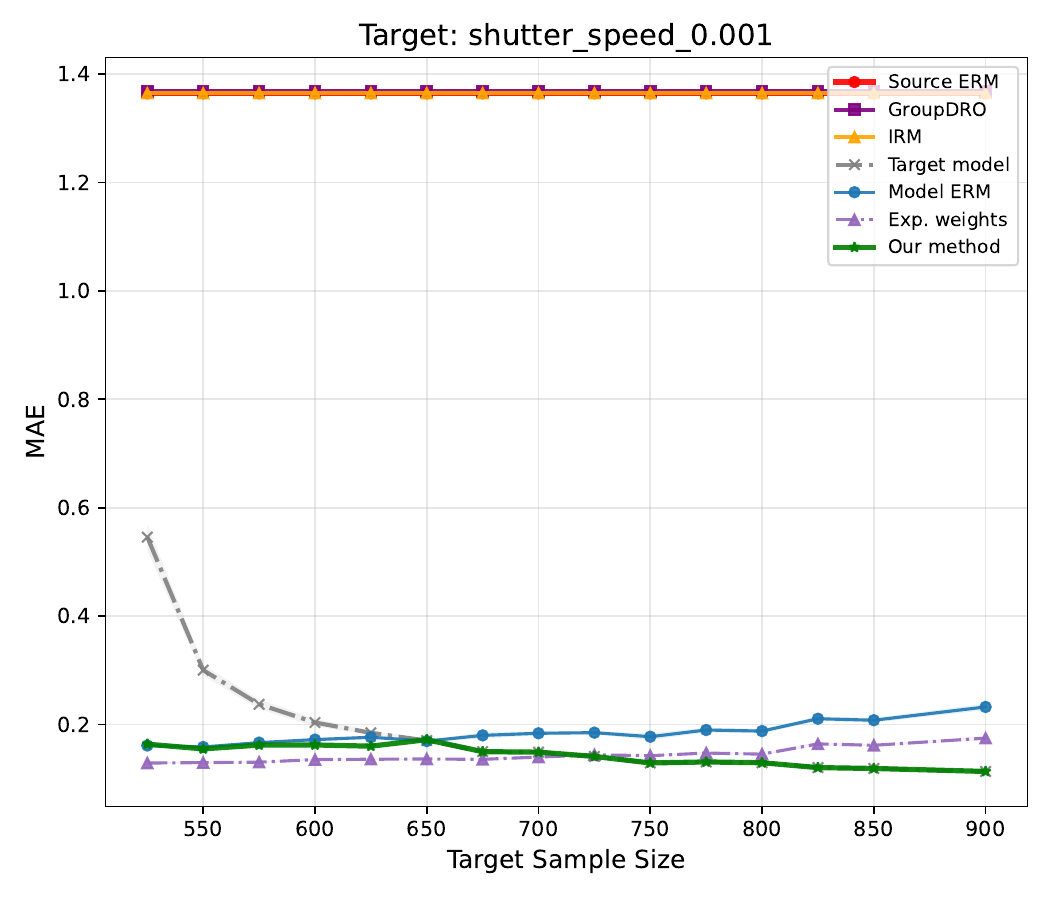}
        \caption{MAE of various methods as a function of $\ntarget$ for the image regression experiment.}
        \label{fig:cc_image_regression_mae}
    \end{subfigure}

    \vspace{0.8em}

    \caption{(a) Examples of light tunnel image data under various interventions on the camera and tunnel setup. (b) MAE (mean absolute error) of various domain generalization/adaptation methods as a function of $\ntarget$ for the \textbf{image regression} experiment with linear probe candidates based on a frozen CLIP backbone. Results are averaged over 5 runs. The shaded regions depict the standard error of the mean (SE).}
    \label{fig:cc_image_regression_with_lt}
\end{figure}

\textbf{Problem setting.} In this experiment, we consider an image regression problem under distribution shift. In particular, we consider \emph{more general} candidate models: 1) instead of pure linear regression, we compute linear probes based on a pretrained representation. 2) Since feature-subset models are not interpretable in this setting, we compute models satisfying \emph{subsets of invariances}, namely, for any subset $\mathcal{E} \subseteq \mathcal{E}_{\mathrm{train}}$ of training domains, we compute a model minimizing a certain risk measure on $\mathcal{E}$. We then perform supervised adaptation with the generated candidates.

\textbf{Data. } We consider the $\mathrm{lt\_color\_regression\_v1}$ dataset from the Causal Chambers benchmark \cite{gamella2024causal}. This dataset is collected from the light tunnel augmented with a camera, producing \textbf{images} from inside the tunnel. Along with the images, we have access to the ground-truth $(R,G,B)$ values of the LEDs before passing the tunnel. Our task is to predict the "brightness" value $\frac{1}{3}(R+G+B)$ from the light tunnel images. The dataset provides 10 environments, including the reference environment and 9 interventional environments with changes in the camera parameters (e.g., aperture and shutter speed), polarization filters and LED brightness. 

\textbf{Multi-domain setup.} As before, to ensure an under-identified scenario, as source data, we take data from three domains:  $\mathrm{reference}$, $\mathrm{aperture\_5.0}$, and $\mathrm{shutter\_speed\_0.002}$.

\textbf{Pretrained backbone. } We extract features from the image data using the frozen encoder of the pretrained CLIP ViT-B/32 model with "openai" weights. We encode the images and normalize the resulting $512$-dimensional representations. 

\textbf{Candidate set of linear probes.} To obtain the candidate models, for every non-empty subset $\mathcal{E} \subseteq \{0,1,2\}$ of the source environments, we compute the linear model on the CLIP features minimizing 1) the pooled loss (ERM objective); 2) the IRM objective \cite{arjovsky2019invariant}; 3) the groupDRO objective \cite{sagawa2019distributionally}. For IRM, we take hyperparameters $\mathrm{epochs} = 500$, $\mathrm{lr} = 10^{-3}$, $\lambda = 1.0$ and use Adam as optimizer. For groupDRO, we take $\mathrm{epochs} = 500$, $\mathrm{lr} = 10^{-3}$, $\eta = 0.1$. In total, we obtain $3 \times 7 = 21$ candidate models. 

\textbf{Baselines. } As baselines, we consider the groupDRO and IRM models trained on all sources, as well as the source ERM model. The target-dependent baselines include the target empirical model, exponential weights aggregation and naive ERM over models. 

\textbf{Our method. } We implement our procedure \Cref{alg:iterative-step-agg} as in \Cref{apx:cc-linear-regression-details}. 

\textbf{Results. }  We evaluate all methods on 6000 held-out target samples and plot the MAE (mean absolute error) as a function of the target sample size. The shaded regions represent standard error of the mean. We observe that source ERM, groupDRO and IRM all result in high target risk. In fact, due to insufficient diversity of environments and high expressivity of the pretrained features, all three algorithms result in very similar models (compare with \Cref{sec:toy-example}). In contrast, our method \Cref{alg:iterative-step-agg} is the only target-dependent procedure which successfully interpolates between the target MAE of the best candidate model and the target linear probe. Importantly, the naive ERM (with the target model included as a candidate) does not achieve such interpolation and exhibits negative transfer. 

All listed experiments were run on an Apple M2 Max GPU in under 1 hour.

\subsection{Experiments on gene expression data}\label{apx:gene-experiments}

\textbf{Source multi-environment setup and target data.} We analyze single-cell gene-expression data from a pooled CRISPRi Perturb-seq experiment in RPE1 cells \cite{replogle2022mapping}. The dataset includes $424$ domains, each induced by a different gene knockdown. We consider a collection of $10$ most highly expressed genes in the observational (non-targeting) environment. We collect $10$ environments corresponding to interventions (knockdowns) of the $10$ observed genes, resulting in a total of $11$ environments. We randomly pick one of the genes as the target variable $Y$ and use the rest as features $X$. To ensure stability of the causal mechanism of $Y$, we discard the environment in which $Y$ is knocked down, resulting in a pool of $10$ environments. We center each interventional environment by subtracting the $(X,Y)$ mean within the observational environment. As source data, we take the observational environment together with $3$ randomly selected knockdown environments. As the target environment, we pick the most challenging one, characterized by the biggest shift in $\E[XX^\top]$. 

\textbf{Candidate set. } Due to strong confounding induced by a large number of unobserved genes, and possibly nonlinear relationships in the data, $\ICP$ \cite{peters2016causal} fails to accept any feature subsets, even on subsets of environments. As candidate sets, we thus consider \emph{all feature subsets} over the $d$ predictors, resulting in $512$ candidates. This collection corresponds to $\suffcollection$ in \Cref{sec:new-causal}, and to \emph{no causal knowledge}: although we believe that the gene expression data has a fixed underlying causal graph, and that some causal mechanisms remain invariant, we cannot extract any information about its structure from multiple domains. For each candidate feature set, we fit a linear model $\hempIP$ on the source data pooled from all $4$ environments.

\textbf{Baselines. } As baselines, we include the source model trained on pooled data, the ICP model (which for the gene data results in a constant linear model), and anchor regression trained on source environments. As in previous experiments, as sDA baselines, we include ERM over candidate models + target model as well as exponential weights aggregation, for which we take the temperature $\beta = 2\hat{\sigma}_Y^2$ (in the notation of \Cref{apx:model-aggregation}, so that the weights are $\lambda_m \propto \exp(-\ntarget \riskemp(h_m)/\beta) = \exp(-\ntarget \riskemp(h_m)/(2\hat\sigma_Y^2))$). We implement our method as discussed in \Cref{apx:cc-linear-regression-details}. 

\textbf{Results. } We iterate over $\ntarget = \{5,10,20,25,30,40,50,60\}$ and average over $20$ runs. We evaluate the MSE of all methods on $62$ held-out target samples (due to the small number of samples in the target environment, the oracle target model cannot be reliably approximated). The shaded regions depict standard error of the mean. 

\clearpage
\section{Background on model selection aggregation}\label{apx:model-aggregation}
Given a finite dictionary of functions $\hypo = \{ h_1,...,h_M \}$ and a finite dataset $\{(X_i,Y_i)\}_{i=1}^n$, the goal of model selection (MS) aggregation is to construct a procedure whose risk is closest to the best function in the dictionary. We emphasize that despite the name \emph{model selection}, MS aggregation does not actually select a model from the dictionary. Instead, the goal of MS aggregation is to be as good as the best model in the dictionary. We consider the risk induced by the mean squared error (MSE). Formally, a good model aggregation procedure should satisfy
\begin{equation}\label{eq:agg-oracle-inequality}
    \risk(\hagg) \leq \min_{m \in [M]} \risk(h_m) + \Delta_{n,M}
\end{equation}
with the remainder $ \Delta_{n,M}$ as small as possible. In the classical line of work on MS aggregation \citep{rigollet2012kullback,tsybakov2003optimal} it has been established that the smallest possible order for $ \Delta_{n,M}$ is $\log M /n$. In contrast, any selector from the dictionary $\hypo$, including the straightforward choice of ERM over the dictionary, cannot satisfy \eqref{eq:agg-oracle-inequality} with a remainder smaller than $\sqrt{(\log M) / n}$. It has been concluded that any procedure satisfying \eqref{eq:agg-oracle-inequality} with the fast remainder term must in some sense "hedge", or "average" candidate models: $h_{\lambda} = \sum_{m=1}^M \lambda_m h_m$ with $\lambda \in \Lambda$ (the simplex). 

\textbf{Exponential weights aggregation.} One of the earliest and most practical MS aggregation estimators is based on \emph{exponential weights} \citep{yang1999model,audibert2007progressive,juditsky2008learning}. Given a prior $\pi \in \Lambda$ over models and a temperature parameter $\beta > 0$, the exponential weights are defined by
\begin{equation*}
    \lambda^{\mathrm{EXP}}_m \propto \pi_m \exp(-n \riskemp(h_m) / \beta).
\end{equation*}
The most straightforward choice of the prior is $\pi = (1/M,...,1/M)$ (uniform). Exponential weights achieve the optimal aggregation rate in expectation. However, \cite{dai2012deviation} have shown that exponential weights are generally suboptimal in the sense that \eqref{eq:agg-oracle-inequality} cannot hold with high probability, unless additional Bernstein assumptions hold. 

\textbf{Q-Aggregation.}\footnote{We remark that Q-aggregation does not refer to our target distribution $Q$ and is merely a notational coincidence.} Q-aggregation \cite{lecue2014optimal} is explicitly designed to be optimal in deviation and uses higher interpolation orders between models. For a prior $\pi$ and hyperparameters $\nu, \beta$, the weights of the Q-aggregator minimize the penalized Q-functional
\begin{equation*}
    \hat{\lambda} \in \argmin_{\lambda \in \Lambda} \left[ (1-\nu) \riskemp(h_{\lambda}) + \nu \sum_{m=1}^M \lambda_m \riskemp(h_m) - \frac{\beta}{n} \sum_{m=1}^M \lambda_m \log \pi_m  \right].
\end{equation*}
Assume that the label and the output space are uniformly bounded:
\begin{equation*}
    | Y | \leq b; \quad \max_{h \in \hypo} |h(X)| \leq b \text{ a.s. }
\end{equation*}
Under standard loss regularity conditions satisfied for squared loss, a temperature condition $\beta \geq C(b,\nu)$ and a uniform prior, we obtain optimal high-probability guarantees
\begin{equation*}
    \risk(h_{\hat{\lambda}}) \leq \min_{m} \risk(h_m) + 2 \beta \frac{\log M + \log(1/\alpha)}{n}.
\end{equation*}
We define $\Cagg(B) \coloneqq 2 \beta$.

\textbf{The STAR estimator.} The STAR estimator \cite{audibert2007progressive,audibert2007proof} is another deviation-optimal aggregation procedure. Notably, the algorithm is parameter-free: it considers the star-shaped union of line segments between $h_{\mathrm{ERM}}$ (the model with the lowest empirical risk in the dictionary) and all other models $h_m$. It then performs empirical risk minimization on this star-shaped set. Under standard boundedness assumptions $|Y| \leq B$ and $\|\h_m\|_{\infty} \leq B$, the STAR method enjoys the high-probability guarantee
\begin{equation*}
    \risk(h_{\mathrm{STAR}}) \leq \min_{m} \risk(h_m) + \Cagg(B) \frac{\log M + \log(1/\alpha)}{n}.
\end{equation*}

\textbf{Hyper-sparse optimal aggregation.} \cite{gaiffas2011hyper} introduce hyper-sparse aggregation and show that the minimal amount of models which have to be combined to achieve the optimal MS aggregation rate is two. Assume
\begin{equation*}
    Y = h^\star(X) + \epsilon,
\end{equation*}
with $\| \epsilon \|_{\psi_1} \leq B$, where $\| \epsilon \|_{\psi_1}$ denotes the sub-exponential norm. Assume that $\sup_{h \in \hypo} \| h - h^\star \| \leq B$. In particular, \cite{gaiffas2011hyper} allow for \emph{unbounded output} Y and \emph{heavy-tailed} noise. \cite{gaiffas2011hyper} propose a procedure which with probability at least $1-\alpha$ achieves
\begin{equation}\label{eq:agg-subexponential}
    \risk(\hagg) \leq \min_{m \in [M]} \risk(h_m) + \Cagg(B) \frac{(1+\log(1/\alpha)) \log M \log n}{ n}.
\end{equation}
We observe that the guarantee \eqref{eq:agg-subexponential} differs from the optimal rate via a $\log n$ factor and multiplicative dependence on $\log(1/\alpha)$. To the best of our knowledge, it remains an open question whether optimal high-probability guarantees can be established for MS aggregation for regression under random design. Importantly, the guarantee \eqref{eq:agg-subexponential} allows \emph{our result  \Cref{thm:main-result} to hold under the assumption of sub-exponential, in particular, sub-Gaussian  residuals}, i.e., if $\sup_{I} \| Y - \E[Y|X_I]\|_{\psi_1} \leq B$, with $\erroragg \coloneqq C_B \frac{(1+\log(1/\alpha))\log M \log n}{n}.$

\clearpage
\section{Deferred Proofs}\label{apx:deferred-proofs}
\subsection{Proof of \Cref{prop:optsetmin-membership}}\label{sec:proof-of-optsetmin-membership}
    Derivation:
    \begin{align*}
        R_{Q}(h_{I\cup I' \cup I''}) 
        &= \mathbb{E}_{Q}[(Y - \mathbb{E}_P[Y\mid X_{I\cup I' \cup I''}])^2]\\
        &= \mathbb{E}_{Q}\big[\mathbb{E}\big[(Y - \mathbb{E}_P[Y\mid X_{I\cup I' \cup I''}])^2\mid Y,X_{I\cup I'}\big]\big] && \text{Tower rule}\\
        &= \mathbb{E}_{Q}\big[\mathbb{E}_{Q}\big[\mathbb{E}[(Y - \mathbb{E}_P[Y\mid X_{I\cup I' \cup I''}])^2\mid X_{I''}]\mid Y,X_{I\cup I'}\big]\big] && \text{Tower rule}\\
        &= \mathbb{E}_{Q}\big[\mathbb{E}_{P}\big[\mathbb{E}[(Y - \mathbb{E}_P[Y\mid X_{I\cup I' \cup I''}])^2\mid X_{I''}]\mid Y,X_{I\cup I'}\big]\big] && \text{Due to condition (2)}\\
        &< \mathbb{E}_{Q}\big[\mathbb{E}_{P}\big[\mathbb{E}[(Y - \mathbb{E}_P[Y\mid X_{I\cup I'}])^2\mid X_{I''}]\mid Y,X_{I\cup I'}\big]\big] && \text{Worse under $P(X_{I''}\mid Y, X_{I\cup I'})$}\\
        &= \mathbb{E}_{Q}\big[\mathbb{E}[(Y - \mathbb{E}_P[Y\mid X_{I\cup I'}])^2\mid Y,X_{I\cup I'}\big]\big] && \text{Tower rule}\\
        &= \mathbb{E}_{Q}\big[\mathbb{E}_{Q}\big[\mathbb{E}[(Y - \mathbb{E}_P[Y\mid X_{I\cup I'}])^2\mid Y, X_{I}\big],do(X_{I'})\big]\big] && \text{Due to (1) and rule 3 of do-calc.}\\
        &\leq \mathbb{E}_{Q}\big[\mathbb{E}_{Q}\big[\mathbb{E}[(Y - \mathbb{E}_P[Y\mid X_{I}])^2\mid Y,X_{I} \big]\mid do(X_{I'}))\big]\big] && \text{Worse under $Q(Y,X_{I}\mid do(X_{I'}))$}\\
        &=\mathbb{E}_{Q}\big[\mathbb{E}_{Q}\big[\mathbb{E}[(Y - \mathbb{E}_P[Y\mid X_{I}])^2\mid Y,X_{I} \big]\mid X_{I'}\big]\big] && \text{Due to (1) and rule 3 of do-calc.}\\
        &=\mathbb{E}_{Q}\big[(Y - \mathbb{E}_P[Y\mid X_{I}])^2\big] && \text{Tower rule}\\
        &=\riskQ(h_{I,P})
    \end{align*}
    By (2), $X_{I''}$ is d-separated from $\Disc_{P,Q}$ given $X_{I\cup I'},Y$, and therefore, it is a stable variable under $P,Q$, thus, adding $X_{I''}$ to $X_{I\cup I'}$ does not hurt the prediction. By (3), $X_{I''}$ is not d-separated from $Y$ given $X_{I\cup I'}$, and therefore, the strict inequality holds under a stricter notion of faithfulness assumptions: the equality holds only if $h_{I\cup I',P} = h_{I\cup I' \cup I'',P}$, and in that case, we expect $X_{I''}$ to be d-separated from $Y$ given $X_{I,I'}$, which contradicts condition (3). Lastly, by (1), $Y$ is d-separated from $X_{I'}$ given $X_{I}$ in $\mathcal{G}_{\overline{X_{I'}}}$, which is the graph obtained from $\mathcal{G}$ by removing the edges going into the $X_{I'}$ nodes. Therefore, by rule 3 of do-calculus, $Q(Y\mid X_{I},X_{I'}) = Q(Y\mid X_{I},do(X_{I'}))$; in this regime, $X_{I'}$ is a sink node, unaffected by $\Disc_{P,Q}$, therefore adding it does not hurt the risk. For every admissible distribution $Q$, we showed $\riskQ(h_{I\cup I' \cup I'',P}) < \riskQ(h_{I,P})$, and therefore, we conclude $I\not\in \optsetmin$. 

\subsection{Proof of Proposition \ref{prop:impossibility-for-optimistic}}\label{sec:proof-of-impossibility-for-optimistic}
\textbf{Example 1.} For the first example, we consider the causal graph described by $\Ch(Y) = X$. More precisely, consider the linear Gaussian SCM
\begin{equation*}
    Y = U_Y; \quad X_j = Y + U_j \text{ for } j \in [d],
\end{equation*}
where $U_Y \sim \cN(0,1)$, $U_j \sim \cN(0,1)$ are all independent. Let $S(P,Q) = [d]$. For any non-empty subset $J \subseteq [d]$, consider $\mu \in \R^d$ with $\mu_j = 0$ if $j \in J$ and $\mu_j = M$ otherwise. 
Define the target distribution $Q$ by the mean shift $\mu$ by setting $X_j = Y + U_j + \mu_j$. The distribution of $Y$ remains invariant. 
Then, we compute 
\begin{equation*}
    \riskQ(h_{J',P}) = \frac{1}{1+|J'|} + \left( \frac{\sum_{j \in J'} \mu_j^2}{1+|J'|}  \right)^2. 
\end{equation*}
In other words, the risks of all models $h_{J',P}$  with $J' \neq J$ grows as $M^2$. The empty-set model has (invariant) risk $1$, in particular it achieves a worse target risk than $h_{J,P}$. It follows that there exists $M > 0$ large enough such that $h_{J,P}$ has overall minimal target risk. Thus, it has to hold that $J \in \optset$. 

\textbf{Example 2.} For the second example, we consider the causal graph described by the following: $\pa(Y) = (X_1,...,X_{d-1})$, $\pa(X_{d}) = (X_1,...,X_{d-1},Y)$. We set $\shiftset{P}{Q} = \{d\}$, i.e., only the causal mechanism of the child node can be shifted. Let $X_1,...,X_{d-1} \sim \cN(0,1)$, $U_Y \sim \cN(0,1)$, $U \sim \cN(0,1)$, all independent. Define $Y = \sum_{i=1}^{d-1} X_i + U_Y$, $X_d = \sum_{j < d} 0\cdot X_j +  Y + U$. First, we note that for a large enough mean shift on $X_d$, $[d-1] = \pa(Y)$ becomes the target risk minimizer among feature-subset models, thus $[d-1]$ is in $\optsetmin$. Further, $[d]$ is in $\optsetmin$ since $Q = P$ is admissible. Now, for any $J \subsetneq [d-1]$, let $k = |J|$ and write $S_J =\sum_{j \in J} X_j$. We compute the linear regression coefficient 
\begin{equation*}
    h_{J\cup\{d\},P} = \alpha_J X_J + \beta_J X_d,
\end{equation*}
where $\alpha_J = \frac{1}{d-k+1}$, $\beta_J = \frac{d-k}{d-k+1}$. We have $\beta_J > 0$ for every $J$. Under $Q$, we now define the new causal mechanism
\begin{equation*}
    X_{d} = \frac{d-k+1}{d-k} Y - \frac{1}{d-k} \sum_{j \in J} X_j.
\end{equation*}
It then follows $\riskQ(h_{J\cup\{d\},P}) = 0$. We note that any predictor that doesn't include $X_d$ has risk at least $\Var(U_Y) = 1$. Any other predictor using $X_{d}$ and $J' \neq J$ has coefficients strictly different from  $h_{J\cup\{d\},P}$ and thus incurs a non-zero risk. It follows that $J\cup\{d\}$ has to be contained in $\optset$. We note that the coefficients of $X_1,...,X_{d-1}$ can be made strictly positive in the source data-generating process: since the linear regression coefficients and the risk are continuous functions of the linear SCM coefficients, there exists a neighborhood around zero for which the same risk inequalities hold under the described causal mechanism shifts.

\subsection{Proof of \Cref{thm:main-result}}\label{sec:proof-of-main-theorem}
First, we list lemmas related to matrix concentration and convergence of linear regression. These lemmas are an integral part of the proof of our main results, and are of independent interest. 
\subsubsection{High-probability excess risk bounds for linear regression}\label{sec:high-prob-bounds-linear-regression}
Let $D$ be a joint distribution over $(X,Y)$. 
We assume that $\Sigmad \coloneqq \E_D[X X^\top]$ is positive definite with $\lambda_{\min}(\Sigmad) \geq \lambda_{\min} > 0$. 
The following lemma \cite{hanneke2025adaptivesampleaggregationtransfer} establishes covariance concentration for small sample sizes:
\begin{lemma}[Matrix concentration]\label{lem:matrix-concentration}
    Let $\Sigma_D = \E_D[XX^\top]$ and $\hat{\Sigma}_D = \E_{\hat D}[XX^\top]$.
    Assume that $\|X\|_2 \le B_X$ almost surely and that
    \[
    \lambda_{\min}(\Sigma_D) \ge \lambda_{\min} > 0.
    \]
    Then there exists a constant $C = C(B_X,\lambda_{\min},\tau)$ such that if $
    n \ge C(B_X,\lambda_{\min},\tau)\,\log(d/\tau)$,
    then with probability at least $1-\tau$,
    \begin{align*}
        \frac12 \Sigma_D \preceq \hat{\Sigma}_D \preceq \frac32 \Sigma_D.
    \end{align*}
    Equivalently, for every $v \in \R^d$,
    \begin{align*}
        \frac12 \|v\|_{\Sigma_D}^2 \le \|v\|_{\hat{\Sigma}_D}^2 \le \frac32 \|v\|_{\Sigma_D}^2.
    \end{align*}
\end{lemma}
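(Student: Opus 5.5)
The plan is to derive the statement from a matrix Bernstein inequality for sums of independent, bounded, self-adjoint random matrices, as in \cite{tropp2015introduction}, applied after whitening. Define the whitened vectors $Z_i \coloneqq \Sigma_D^{-1/2} X_i$. These satisfy $\E[Z_i Z_i^\top] = I_d$ and $\|Z_i\|_2^2 \le B_X^2/\lambda_{\min}$ almost surely. The two-sided sandwich $\frac12 \Sigma_D \preceq \hat\Sigma_D \preceq \frac32 \Sigma_D$ is then equivalent to
\[
\Bigl\|\frac1n \sum_{i=1}^n (Z_i Z_i^\top - I_d)\Bigr\|_{\mathrm{op}} \le \frac12 ,
\]
because conjugating by $\Sigma_D^{1/2}$ preserves the Loewner order. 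The equivalent quadratic-form version for every $v$ follows immediately from the definition of the Loewner order with $\|v\|_{A}^2 = v^\top A v$.

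The main step is to verify the hypotheses of matrix Bernstein. Set $M_i \coloneqq \frac1n (Z_i Z_i^\top - I_d)$. These matrices are centered and independent. With $L \coloneqq B_X^2/\lambda_{\min}$, they are bounded by $\|M_i\|_{\mathrm{op}} \le \frac1n \max\{L, 1\}$; note that $L \ge 1$ automatically, since $\operatorname{tr}\Sigma_D \le B_X^2$ forces $\lambda_{\min} \le B_X^2/d$. For the variance proxy I will use $\E[(Z Z^\top)^2] = \E[\|Z\|^2 Z Z^\top] \preceq L \, I_d$. This gives
\[
\Bigl\|\sum_i \E[M_i^2]\Bigr\|_{\mathrm{op}} \le \frac{L}{n}.
\]
Matrix Bernstein with deviation level $t = 1/2$ then yields
\[
\Pr\Bigl(\Bigl\|\sum_i M_i\Bigr\|_{\mathrm{op}} \ge \tfrac12\Bigr) \le 2d \exp\!\Bigl(-\frac{n/8}{L + L/6}\Bigr).
\]
Requiring the right-hand side to be at most $\tau$ gives the sufficient condition $n \ge c\,\frac{B_X^2}{\lambda_{\min}} \log(2d/\tau)$ for a universal constant $c$. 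This is of the stated form $n \ge C(B_X,\lambda_{\min},\tau)\log(d/\tau)$, and it is consistent with the $\mathcal{O}\bigl(\frac{B_X^2}{\lambda_{\min}}\log\frac d\alpha\bigr)$ bound quoted after \Cref{thm:main-result}. The constant $2$ inside the logarithm can be absorbed into $C$, provided that $d \ge 2$ or that $\tau$ is bounded away from $1$.

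I expect no deep obstacle. The only care needed is in the whitening step and in keeping the constants honest: the variance bound must use the almost-sure bound on $\|Z\|^2$ rather than the bound on $\|X\|^2$, which is exactly where $\lambda_{\min}$ enters. As an alternative, one could cite the lemma directly from \cite{hanneke2025adaptivesampleaggregationtransfer}, as the paper does, but the Bernstein argument above is self-contained.
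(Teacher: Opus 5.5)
Your proposal is correct and follows essentially the same route as the paper: whiten via $Z = \Sigma_D^{-1/2}X$, use $\|Z\|_2^2 \le B_X^2/\lambda_{\min}$, apply Tropp's matrix Bernstein inequality to get $\|\hat{\Sigma}_Z - I_d\|_{\mathrm{op}} \le 1/2$, and conjugate back by $\Sigma_D^{1/2}$. The one difference is that you check the Bernstein hypotheses explicitly (the uniform bound, the variance proxy, and $B_X^2/\lambda_{\min} \ge d \ge 1$) and track the $\log(2d/\tau)$ constant, which the paper leaves implicit.
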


\begin{proof}
    Let $
    Z \coloneqq \Sigma_D^{-1/2}X$.
    Then $
    \E[ZZ^\top] = I_d$. 
    Moreover, since $\lambda_{\min}(\Sigma_D)\ge \lambda_{\min}$ and $\|X\|_2\le B_X$ almost surely,
    \begin{align}
        \|Z\|_2^2
        \le \frac{B_X^2}{\lambda_{\min}}.
    \end{align}
    Hence $
    \sup_X \|\Sigma_D^{-1/2}X\|_2^2 \le \frac{B_X^2}{\lambda_{\min}}$.

    Now define
    \[
    \hat{\Sigma}_Z \coloneqq \frac1n \sum_{i=1}^n Z_i Z_i^\top
    = \Sigma_D^{-1/2}\hat{\Sigma}_D\Sigma_D^{-1/2}.
    \]
    By the matrix concentration result in \cite{tropp2015introduction}, there exists a constant
    $c = K \cdot B_X^2/\lambda_{\min}$  such that with probability at least $1-\tau$,
    \[
    \left\|\hat{\Sigma}_Z - I_d\right\|_{\mathrm{op}}
    \le
    \sqrt{\frac{c\log(d/\tau)}{n}} + \frac{c\log(d/\tau)}{n}.
    \]
    Therefore, if $n \ge C(B_X,\lambda_{\min},\tau)\log(d/\tau)$,
    the right-hand side is at most $1/2$. This implies
    \[
    \frac12 I_d \preceq \hat{\Sigma}_Z \preceq \frac32 I_d.
    \]
    Multiplying on the left and right by $\Sigma_D^{1/2}$ yields
    \[
    \frac12 \Sigma_D \preceq \hat{\Sigma}_D \preceq \frac32 \Sigma_D.
    \]
\end{proof}

For completeness, we also state the high-probability excess-risk bound for linear regression \citep{hsu2012random, hanneke2025adaptivesampleaggregationtransfer}, instantiated under \Cref{asm:tail-of-X-and-Y}: since $\|X\|_2 \leq B_X$ and $|Y| \leq \sigma_Y$ almost surely, $X$ is sub-Gaussian with norm $\mathcal{O}(B_X)$ and $Y$ is sub-Gaussian with variance proxy $\mathcal{O}(\sigma_Y^2)$, so the standard random-design linear-regression result applies in the bounded case. 
\begin{lemma}[Upper bounds for linear regression]\label{lem:convergence-linear-regression}
   Let $h^\star \in \arg\min_{h \in \hypolin} \risk_D(h)$. Let $\hemp \in \arg\min_{h \in \hypolin} \riskemp_D(h)$. Let $\alpha > 0$. There exist absolute constants $c_0, C_0 > 0$ such that if $n \geq c_0 \frac{B_X^4}{\lambdamin^2}(d+\log(1/\alpha))$, then with probability at least $1-\alpha$
   \begin{equation*}
       \risk_D(\hemp) - \risk_D(\hstar) \leq \Creg \frac{d + \log(1/\alpha)}{n},
   \end{equation*}
   where $\Creg \coloneqq C_0 \frac{B_X^2 \sigma_Y^2}{\lambdamin^2} $. 
\end{lemma}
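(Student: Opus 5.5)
The bound in \Cref{lem:convergence-linear-regression} is the standard random-design OLS excess-risk bound. I would prove it by combining the covariance concentration of \Cref{lem:matrix-concentration} with a concentration bound for the noise--design cross term.

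First I would write the excess risk of any linear $h_w$ as a quadratic form. Let $w^\star$ be the population minimizer, which is unique since $\Sigma_D \succ 0$. Let $\xi \coloneqq Y - \langle w^\star, X\rangle$, so that the first-order condition gives $\E_D[X\xi]=0$. Then
\[
\risk_D(h_w)-\risk_D(h^\star) = \|w-w^\star\|_{\Sigma_D}^2 .
\]
The task is therefore to bound $\|\hat w - w^\star\|_{\Sigma_D}^2$. For now I ignore the norm constraint $\|w\|_2\le B$; I assume $B$ is large enough that $w^\star$ is interior, and otherwise argue via the convex-constrained first-order inequality, which yields the same bound.

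Second, I would use the empirical first-order condition $\hat\Sigma_D(\hat w - w^\star) = \frac1n\sum_i X_i\xi_i \eqqcolon g$. Applying \Cref{lem:matrix-concentration} with failure probability $\alpha/2$ gives $\hat\Sigma_D \succeq \frac12\Sigma_D$, and hence
\[
\|\hat w - w^\star\|_{\Sigma_D}^2 \le 4\,\|\Sigma_D^{-1/2} g\|_2^2 .
\]
The sample-size condition $n \gtrsim \frac{B_X^4}{\lambdamin^2}(d+\log(1/\alpha))$ is strong enough to cover the $\log(d/\alpha)$ requirement of that lemma.

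Third, and this is the main step, I would bound $\|\Sigma_D^{-1/2} g\|_2^2$. Set $Z_i = \Sigma_D^{-1/2}X_i\xi_i$. These are i.i.d., mean zero, and bounded: $\|Z_i\|_2 \le \frac{B_X}{\sqrt{\lambdamin}}\,|\xi_i|$, with
\[
|\xi| \le \sigma_Y + \|w^\star\|_2 B_X \le \sigma_Y + \frac{B_X\sigma_Y}{\lambdamin}\cdot B_X ,
\]
where I used $w^\star = \Sigma_D^{-1}\E[XY]$. Their second moment satisfies $\E\|Z\|_2^2 = \mathrm{tr}(\Sigma_D^{-1}\E[\xi^2XX^\top]) \le d\,\|\xi\|_\infty^2$. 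A vector Bernstein inequality (or Hsu--Kakade--Zhang's quadratic-form bound) then gives, with probability at least $1-\alpha/2$,
\[
\|\bar Z\|_2^2 \lesssim \|\xi\|_\infty^2\,\frac{d+\log(1/\alpha)}{n} + \frac{B_X^2\|\xi\|_\infty^2\log^2(1/\alpha)}{\lambdamin n^2}.
\]
Under the sample-size assumption the second term is dominated by the first.

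The main obstacle is bookkeeping the constants so that they collapse into $\Creg = C_0 B_X^2\sigma_Y^2/\lambdamin^2$. The factor $\|\xi\|_\infty^2$ produces powers of $B_X^2/\lambdamin$, so I would rescale, normalizing $B_X^2/\lambdamin \ge 1$ and absorbing the surplus into the displayed form, or simply cite \cite{hsu2012random} for the exact constant. A union bound over the two events then finishes the proof.
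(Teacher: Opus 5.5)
Your route is the standard random-design OLS argument: the identity $R_D(h_w)-R_D(h^\star)=\|w-w^\star\|_{\Sigma_D}^2$, the empirical normal equations, $\hat\Sigma_D\succeq\frac12\Sigma_D$ from \Cref{lem:matrix-concentration}, and vector Bernstein for $g=\frac1n\sum_i X_i\xi_i$. The paper does not carry this argument out; it only cites \cite{hsu2012random,hanneke2025adaptivesampleaggregationtransfer}. Up to the final step your argument is sound.

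One repair is needed if the constraint $\|w\|_2\le B$ binds. In that case $\mathbb{E}[X\xi]\neq 0$, so your $Z_i$ are not centered. Combine the empirical variational inequality $\delta^\top\hat\Sigma_D\delta\le\langle g,\delta\rangle$ (for $\delta=\hat w-w^\star$) with the population one, $\langle\mathbb{E}[X\xi],\delta\rangle\le 0$. This gives $\tfrac12\|\delta\|_{\Sigma_D}^2-\langle\mathbb{E} g,\delta\rangle\le\langle g-\mathbb{E} g,\delta\rangle$. The excess risk equals $\|\delta\|_{\Sigma_D}^2-2\langle\mathbb{E} g,\delta\rangle$, so it is at most $4\|\Sigma_D^{-1/2}(g-\mathbb{E} g)\|_2^2$. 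Then apply Bernstein to the centered vectors.

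The genuine gap is the last step, where you propose to absorb the surplus into $C_0B_X^2\sigma_Y^2/\lambda_{\min}^2$. Your bound carries $\|\xi\|_\infty^2\le\sigma_Y^2(1+B_X^2/\lambda_{\min})^2$. This exceeds the target constant by a factor of order $B_X^2$, which is neither dimensionless nor bounded by an absolute constant. Normalizing $B_X^2/\lambda_{\min}\ge 1$ does not help, because $d\lambda_{\min}\le\operatorname{tr}\Sigma_D\le B_X^2$ always holds.

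No bookkeeping can close this gap. Under $X\mapsto cX$, three things stay the same: the least-squares predictions (for large $c$ the norm constraint becomes inactive), hence the excess risk, and the sample-size condition $n\ge c_0\frac{B_X^4}{\lambda_{\min}^2}(d+\log(1/\alpha))$. Meanwhile $B_X^2\sigma_Y^2/\lambda_{\min}^2$ scales like $c^{-2}$. So with an absolute $C_0$ the displayed inequality fails for large $c$, for any $D$ whose excess risk is positive with probability exceeding $\alpha$. Citing \cite{hsu2012random} for the exact constant does not rescue it either.

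What your argument actually proves is the bound with a scale-consistent constant. As written it gives $\Creg\asymp\sigma_Y^2B_X^4/\lambda_{\min}^2$. You can sharpen your estimate of $w^\star$: from $R_D(h^\star)\le\mathbb{E}Y^2$ you get $\|w^\star\|_{\Sigma_D}\le 2\sigma_Y$. Then $|\langle w^\star,X\rangle|\le\|w^\star\|_{\Sigma_D}\|\Sigma_D^{-1/2}X\|_2\le 2\sigma_YB_X/\sqrt{\lambda_{\min}}$, so $\|\xi\|_\infty\le 3\sigma_YB_X/\sqrt{\lambda_{\min}}$ and $\Creg\asymp\sigma_Y^2B_X^2/\lambda_{\min}$. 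This is presumably the intended statement; the $\lambda_{\min}^2$ looks like a typo. State and prove that corrected constant rather than trying to absorb the discrepancy.
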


\subsubsection{Effect of finite source data}\label{sec:effect-of-source-data}
In this section, we  prove statements which allow us to bound the population target risks of feature-subset linear models $\hempIP$ computed on the \emph{finite} source dataset $\dataP$, as well as gaps between these risks. 
For completeness, we restate the notation for population and empirical feature-subset models. For a feature subset $I \subset [d]$, recall the definitions 
\begin{align*}
    \hIP \coloneqq \arg\min_{h \in \hypoI} \risksource(h); \quad \hempIP \coloneqq \arg\min_{h \in \hypoI} \riskempP(h),
\end{align*}
where $\hypoI = \{h:\R^d \to \R,\, h(x) = w^\top x, \, \supp w \subseteq I \}$. 
For a collection $\set \subset 2^{[d]}$ of feature subsets, write
\begin{align*}
    \Istar \in \arg\min_{I \in \set} \riskQ(\hIP) \quad \text{and} \quad \widehat{\Istar} \in \arg\min_{I \in \set} \riskempP(\hempIP)
\end{align*}
for the indices of the best \emph{population} candidate model on the target distribution and of the best \emph{finite-sample} source model on the source distribution, respectively. This induces four candidate models: $\hbestP$ (the best population candidate), $\hempbestP$ (its finite-sample fit at the same index $\Istar$), $\hemppseudobestP$ (the empirically-selected candidate), and $\hpseudobestP$ (its population counterpart at the same index $\widehat{\Istar}$). Note that neither the population models $\hbestP$ and $\hpseudobestP$, nor the empirical models $\hempbestP$ and $\hemppseudobestP$ necessarily coincide unless $\nsource = \infty$.

We first state a "sandwich" bound for the population target risk of any candidate model $\hempIP$:
\begin{lemma}[Excess target risk of source models]\label{lem:target-source-risk-bound}
   Let $\lambdamax = \lambdamax(\Sigmasource^{-1}\Sigmatarget)$. Let $I \in \set$. Let $0 < \alpha < 1$. Then, with probability at least $1-\alpha$, 
   \begin{align}\label{eq:lemma-finite-np}
    \excessQ(\hIP) - \ErrnpI   \leq  \excessQ(\hempIP) \leq \excessQ(\hIP) + \ErrnpI,
   \end{align}
where $\ErrnpI \coloneqq  2 \sqrt{\excessQ(\hIP)}\sqrt{\ratenpI} + \ratenpI$, and  $\ratenpI \coloneqq \Creg \lambdamax \frac{|I| + \log(1/\alpha)}{\nsource}$ is the standard rate of convergence of linear regression weighted by the covariate shift ratio $\lambdamax$. 
\end{lemma}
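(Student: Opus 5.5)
We work in the $\Sigma_Q$-geometry and write every target excess risk as a squared norm plus an orthogonal constant. Let $h^*_Q$ be the target risk minimizer over $\hypolin$. By the first-order optimality of $h^*_Q$ for the squared loss, every linear $h$ satisfies
\begin{equation*}
\excessQ(h) = \|h - h^*_Q\|_{\Sigma_Q}^2 ,
\end{equation*}
provided $h^*_Q$ is an interior minimizer. If the norm constraint $B$ binds, the first-order condition only gives $\excessQ(h) \ge \|h - h^*_Q\|_{\Sigma_Q}^2$ for $h$ in the class, which still suffices for the argument below. Identify each linear model with its weight vector.

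\emph{Reduction to a source estimation error.} By the triangle inequality in $\|\cdot\|_{\Sigma_Q}$,
\begin{equation*}
\big|\sqrt{\excessQ(\hempIP)} - \sqrt{\excessQ(\hIP)}\big| \le \|\hempIP - \hIP\|_{\Sigma_Q}.
\end{equation*}
Write $a = \sqrt{\excessQ(\hIP)}$ and $\delta = \|\hempIP - \hIP\|_{\Sigma_Q}$. Squaring gives
\begin{equation*}
a^2 - 2a\delta \le \excessQ(\hempIP) \le a^2 + 2a\delta + \delta^2 .
\end{equation*}
This is exactly \eqref{eq:lemma-finite-np} once we show $\delta^2 \le \ratenpI$ with probability at least $1-\alpha$. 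On the lower side we may simply subtract $\delta^2 \ge 0$.

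\emph{Transfer of the estimation error from $P$ to $Q$.} Both $\hempIP$ and $\hIP$ are supported on $I$, so their difference $v$ satisfies $v = \projI v$. We bound the norm change by
\begin{equation*}
\|v\|_{\Sigma_Q}^2 = v^\top \Sigma_Q v \le \lambdamax\, v^\top \Sigma_P v .
\end{equation*}
This holds because $\sup_v \frac{v^\top\Sigma_Q v}{v^\top \Sigma_P v} = \lambda_{\max}(\Sigma_P^{-1}\Sigma_Q)$, a generalized Rayleigh quotient. Restricting to vectors supported on $I$ only decreases the supremum, so the ratio defined through the full covariances dominates.

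\emph{Source estimation error.} Next, $v^\top\Sigma_P v = \|\hempIP - \hIP\|_{\Sigma_P}^2$. Since $\hIP$ is the $P$-risk minimizer over the class $\hypoI$, this quantity equals (or, under a binding constraint, is at most) the $P$-excess risk of $\hempIP$ within $\hypoI$. Applying \Cref{lem:convergence-linear-regression} to the $|I|$-dimensional regression of $Y$ on $X_I$ under $P$ bounds it by $\Creg\frac{|I|+\log(1/\alpha)}{\nsource}$ with probability $1-\alpha$. This requires the sample-size condition of that lemma and the fact that $\Sigma_{P,II}$ inherits $\lambda_{\min}$ from \Cref{asm:conditioning-of-covariances}, since principal submatrices have no smaller minimal eigenvalue. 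Multiplying by $\lambdamax$ gives $\delta^2 \le \ratenpI$.

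The main obstacle is the identity between the $\Sigma$-norm and the excess risk when the norm ball $\|w\|_2\le B$ is active, which matters both for $Q$ with the full class and for $P$ within $\hypoI$. I would handle it with the variational inequality $\langle \nabla R(h^*), h-h^*\rangle \ge 0$, which yields $\excess(h) \ge \|h-h^*\|_\Sigma^2$. That inequality suffices in the estimation step. In the reduction step the upper inequality needs the equality, so I would assume $B$ large enough that the minimizers are interior, as is implicit in the paper. Otherwise I would bound $\excessQ$ via the convexity expansion
\begin{equation*}
R(h) = R(h') + 2\langle \nabla\text{-term}\rangle + \|h-h'\|^2 ,
\end{equation*}
accepting constants.
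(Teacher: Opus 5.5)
Your proposal is correct and follows the paper's own route. The paper also uses the triangle and reverse triangle inequality for the $\Sigma_Q$-distances to $h^*_Q$, the generalized Rayleigh-quotient bound $\|v\|_{\Sigma_Q}^2 \le \lambda_{\max}(\Sigma_P^{-1}\Sigma_Q)\,\|v\|_{\Sigma_P}^2$, and the linear-regression lemma in dimension $|I|$ to control $\|\hat h_{I,P} - h_{I,P}\|_{\Sigma_P}^2$. Your caveats (interior minimizers so that excess risk equals the squared $\Sigma$-distance, the sample-size condition of the regression lemma, and $\lambda_{\min}(\Sigma_{P,II}) \ge \lambda_{\min}$) are assumed implicitly in the paper's proof too. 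One correction: as you conclude yourself at the end, the one-sided variational inequality alone does not suffice for the reduction step, so drop your opening remark that it ``still suffices''.
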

\begin{proof}
    We compute 
\begin{align}\label{eq:target-by-source-upper-bound}
    \excessQ(\hempIP) &= \| \hempIP - \hQ \|_{\Sigmatarget}^2 \leq (  \| \hempIP - \hIP \|_{\Sigmatarget} +  \| \hIP - \hQ \|_{\Sigmatarget})^2 \\
    &\leq (\sqrt{\lambdamax} \| \hempIP - \hIP \|_{\Sigmasource} + \sqrt{\excessQ(\hIP)} )^2,
\end{align}
where we have used the finiteness of the covariate shift ratio $\lambdamax$. The term $\| \hempIP - \hIP \|_{\Sigmasource}$ is exactly the square root of the source excess risk of $\hempIP$ with respect to the hypothesis class $\hypoI$. We thus use \Cref{lem:convergence-linear-regression} with dimension $|I|$ to obtain with probability at least $1-\alpha$
\begin{align*}
     \| \hempIP - \hIP \|_{\Sigmasource}^2 \leq \Creg \frac{|I| + \log(1/\alpha)}{\nsource},
\end{align*}
where $\Creg = C(B_X, \lambdamin, \sigma_Y^2)$ as in \Cref{lem:convergence-linear-regression}.
Inserting this in \eqref{eq:target-by-source-upper-bound} yields the upper bound. The lower bound follows similarly by \Cref{lem:convergence-linear-regression} and reverse triangle inequality. 
\end{proof}
Looking at the sandwich bound \eqref{eq:lemma-finite-np}, we observe that $\ErrnpI \to 0$ as $\nsource \to \infty$. However, the speed of convergence depends on the value of $\excessQ(\hIP)$. If $\excessQ(\hIP) \lesssim \ratenpI$, then $\ErrnpI = \mathcal{O}(d/\nsource)$, resulting in fast convergence. However, when $\excessQ(\hIP) \gtrsim \ratenpI$, we have $\ErrnpI = \mathcal{O}(\sqrt{d/\nsource})$. Since both the linear regression bound in \Cref{lem:convergence-linear-regression} and the inequality $|a+b|^2 \leq (|a|+|b|)^2$ are tight, we remark that this slow-fast rate for the target risk of candidate linear models is unavoidable. 

The following bound for the \emph{gaps} between target risks of finite-sample source models follows immediately from \Cref{lem:target-source-risk-bound}:

\begin{lemma}\label{lem:from-gap-to-gap}
     Consider two models $\hIoneP$ and $\hItwoP$. Define the risk margins  $\Delta(\hIoneP, \hItwoP) = |\riskQ(\hItwoP) - \riskQ(\hIoneP)|$ and $\Delta(\hempIoneP, \hempItwoP) = |\riskQ(\hempItwoP) - \riskQ(\hempIoneP)|$. 
     Let $0 < \alpha < 1$. There exists a constant $C = C(B_X, \lambdamin, \sigma_Y^2)$ such that with probability at least $1-\alpha$, 
\begin{align*}
  \Delta(\hIoneP, \hItwoP) - \ErrnpIone - \ErrnpItwo &\leq  \Delta(\hempIoneP, \hempItwoP) \\ &\leq \Delta(\hIoneP, \hItwoP) + \ErrnpIone + \ErrnpItwo.
\end{align*}
\end{lemma}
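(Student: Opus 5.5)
The plan is to obtain \Cref{lem:from-gap-to-gap} directly from the sandwich bound of \Cref{lem:target-source-risk-bound}, applied once to $I_1$ and once to $I_2$, with a union bound.

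First, differences of target risks equal differences of target excess risks, because $\riskQ(h_Q^*)$ cancels. Hence
\[
\Delta(\hempIoneP,\hempItwoP) = \bigl|\excessQ(\hempItwoP) - \excessQ(\hempIoneP)\bigr|,
\]
and the same identity holds for the population models. Next, invoke \Cref{lem:target-source-risk-bound} for $I_1$ and for $I_2$, each at confidence level $\alpha/2$. A union bound then gives, with probability at least $1-\alpha$, both
\[
\bigl|\excessQ(\hempIkP)-\excessQ(h_{I_k,P})\bigr|\le \mathrm{Err}_{n_P}(I_k), \qquad k=1,2,
\]
where the error terms are evaluated at level $\alpha/2$. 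Replacing $\log(2/\alpha)$ by $\log(1/\alpha)$ costs only a constant factor. I would absorb that factor into $\Creg$, or equivalently into the constant $C=C(B_X,\lambdamin,\sigma_Y^2)$ claimed in the statement.

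On this event, write $a_k=\excessQ(h_{I_k,P})$ and $\hat a_k=\excessQ(\hempIkP)$, so that $|\hat a_k-a_k|\le \mathrm{Err}_k$. The reverse triangle inequality for absolute values gives
\[
\bigl||\hat a_2-\hat a_1|-|a_2-a_1|\bigr| \le |\hat a_2-a_2|+|\hat a_1-a_1| \le \mathrm{Err}_{n_P}(I_1)+\mathrm{Err}_{n_P}(I_2).
\]
This is exactly the two-sided inequality in the statement.

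There is no real obstacle here. The only point needing care is the bookkeeping of the confidence level in the union bound, and how the resulting $\log 2$ is absorbed into the constant. If one wants the bound to hold uniformly over all pairs in $\set$, as used later in the proof of \Cref{thm:main-result}, the same argument works with a union bound over $|\hypoinit|$ candidates. That replaces $\log(1/\alpha)$ by $\log(|\hypoinit|/\alpha)$ and yields the quantity $\maxmarginerror$ in \eqref{eq:rnp-definition}, after bounding $|I|\le d$ and $\excessQ(\hIP)\le\sup_{I}\epsilon_I$.
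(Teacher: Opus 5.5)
Your proposal is correct and follows the paper's route. The paper states this lemma as following immediately from \Cref{lem:target-source-risk-bound}, and your argument fills in exactly that step: apply the sandwich bound to $I_1$ and $I_2$, take a union bound, note that $\riskQ(h_Q^*)$ cancels, and finish with $\bigl||x|-|y|\bigr|\le|x-y|$. Your handling of the $\log(2/\alpha)$ factor, absorbed into the constant $C$ named in the statement, and the uniform extension to $\maxmarginerror$ are details the paper leaves implicit.
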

Again, we observe that as $\nsource \to \infty$, both the lower and upper bound converge to the $P$-population risk margin $\Delta(\hIoneP, \hItwoP)$. However, now the speed of convergence is governed by $\max\{ \excessQ(\hIoneP),\excessQ(\hItwoP) \}$, and the fast rate for the gap is achieved whenever $\max\{ \excessQ(\hIoneP),\excessQ(\hItwoP) \} \lesssim \frac{d}{\nsource}$. 
\subsubsection{Proof of Theorem \ref{thm:main-result}}\label{sec:actual-proof-of-theorem}

From the auxiliary lemmas in \Cref{sec:high-prob-bounds-linear-regression} and \Cref{sec:effect-of-source-data}, we obtain the following facts that we will use in the main proof below:
\paragraph{1. Concentration of empirical covariance-weighted norm:} For $\ntarget \geq C(B_X, \lambda_{\min}, \alpha) \log(d/\alpha)$, with probability at least $1-\alpha$, it holds for any vector $v \in \R^d$:
\begin{align*}
        \frac12 \|v\|_{\Sigma_D}^2 \le \|v\|_{\hat{\Sigma}_D}^2 \le \frac32 \|v\|_{\Sigma_D}^2.
\end{align*}
\paragraph{2. Excess risk of the linear regression estimator:} If $n \geq c_0 \frac{B_X^4}{\lambdamin^2}(d+\log(1/\alpha))$, then with probability at least $1-\alpha$
   \begin{equation*}
       \risk_D(\hemp) - \risk_D(\hstar) \leq \Creg \frac{d + \log(1/\alpha)}{n},
   \end{equation*}
\paragraph{3. Sandwich identity for the target risk of the source candidate models:} With probability at least $1-\alpha$,
   \begin{align*}
    \excessQ(\hIP) - \ErrnpI   \leq  \excessQ(\hempIP) \leq \excessQ(\hIP) + \ErrnpI,
   \end{align*}
where $\ErrnpI \coloneqq  2 \sqrt{\excessQ(\hIP)}\sqrt{\ratenpI} + \ratenpI$, and  $\ratenpI \coloneqq \Creg \lambdamax \frac{|I| + \log(1/\alpha)}{\nsource}$ is the standard rate of convergence of linear regression weighted by the covariate shift ratio $\lambdamax$.

\paragraph{Excess risk guarantees for the adaptive procedure.} We now consider the output of \Cref{alg:iterative-step-agg} step-by-step and prove excess risk guarantees for the first iteration of the algorithm. We then show that additional iterations cannot increase the excess risk of the output. 
\paragraph{Step 1: Guard against negative transfer.} First, if $\hypoinit$ is empty (corresponding to an empty feature set collection $\set$), the target model is chosen, and the target rate term in the minimum is attained. Thus, without loss of generality, we assume that $\hypoinit$ is not empty. We condition on the event of 
\Cref{lem:matrix-concentration} on $\datatargetone$ (concentration of $\Sigmatargetemp$),the event of \Cref{lem:convergence-linear-regression} on $\datatargetone$  (convergence of linear regression) and the event of \Cref{lem:target-source-risk-bound} for all $I \in \set$ (target risk of empirical source models). 

First, consider the case that after completing Step 1, the accepted set $\hypoacczero$ is empty; that is, $ \| \hempIP - \hempQ \|^2_{\Sigmatargetemp} > C_1 \frac{d+\log(1/\alpha)}{\ntarget}$ and the output of the algorithm $\tilde{h} = \hempQ$. Further, we know by \Cref{lem:convergence-linear-regression} that $\excessQ(\hempQ) = \| \hempQ - \hQ \|^2_{\Sigmatarget} \leq \Creg$ where $\Creg = C(B_X, \lambdamin, \sigma_Y^2)$ from \Cref{lem:convergence-linear-regression}
Observe that for any model $\hempIP \in \hypoinit$ (including $I=I^\star$), we have by triangle inequality
\begin{equation}
\label{eq:arsenalchampions}
    \| \hempIP - \hempQ \|_{\Sigmatarget}^2 \leq 2 \| \hempIP - \hQ \|_{\Sigmatarget}^2 + 2 \| \hempQ - \hQ \|^2_{\Sigmatarget},
\end{equation}
and given that the accepted set is empty, we further have by \Cref{lem:matrix-concentration} that 
\begin{equation*}
    \| \hempIP - \hQ \|_{\Sigmatarget}^2 \geq \frac{1}{3} \| \hempIP - \hempQ \|^2_{\Sigmatargetemp} - \| \hQ - \hempQ \|^2_{\Sigmatarget} \geq (\frac{1}{3} C_1 - \Creg)   \frac{d+\log(1/\alpha)}{\ntarget}.
\end{equation*}
In particular $\excessQ(\hempbestP) \geq c  \frac{d+\log(1/\alpha)}{\ntarget}$ for some non-negative constant $c>0$. 
Recall that $\hempbestP$ be s.t. $ \Istar \in \argmin_{I \in \set} \excessQ(\hIP)$  (the best candidate model) and $\bestrisk = \excessQ(\hbestP)$ as well as $\bestemprisk = \excessQ(\hempbestP)$. Further, with $\maxmarginerror$ defined in \Cref{eq:rnp-definition}, observe that by \Cref{lem:target-source-risk-bound} we have $\bestemprisk \leq \bestrisk + \maxmarginerror$. 
Thus, the first term in the minimum is attained with the constant $\frac{1}{3} C_1 - \Creg > 0$ since $C_1 > 3 \Creg$.

Now, let us consider the case that after completing Step 1, $\hypoacczero$ is non-empty. Then, for any accepted model $\hempIP \in \hypoacczero$, it again holds by \Cref{lem:matrix-concentration} and triangle inequality that
\begin{equation*}
    \| \hempIP - \hQ \|_{\Sigmatarget}^2 \leq 4 \| \hempIP - \hempQ \|^2_{\Sigmatargetemp} + 2 \| \hQ - \hempQ \|^2_{\Sigmatarget} \leq (4 C_1 + 2 \Creg) \frac{d+\log(1/\alpha)}{\ntarget}.
\end{equation*}
Thus, the excess risk is upper bounded by the first term in the minimum with constant $C = 4 C_1 + 2 \Creg$. 

We now show that the excess risk of the output of our procedure is indeed smaller than the second term in the minimum. For this, first, we need to show that $I^\star \in \hypoacczero$.
One can verify, using again \Cref{eq:arsenalchampions} and \Cref{lem:matrix-concentration}, that if $\bestrisk + \maxmarginerror \leq (C_1/3 - \Creg) \frac{d+\log(1/\alpha)}{\ntarget} $, since $\bestemprisk \leq \bestrisk + \maxmarginerror$, the model $\hempbestP$ satisfies the criterion of Stage 1 and thus $\hempbestP \in \hypoacczero$. Otherwise we have $\bestrisk + \maxmarginerror \geq(C_1 - \frac{1}{3} \Creg)\frac{d+\log(1/\alpha)}{\ntarget}$, and it follows for any $\hempIP \in \hypoacczero$ that 
\begin{equation*}
    \excessQ(\hempIP) \leq (4 C_1 + 2 \Creg) \frac{d+\log(1/\alpha)}{\ntarget} \leq \frac{(4 C_1 + 2 \Creg)}{C_1 - \frac{1}{3} \Creg} (\bestrisk + \maxmarginerror),
\end{equation*}
from which the bound of \Cref{thm:main-result} follows with $c_0 = \frac{(4 C_1 + 2 \Creg)}{C_1 - \frac{1}{3} \Creg} $ for \emph{any} $\hempIP$, regardless of pruning in Stage 2. Thus, without loss of generality, we restrict ourselves to the case when the best model is accepted, i.e. $\hempbestP \in \hypoacczero$.

\paragraph{Step 2: Iterative aggregation and refinement.} Here, we prove guarantees for the first iteration of Step 2. We then show that consecutive iterations do not worsen the guarantee. 
The set $\hypoacczero$ contains $|\hypoacczero|$ models. We consider a model selection aggregation estimator $\hagg(\hypoacczero)$, or $\hagg$ for short (cf. \Cref{apx:model-aggregation}), over the dictionary of models $\hypoacczero$ satisfying, with probability $1-\alpha$,  the oracle inequality 
\begin{equation*}
    \excessQ(\hagg) \leq \excessQ(\hemppseudobestP) + \Cagg \frac{\log |\hypoacczero|+\log(1/\alpha)}{\ntarget} =: \empbestemprisk + \erroragg,
\end{equation*}
where $\hemppseudobestP = \argmin_{\hempIP \in \hypoacczero } \excessQ(\hempIP)$ is the empirical best model and $\empbestemprisk = \excessQ(\hempIP)$ is its excess target risk (cf. \Cref{sec:effect-of-source-data}). 
On $\datatargettwo$, we perform the update $$\hypoaccone \gets \left\{ \hempIP: \| \hagg - \hempIP \|_{\Sigmatargetemp}^2 \leq C_2 \frac{\log |\hypoacczero| + \log(1/\alpha)}{\ntarget} \right\}.$$ 

\textbf{Case 1: The set $\hypoaccone$ is empty.} First, consider the case that the updated set $\hypoaccone$ is empty. If it were to hold that $\excessQ(\hemppseudobestP) \leq C \frac{\log  |\hypoacczero| + \log(1/\alpha)}{\ntarget}$, it would follow that
\begin{align*}
    \| \hemppseudobestP - \hagg \|_{\Sigmatarget}^2 &\leq \| \hemppseudobestP - \hQ \|_{\Sigmatarget}^2 + 2  \| \hemppseudobestP - \hQ \|_{\Sigmatarget}  \| \hagg - \hQ \|_{\Sigmatarget} + \| \hagg - \hQ \|_{\Sigmatarget}^2 \\ 
    &\leq 2(C + 2 \sqrt{C}\sqrt{C + \Cagg}+ C + \Cagg) \frac{\log |\hypoacczero| + \log(1/\alpha)}{\ntarget} \leq C_2 \frac{\log |\hypoacczero| + \log(1/\alpha)}{\ntarget},
\end{align*}
where we have used \Cref{lem:matrix-concentration} and assumed $C_2 > 2\Cagg$ and $C \leq \frac{(C_2 -2 \Cagg)^2}{8C_2^2}$. Thus, the empirical-best model $\hemppseudobestP$ would be accepted. It follows that $\excessQ(\hemppseudobestP) \geq C \frac{\log |\hypoacczero| + \log(1/\alpha)}{\ntarget}$. According to \Cref{alg:iterative-step-agg}, we output a model $\hempIP \in \arg\min_{\hempIP}  \| \hagg - \hempIP \|_{\Sigmatargetemp}^2$ with the minimal distance to the aggregator. For such a model $\hempIP$, it holds
\begin{align*}
     \| \hagg - \hempIP \|_{\Sigmatargetemp}^2 &\leq \| \hagg - \hemppseudobestP \|_{\Sigmatargetemp}^2 \leq \frac{3}{2} \| \hagg - \hemppseudobestP \|_{\Sigmatarget}^2 \\
     &\leq 3 \| \hagg - \hQ  \|_{\Sigmatarget}^2 + 3 \| \hemppseudobestP - \hQ \|_{\Sigmatarget}^2 \leq 3 \erroragg + 6 \empbestemprisk.
\end{align*}
It follows that
\begin{align*}
    \excessQ(\hempIP)  = \| \hempIP - \hQ \|_{\Sigmatarget}^2 \leq 4 \| \hagg - \hempIP \|_{\Sigmatargetemp}^2 + 2  \| \hagg - \hQ  \|_{\Sigmatarget}^2 \leq 24 \empbestemprisk + 5 \erroragg. 
\end{align*}
Define the risk gap $\gapempIpseudobest \coloneqq \riskQ(\hempIP) - \riskQ(\hemppseudobestP)$. It thus follows for the output $\hadapt$ of \Cref{alg:iterative-step-agg} that
\begin{align*}
    \excessQ(\hadapt) \leq \empbestemprisk + \sup_{\gapempIpseudobest \leq 23 \empbestemprisk + 5  \erroragg} \gapempIpseudobest. 
\end{align*}

\textbf{Case 2: The set $\hypoaccone$ is non-empty.} Now, consider the case that the updated set $\hypoaccone$ is non-empty. For any accepted model $\hempIP$ it follows that 
\begin{align*}
    \| \hempIP - \hQ \|_{\Sigmatarget}^2 &\leq 2 \| \hempIP - \hagg \|_{\Sigmatarget}^2 + 2 \| \hagg - \hQ \|_{\Sigmatarget}^2  \\
    &\leq 4 \frac{C_2}{\Cagg} \erroragg + 2 \empbestemprisk + 2 \erroragg \leq 2 \empbestemprisk + ( 4 \frac{C_2}{\Cagg} + 2) \erroragg.
\end{align*}
In particular, for any accepted model $\hempIP \in \hypoaccone$ (and, by convexity, also its aggregate $\hagg$), it holds that 
\begin{align}\label{eq:empirical-guarantee-1}
    \excessQ(\hempIP) \leq \empbestemprisk + \sup_{\gapempIpseudobest \leq \empbestemprisk + C \erroragg} \gapempIpseudobest,
\end{align}
where $C = ( 4 \frac{C_2}{\Cagg} + 2)$. In particular, if there exists a constant $c_0$ such that $\empbestemprisk \leq c_0 \erroragg$, we conclude the guarantee for both Case 1 and Case 2:
\begin{align}\label{eq:empirical-guarantee-2}
    \excessQ(\hadapt) \leq \empbestemprisk + \sup_{\gapempIpseudobest \lesssim \erroragg} \gapempIpseudobest.
\end{align}
We remark that the guarantees \eqref{eq:empirical-guarantee-1} and \eqref{eq:empirical-guarantee-2} hold for any element of $\hypoaccone$, and thus, continue to hold after an arbitrary number of iterations, since a member of $\hypoaccone$ is returned in any case. 
\paragraph{From empirical to population margins.} The only remaining step now is to relate our guarantees based on empirical source models to their population counterparts to quantify the effect of source data. Recall the population-best source model
\begin{equation*}
    \hbestP \in \arg\min_{\hIP \in \hypoinit} \excessQ(\hIP)
\end{equation*}
and its finite-sample counterpart $\hempbestP$. By definition of the empirical best model $\hemppseudobestP$, it holds that $\empbestemprisk = \excessQ(\hemppseudobestP) \leq \excessQ(\hempbestP)$. Consider a model $\hempIP$ for which it holds that $\excessQ(\hempIP) \leq c_1 \excessQ(\hemppseudobestP) + c_2 \erroragg$ with $c_1, c_2 \geq 1$. Such a model is returned in Case 1 and Case 2. It follows that
\begin{equation*}
    \excessQ(\hempIP) \leq c_1 \excessQ(\hempbestP) + c_2 \erroragg,
\end{equation*}
and thus 
\begin{equation*}
    \sup_{\excessQ(\hempIP) \leq c_1 \excessQ(\hemppseudobestP) + c_2 \erroragg} \excessQ(\hempIP) \leq \sup_{\excessQ(\hempIP) \leq c_1 \excessQ(\hempbestP) + c_2 \erroragg} \excessQ(\hempIP).
\end{equation*}
Define the following risk margins:
\begin{align*}
    \gapIbest \coloneqq \riskQ(\hIP) - \riskQ(\hbestP); \quad \gapempIbest \coloneqq \riskQ(\hempIP) - \riskQ(\hempbestP); \\
    \gapIpseudobest \coloneqq \riskQ(\hIP) - \riskQ(\hpseudobestP); \quad \gapempIpseudobest \coloneqq \riskQ(\hempIP) - \riskQ(\hemppseudobestP). 
\end{align*}
We can decompose
\begin{align*}
    \excessQ(\hempIP) = \excessQ(\hempbestP) + \gapempIbest,
\end{align*}
and thus rewrite the upper bound as 
\begin{align*}
    \excessQ(\hempIP) \leq \excessQ(\hempbestP) + \sup_{\gapempIbest \leq (c_1-1) \excessQ(\hempbestP) + c_2 \erroragg} \gapempIbest.
\end{align*}
Finally, we use \Cref{lem:target-source-risk-bound} and \Cref{lem:from-gap-to-gap} (taking a union bound over all candidate models) to obtain
\begin{align*}
    \excessQ(\hempbestP) &\leq \excessQ(\hbestP) + \ErrnpIstar; \\
    \gapempIbest &\leq \gapIbest + \ErrnpIstar + \ErrnpI.
\end{align*}
Denoting $\bestrisk \coloneqq \excessQ(\hbestP)$, we can express upper bound in terms of population quantities and source-finite-sample error terms as follows:
\begin{equation*}
    \excessQ(\hadapt) \leq \bestrisk + \ErrnpIstar + \sup_{\gapIbest \lesssim \bestrisk + \ErrnpIstar + \ErrnpI + \erroragg} \gapIbest + \ErrnpIstar + \ErrnpI.
\end{equation*}
Finally, we take the supremum over $\ErrnpI$ which results in $\maxmarginerror$, define $\gapmax$ as in \Cref{eq:deltamax}, take $c_0$ to be $\max\{c_1,c_2\}$ over Case 1 and Case 2, and take a union bound over the conditioned events, where for \Cref{lem:convergence-linear-regression} for $\hIP$ we use confidence $\alpha/|\hypoinit|$. This yields the claim.
\clearpage
\subsection{Lower bound proofs}\label{apx:proof-of-lower-bound}
\subsubsection{Proof of Theorem \ref{thm:lower-bound1}.}\label{apx:proof-of-lower-bound1}


\begin{lemma}[Basic lemma]\label{lem:basic-lemma1}
Let $D_1,D_2$ be distributions and $Z\sim D_\omega^{\otimes n}$ with $\omega\sim\Unif(\{1,2\})$.
For any test $\hat\omega(Z)\in\{1,2\}$,
\[
\sup_{\omega\in\{1,2\}} \Pr_{D_\omega^{\otimes n}}(\hat\omega(Z)\neq \omega)
\;\ge\;
\frac12\Big(1-\TV(D_1^{\otimes n},D_2^{\otimes n})\Big).
\]
Moreover, by Pinsker and additivity of KL,
\[
\TV(D_1^{\otimes n},D_2^{\otimes n})
\le \sqrt{\frac12 \KL(D_1^{\otimes n}\|D_2^{\otimes n})}
= \sqrt{\frac{n}{2}\KL(D_1\|D_2)}.
\]
Hence,
\[
\sup_{\omega} \Pr(\hat\omega(Z)\neq \omega)
\;\ge\;
\frac12\Big(1-\sqrt{\frac{n}{2}\KL(D_1\|D_2)}\Big).
\]
\end{lemma}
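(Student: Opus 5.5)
The first claim is a Bayes-risk lower bound for binary hypothesis testing. I would prove it by passing from the worst case to the uniform average and then using the variational characterization of total variation. The Pinsker step is a direct computation.

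For any test $\hat\omega$, the supremum over $\omega$ is at least the average:
\[
\sup_{\omega} \Pr_{D_\omega^{\otimes n}}(\hat\omega\neq\omega) \ge \tfrac12\big(\Pr_{D_1^{\otimes n}}(\hat\omega=2)+\Pr_{D_2^{\otimes n}}(\hat\omega=1)\big).
\]
Let $A=\{z:\hat\omega(z)=1\}$. The bracket then equals $1-\big(D_1^{\otimes n}(A)-D_2^{\otimes n}(A)\big)$. Since $D_1^{\otimes n}(A)-D_2^{\otimes n}(A)\le \sup_{B}|D_1^{\otimes n}(B)-D_2^{\otimes n}(B)|=\TV(D_1^{\otimes n},D_2^{\otimes n})$, the first inequality follows. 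If $\hat\omega$ is randomized, I would condition on its internal randomness, or equivalently replace $\mathbf 1_A$ by a $[0,1]$-valued function. The same variational bound still holds because $\TV=\sup_{0\le f\le 1}|\E_1 f-\E_2 f|$.

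For the second display, I would apply Pinsker's inequality, $\TV(\mu,\nu)\le\sqrt{\tfrac12\KL(\mu\|\nu)}$, to $\mu=D_1^{\otimes n}$ and $\nu=D_2^{\otimes n}$. I would then use additivity of KL over product measures. Because the density ratio factorizes, its logarithm is a sum of $n$ i.i.d. terms, so $\KL(D_1^{\otimes n}\|D_2^{\otimes n})=n\KL(D_1\|D_2)$. If $\KL(D_1\|D_2)=\infty$ the bound is trivial, so I would assume absolute continuity.

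Substituting the second display into the first gives the final inequality; when the right-hand side is negative it holds trivially. There is no real obstacle. The only point needing care is making the randomized-test case and the absolute-continuity caveat explicit.
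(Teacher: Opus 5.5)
Your proposal is correct and follows the paper's proof essentially step for step: you bound the worst-case error below by the uniform-prior average, identify the test with $A=\{z:\hat\omega(z)=1\}$ to write that average as $\frac12\bigl(1-(D_1^{\otimes n}(A)-D_2^{\otimes n}(A))\bigr)$, bound the difference by the total variation distance, and then apply Pinsker and the additivity of KL over product measures. Your remarks on randomized tests and on the case of infinite KL divergence are harmless refinements that the paper leaves implicit.
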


\begin{proof}
Consider the product distributions $D_1^{\otimes n}$ and $D_2^{\otimes n}$. Any test
$\hat\omega$ can be identified with a measurable set $A\subseteq \mathcal Z^n$ via
$A:=\{z:\hat\omega(z)=1\}$.
Then, under the uniform prior on $\omega \in \{1,2\}$,
\begin{align*}
\Pr(\hat\omega(Z)\neq \omega)
&= \frac12 \Pr_{Z\sim D_1^{\otimes n}}(\hat\omega(Z)=2)+\frac12 \Pr_{Z\sim D_2^{\otimes n}}(\hat\omega(Z)=1)\\
&= \frac12 D_1^{\otimes n}(A^c) + \frac12 D_2^{\otimes n}(A)
= \frac12\big(1 - (D_1^{\otimes n}(A)-D_2^{\otimes n}(A))\big).
\end{align*}
Hence
\begin{equation}\label{eq:avg-error-tv}
\Pr(\hat\omega(Z)\neq \omega) \;\ge\; \frac12\big(1-\sup_{B}(D_1^{\otimes n}(B)-D_2^{\otimes n}(B))\big).
\end{equation}
By the definition of total variation distance,
\[
\TV(D_1^{\otimes n},D_2^{\otimes n}) := \sup_{B} |D_1^{\otimes n}(B)-D_2^{\otimes n}(B)| = \sup_{B} (D_1^{\otimes n}(B)-D_2^{\otimes n}(B)).
\]
Plugging this into \eqref{eq:avg-error-tv} yields
\[
\Pr(\hat\omega(Z)\neq \omega) \geq \frac12\big(1-\TV(D_1^{\otimes n},D_2^{\otimes n})\big).
\]
Finally, since $\sup_{\omega} \Pr_{D_\omega^{\otimes n}}(\hat\omega(Z)\neq \omega)$ is at least the average error
$\Pr(\hat\omega(Z)\neq \omega)$ under the uniform prior, we obtain
\[
\sup_{\omega\in\{1,2\}} \Pr_{D_\omega^{\otimes n}}(\hat\omega(Z)\neq \omega)
\;\ge\;
\frac12\Big(1-\TV(D_1^{\otimes n},D_2^{\otimes n})\Big).
\]

For the second part, by Pinsker's inequality, 
\[
\TV(P,Q) \le \sqrt{\frac12 \KL(P\|Q)}.
\]
By the product property of the KL divergence 
\[
\KL(D_1^{\otimes n}\|D_2^{\otimes n}) = n\,\KL(D_1\|D_2).
\]
In total,
\[
\TV(D_1^{\otimes n},D_2^{\otimes n})
\le \sqrt{\frac12 \KL(D_1^{\otimes n}\|D_2^{\otimes n})}
= \sqrt{\frac{n}{2}\KL(D_1\|D_2)}.
\]
Substituting into the previous inequality concludes the proof.
\end{proof}

\textbf{Construction of the source distribution. } Consider the feature sets $I_1 = \{1\}$, $I_2 = \{2\}$. Consider the source distribution with the following marginal: $(X_1, X_2) \in \{e_1, e_2\}$ with probability $1/2$ each, and $X_3 \sim \Unif(\{-1,+1\})$ with $(X_1,X_2) \perp X_3$.  
Set $Y = \sqrt{\Delta}$.
Then the source candidate models are $\honeP(X) = \sqrt{\Delta} X_1$ and $\htwoP(X) = \sqrt{\Delta} X_2$. 

\textbf{Construction of the target distribution.} We let $Q_X = P_X$. For $\omega \in \{1,2\}$, we define 
\begin{equation*}
    Y = \sqrt{\Delta} \indicator \{X_{\omega}=1 \} + \sqrt{\bestrisk} X_3 + \xi, \quad \xi \sim \cN(0,1). 
\end{equation*}
We compute the Bayes regression function $m_{\omega}(X) = \sqrt{\Delta} \indicator \{X = e_\omega \} + \sqrt{\bestrisk} X_3$. We compute the excess risks of the candidates under the target distribution: under $\omega = 1$, we have
\begin{equation*}
    \excessQ(\honeP) = \bestrisk; \quad \excessQ(\htwoP) = \Delta + \bestrisk.
\end{equation*}
Under $\omega = 2$, 
\begin{equation*}
    \excessQ(\honeP) = \bestrisk + \Delta; \quad \excessQ(\htwoP) = \bestrisk.
\end{equation*}
\textbf{Excess risk lower bound. } Let $\mathcal{A}$ be any mapping $\dataP \times \dataQ \to \{\honeP, \htwoP\}$. Let $\hat{h} = 1$ if $\mathcal{A}$ outputs $\honeP$ and $\hat{h} = 2$ otherwise. W.l.o.g., we let $\mathcal{A} = \mathcal{A}(\dataQ)$ since $\dataP$ carries no information about $\omega$. We have 
\begin{align*}
\sup_{\omega\in\{1,2\}} 
\Pr_{\dataQ\sim D_\omega^{\otimes \ntarget}}\!\Big(\excess_{Q^{(\omega)}}(\mathcal{A}(\dataQ)) \ge \bestrisk+\Delta\Big)
&\ge 
\E_{\omega\sim \Unif(\{1,2\})}
\Pr_{\dataQ\sim D_\omega^{\otimes \ntarget}}\!\Big(\excess_{Q^{(\omega)}}(\mathcal{A}(\dataQ)) \ge \bestrisk+\Delta\Big)\\
&=
\Pr_{\omega\sim\Unif(\{1,2\}),\,\dataQ\sim D_\omega^{\otimes \ntarget}}
\Big(\excess_{Q^{(\omega)}}(\mathcal{A}(\dataQ)) \ge \bestrisk+\Delta\Big)\\
&=
\Pr_{\omega\sim\Unif(\{1,2\}),\,\dataQ\sim D_\omega^{\otimes \ntarget}}(\hat h(\dataQ)\neq \omega).
\end{align*}
By \Cref{lem:basic-lemma1}, we have 
\[
\inf_{\hat h}\;
\Pr_{\omega\sim\Unif(\{1,2\}),\,\dataQ\sim D_\omega^{\otimes \ntarget}}(\hat h(\dataQ)\neq \omega)
\ge \frac12\Big(1-\sqrt{\frac{\ntarget}{2}\KL(Q_1\|Q_2)}\Big).
\]
It remains to bound $\KL(Q_1\|Q_2)$. Since $Q_1$ and $Q_2$ have the same marginal $Q_X$ and 
$Y\mid X$ is Gaussian with unit variance under both hypotheses, we have
\[
\KL(Q_1\|Q_2)=\E_{X\sim Q_X}\!\left[\KL\big(Q^{(1)}_{Y\mid X}\,\|\,Q^{(2)}_{Y\mid X}\big)\right]
=\frac{\Delta}{2},
\]
where we used $\KL(\mathcal N(\mu,1)\|\mathcal N(\nu,1))=\frac{(\mu-\nu)^2}{2}$.
Therefore,
\[
\inf_{\mathcal A}\sup_{\omega\in\{1,2\}}
\Pr_{Q^{(\omega)}}\!\Big(\excess_Q(\mathcal A)\ge \bestrisk+\Delta\Big)
\ge
\frac12\Big(1-\sqrt{\frac{\ntarget\Delta}{4}}\Big).
\]
In particular, if $\Delta \le \frac{1}{16\ntarget}$, then
\[
\inf_{\mathcal A}\sup_{\omega}
\Pr\!\Big(\excess_Q(\mathcal A)\ge \bestrisk+\Delta\Big)
\ge \frac{7}{16}=:c_1.
\]
This proves the claim with $c_0=\frac{1}{16}$.

\subsubsection{Proof of Theorem \ref{thm:lower-bound2}.}\label{apx:proof-of-lower-bound2}
As in the proof of \Cref{thm:lower-bound1}, we first outline our $(P,Q)$ construction including the two candidate models and then prove the lower bound via reduction to testing. 
\paragraph{Construction of $(P,Q^{(1)},Q^{(2)})$ and the candidates.}
Fix $\Delta\in(0,1/4]$ and set
\[
a := \sqrt{\Delta}\in\Big(0,\frac12\Big].
\]
\textbf{Source distribution and candidates.} 
Let $X\in\R^d$ take values in $\{e_1,e_2\}$ with
\[
P_X(e_1)=P_X(e_2)=\frac12.
\]
We consider $Y \in \{-1,+1\}$. Define the conditional distribution of $Y$ given $X$
\[
P(Y=+1\mid X=x)=\frac{1+a}{2},\qquad P(Y=-1\mid X=x)=\frac{1-a}{2}\qquad \forall x\in\{e_1,e_2\}.
\]
Hence, $\E_P[Y\mid X=x]=a$ for $x=e_1,e_2$.

Let $I_1=\{1\}$ and $I_2=\{2\}$. Consider the squared-loss population ERM predictors restricted to these feature sets.
The restricted predictors take the form
\[
h_{I_1,P}(x)=a\,X_1,\qquad h_{I_2,P} = a X_2.
\]

\textbf{Target distributions.} Define two target regression functions $m_1,m_2:\{e_1,e_2\}\to \R$ by
\[
m_1(e_1)=a,\quad m_1(e_2)=0,
\qquad
m_2(e_1)=0,\quad m_2(e_2)=a.
\]
Given $\omega\in\{1,2\}$, define $Q^{(\omega)}$ on $(X,Y)\in\{e_1,e_2\}\times\{-1,+1\}$ by the  marginal $Q_X = P_X$ and setting
\[
Q_{\omega}(Y=+1\mid X=x)=\frac{1+m_\omega(x)}{2},\qquad
Q_{\omega}(Y=-1\mid X=x)=\frac{1-m_\omega(x)}{2}.
\]
Then $\E_\omega[Y\mid X=x]=m_\omega(x)$. 

\paragraph{Target excess risks of the candidates.}
For squared loss and binary $Y\in\{-1,+1\}$, the Bayes regressor under $Q^{(\omega)}$ is the conditional mean $m_\omega(X)$, and for any predictor $f$ we have
\[
\excess_{Q^{(\omega)}}(f)
:=R_{Q^{(\omega)}}(f)-\inf_g R_{Q^{(\omega)}}(g)
=\E_{Q^{(\omega)}}\big[(f(X)-m_\omega(X))^2\big].
\]
We now compute these excess risks for $h_{I_1,P}$ and $h_{I_2,P}$.

\emph{Case $\omega=1$.} Recall $m_1(e_1)=a$ and $m_1(e_2)=0$.
Also, $h_{I_1,P}(e_1)=a$ and $h_{I_1,P}(e_2)=0$, hence $h_{I_1,P}(X)=m_1(X)$ almost surely and
\[
\excess_{Q^{(1)}}(h_{I_1,P})
=\E\big[(h_{I_1,P}(X)-m_1(X))^2\big]=0.
\]
On the other hand, $h_{I_2,P}(e_1)=0$ and $h_{I_2,P}(e_2)=a$, so
\[
(h_{I_2,P}(e_1)-m_1(e_1))^2=(0-a)^2=a^2,\qquad
(h_{I_2,P}(e_2)-m_1(e_2))^2=(a-0)^2=a^2.
\]
Since $Q_X(e_1)=Q_X(e_2)=1/2$, we obtain
\[
\excess_{Q^{(1)}}(h_{I_2,P})
=\frac12 a^2+\frac12 a^2
=a^2
=\Delta.
\]

\emph{Case $\omega=2$.} Under the swapped regression functions $m_2(e_1)=0$, $m_2(e_2)=a$,
\[
\excess_{Q^{(2)}}(h_{I_2,P})=0,\qquad
\excess_{Q^{(2)}}(h_{I_1,P})=\Delta.
\]
\paragraph{Separation between $Q^{(1)}$ and $Q^{(2)}$.}
We have
\[
\|m_1-m_2\|_{L^2(Q_X)}^2
=\E_{X\sim Q_X}\big[(m_1(X)-m_2(X))^2\big]
=\frac12 a^2+\frac12 a^2
=a^2
=\Delta.
\]

\paragraph{From any algorithm to a decision rule.}
Let $\mathcal A$ be any learning algorithm and let $\hat f=\mathcal A(\dataP,\dataQ)$ denote its output predictor
(possibly improper). Define the decision rule $\hat h(\dataQ)\in\{1,2,-1\}$ by
\[
\hat h(\dataQ)=
\begin{cases}
1,& \hat f(e_1) > a/2\ \text{and}\ \hat f(e_2) < a/2,\\
2,& \hat f(e_1) < a/2\ \text{and}\ \hat f(e_2) > a/2,\\
-1,& \text{otherwise}.
\end{cases}
\]
Let $A:=\{\dataQ:\hat h(\dataQ)\neq -1\}$.
In the following lemma, we connect the output of our decision rule to the excess risk of the algorithm:
\begin{lemma}\label{lem:ambig-excess}
For each $\omega\in\{1,2\}$, on the event $\{\hat h(\dataQ)=-1\}$ we have
\[
\excess_{Q^{(\omega)}}(\hat f)\ \ge\ \frac{\Delta}{8}.
\]
\end{lemma}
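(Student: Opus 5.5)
The plan is a direct case analysis on the values $u := \hat f(e_1)$ and $v := \hat f(e_2)$. Since $Q_X$ is uniform on $\{e_1,e_2\}$, the excess risk of any predictor is
\[
\excess_{Q^{(\omega)}}(\hat f) = \tfrac12 \big(u - m_\omega(e_1)\big)^2 + \tfrac12 \big(v - m_\omega(e_2)\big)^2,
\]
using the expression for the excess risk as an $L^2(Q_X)$ distance to $m_\omega$ derived above. So the claim reduces to a statement about points $(u,v)\in\R^2$. For $\omega=1$ the target point is $(a,0)$, and for $\omega=2$ it is $(0,a)$.

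\textbf{Step 1.} Describe the event $\{\hat h(\dataQ)=-1\}$. It is the complement of the two open quadrant-type regions $\{u>a/2,\ v<a/2\}$ and $\{u<a/2,\ v>a/2\}$ around the threshold $a/2$. Equivalently, either both of $u,v$ are $\ge a/2$, or both are $\le a/2$. This uses the fact that ties at exactly $a/2$ fall into the ambiguous class.

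\textbf{Step 2.} Bound the distance from this region to the target point, taking $\omega=1$; the case $\omega=2$ follows by the symmetry swapping coordinates.
\begin{itemize}
\item If $u\le a/2$, then $(u-a)^2\ge a^2/4$, so the excess risk is at least $a^2/8$.
\item If $u\ge a/2$, then we are in the case $v\ge a/2$, so $(v-0)^2\ge a^2/4$, and again the excess risk is at least $a^2/8$.
\end{itemize}
Since $a^2=\Delta$, this gives $\excess_{Q^{(\omega)}}(\hat f)\ge \Delta/8$ on the ambiguous event, for each $\omega$.

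\textbf{Main subtlety.} The only real subtlety is making the case split exhaustive, in particular the boundary cases where $u$ or $v$ equals $a/2$. This is handled by splitting on $u\le a/2$ versus $u>a/2$. In the second case, ambiguity forces $v\ge a/2$, since otherwise $\hat h=1$. The symmetric split works for $\omega=2$ by first splitting on $v$.

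The lemma does not depend on the data-generating distribution beyond $Q_X$ and $m_\omega$. It holds pointwise for every output $\hat f$, so no probabilistic argument is needed.
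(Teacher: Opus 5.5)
Your proof is correct and is essentially the paper's argument. For $\omega=1$, the ambiguous event rules out the $\hat h=1$ condition and so forces $u\le a/2$ or $v\ge a/2$; either case gives $(u-a)^2+v^2\ge a^2/4$, hence excess risk at least $a^2/8=\Delta/8$, and $\omega=2$ is symmetric. One small fix: your second bullet should read $u>a/2$ rather than $u\ge a/2$, since at $u=a/2$ one can have $v<a/2$; you already make this correction in your ``main subtlety'' paragraph.
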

\begin{proof}
We prove the claim for $\omega=1$; the case $\omega=2$ is symmetric.
Write $u_1:=\hat f(e_1)$ and $u_2:=\hat f(e_2)$. Under $\omega=1$, we have
\[
\excess_{Q^{(1)}}(\hat f)=\frac12(u_1-a)^2+\frac12(u_2-0)^2.
\]
On the event $\{\hat h=-1\}$, it is not the case that $(u_1>a/2 \ \&\ u_2<a/2)$, so either $u_1\le a/2$ or $u_2\ge a/2$.
If $u_1\le a/2$ then $|u_1-a|\ge a/2$ and thus $(u_1-a)^2\ge a^2/4$.
If $u_2\ge a/2$ then $|u_2-0|\ge a/2$ and thus $u_2^2\ge a^2/4$.
Therefore $(u_1-a)^2+u_2^2\ge a^2/4$, and consequently
\[
\excess_{Q^{(1)}}(\hat f)\ge \frac12\cdot \frac{a^2}{4}=\frac{a^2}{8}=\frac{\Delta}{8}.
\]
\end{proof}
We observe that for any $\omega$,
\begin{equation}\label{eq:err-implies-excess}
\{\hat h(\dataQ)\neq \omega\}\subseteq \left\{\excess_{Q^{(\omega)}}(\hat f)\ge \frac{\Delta}{8}\right\}.
\end{equation}
Indeed, if $\hat h=-1$ this follows from Lemma~\ref{lem:ambig-excess}. Otherwise $\hat h\in\{1,2\}$ and
$\hat h\neq \omega$ implies that $\hat f$ places the ``high'' value (above $a/2$) on the wrong atom relative to $m_\omega$,
which forces at least one of $|\hat f(e_1)-m_\omega(e_1)|$ or $|\hat f(e_2)-m_\omega(e_2)|$ to be at least $a/2$,
and yields $\excess_{Q^{(\omega)}}(\hat f)\ge \Delta/8$ as in the proof of Lemma~\ref{lem:ambig-excess}.

Consequently, for all $\omega$,
\begin{equation}\label{eq:excess-lower-by-testerr}
\Pr_{Q^{(\omega)}}\!\left(\excess_{Q^{(\omega)}}(\hat f)\ge \frac{\Delta}{8}\right)
\ \ge\ 
\Pr_{Q^{(\omega)}}\big(\hat h(\dataQ)\neq \omega\big).
\end{equation}

\paragraph{Lower bounding the testing error via KL.}
Let $D_\omega$ denote the single-sample distribution of $(X,Y)$ under $Q^{(\omega)}$.
By \Cref{lem:basic-lemma1} in \Cref{apx:proof-of-lower-bound1},
\begin{equation}\label{eq:test-lb}
\sup_{\omega\in\{1,2\}}\Pr_{D_\omega^{\otimes n}}(\hat h\neq \omega)
\ \ge\ 
\frac12\Big(1-\sqrt{\frac{n}{2}\KL(D_1\|D_2)}\Big).
\end{equation}
It remains to upper bound $\KL(D_1\|D_2)$.

Under $\omega=1$, we compute
\[
p:=\Pr_{\omega=1}(Y=+1\mid X=e_1)=\frac{1+a}{2},
\]
while 
\[
\Pr_{\omega=1}(Y=+1\mid X=e_2)=\frac{1+0}{2}=\frac12.
\]
Under $\omega=2$, the conditionals are swapped.
Therefore,
\[
\KL(D_1\|D_2)
=\frac12\KL(\Ber(p)\|\Ber(1/2))+\frac12\KL(\Ber(1/2)\|\Ber(p)).
\]
We use the standard bound $\KL(\Ber(r)\|\Ber(s))\le \frac{(r-s)^2}{s(1-s)}$ for $r,s\in(0,1)$.
With $r=p=\frac{1+a}{2}$ and $s=\frac12$, we have 
\[
\KL(\Ber(p)\|\Ber(1/2))\le \frac{(a/2)^2}{1/4}=a^2,
\qquad
\KL(\Ber(1/2)\|\Ber(p))\le \frac{(a/2)^2}{p(1-p)}.
\]
Moreover, $p(1-p)=\frac{1-a^2}{4}\ge \frac{3}{16}$ for $a\le 1/2$, so
\[
\KL(\Ber(1/2)\|\Ber(p))\le \frac{(a/2)^2}{3/16}=\frac{4}{3}a^2.
\]
Combining the bounds gives
\begin{equation}\label{eq:kl-bernoulli}
\KL(D_1\|D_2)\le \frac12\Big(a^2+\frac{4}{3}a^2\Big)=\frac{7}{6}a^2=\frac{7}{6}\Delta.
\end{equation}
Plugging \eqref{eq:kl-bernoulli} into \eqref{eq:test-lb} yields
\[
\sup_{\omega}\Pr(\hat h\neq \omega)
\ge 
\frac12\Big(1-\sqrt{\frac{n}{2}\cdot \frac{7}{6}\Delta}\Big)
=\frac12\Big(1-\sqrt{\frac{7n\Delta}{12}}\Big).
\]

Combining with \eqref{eq:excess-lower-by-testerr}, we conclude that for any learning algorithm
$\mathcal A$,
\[
\sup_{\omega\in\{1,2\}}
\Pr_{Q^{(\omega)}}\!\left(\excess_{Q^{(\omega)}}(\mathcal A)\ge \frac{\Delta}{8}\right)
\ \ge\
\frac12\Big(1-\sqrt{\frac{7\ntarget\Delta}{12}}\Big).
\]
In particular, if $\Delta\le \frac{3}{28\ntarget}$, then $\sqrt{\frac{7\ntarget\Delta}{12}}\le \frac14$ and hence
\[
\inf_{\mathcal A}\sup_{\Sigma(0,\Delta)}
Q\!\left(\excess_Q(\mathcal A)\ge \frac{\Delta}{8}\right)
\ \ge\ \frac{3}{8}.
\]
This proves the claim with $c_0=\frac{3}{28}$ and $c_1=\frac{3}{8}$.
\clearpage

\subsection{Proof of Corollary \ref{cor:linear-scm} }\label{apx:proof-of-linear-scm}
\begin{proof}
    Since the graph is acyclic, $(\Id - B)$ is invertible. We define $W = (\Id - B)^{-1}$. We define the source covariance matrix $\Omega = W \Sigma W^\top$. It follows from the invertibility of $W$ and $\Sigma$ that $\Omega_{I,I}$ is invertible for every $I$. We thus have for the population source linear regression coefficients:
    \begin{equation*}
        \hIP = \Omega_{I,I}^{-1} \Omega_{I,Y}.
    \end{equation*}
We consider the residual $e_I = Y - \hIP^{\top} X_I$. Defining $c_I \in \R^{d+1}$ s.t. $(c_I)_Y = 1$, $(c_I)_k = - (\hIP)_k$ for $k \in I$ and $0$ otherwise, we obtain $e_I = c_I^\top V$. Finally, we define
\begin{equation*}
    \alpha_I = W^\top c_I.
\end{equation*}
Importantly, it follows for the target residuals:
\begin{equation*}
    e_{I,Q} = c_I^\top V_Q = (W^\top c_I^\top)(U + t \addshift_Q) = \alpha_I^\top U + t \alpha_I^\top \addshift_Q. 
\end{equation*}
We compute $\E_Q[e_I] = t \alpha_I^\top \mu$ and $\Var_Q(e_I) = \sigma_I^2 + t^2 \alpha_I^\top H \alpha_I$, where $\sigma_I^2 = \E_Q[(\alpha_I^\top U)^2] = \alpha_I^\top \Sigma \alpha_I$. It follows for the source and target risk of a model $\hIP$:
\begin{equation*}
    \riskP(\hIP) = \sigma_I^2 \quad \quad \riskQ(\hIP) = \sigma_I^2 + t^2((\alpha_I^\top \mu)^2 + \alpha_I^\top H \alpha_I).
\end{equation*}

\textbf{$d$-separation implies invariant risk.}  Let $I \in \set$. First, consider the case that $I$  $d$-separates $Y$ from $\squarenode{P}{Q}$. Since the SCM is Markovian, it follows that $Y \perp \squarenode{P}{Q} \mid X_I$. This again implies invariance of the regression function across the domains: $\E[Y \mid X_I, \squarenode{P}{Q} = 0] = \E[Y \mid X_I, \squarenode{P}{Q} = 1]$ and in particular $\hIP = \hIQ$. We now want to show that $\alpha_{I,\shiftset{P}{Q}} = 0$, i.e. $\alpha_I$ is $0$ on $\shiftset{P}{Q}$. Since the above invariance holds for any additive shift $\addshift_Q$ with support in $\shiftset{P}{Q}$, it holds in particularly for mean shifts of form $t e_j$ for any $t$ for $j \in \shiftset{P}{Q}$. Invariance of residuals then implies that $\alpha_{I,j} =0$, for any $j \in \shiftset{P}{Q}$. 

\textbf{Non-$d$-separation almost always implies quadratically growing risk.}
We now consider the case that $I$ does not $d$-separate $Y$ from $\squarenode{P}{Q}$. Define a map $g_I(B,\Sigma) =  \| \alpha_{I,\shiftset{P}{Q}}\|^2$ which maps the SCM parameters to the squared norm of the coefficients of $\alpha_I$ on $\shiftset{P}{Q}$. This map is real-analytic since it is rational. In particular, it is either zero or has a countable number of zeros. For any $I$ which is non-$d$-separating, there exists an active path from $\squarenode{P}{Q}$ given $X_I$, in particular, one can choose $B$ and $\sigma$ such that the active connection results in dependence of the residual on at least one shifted coordinate, i.e. $\alpha_{I, \shiftset{P}{Q}} \neq 0$. Thus, there exists at least one point $(B_0,\Sigma_0)$ s.t. $g_I(B_0,\Sigma_0) \neq 0$. Therefore, $g_I$ is not zero and the set of its zeros has Lebesgue measure zero. This implies $\alpha_{I,\shiftset{P}{Q}} \neq 0$ for almost all $(B,\Sigma)$ if $I$ is not $d$-separating. Finally, the set of vectors $\mu$ which satisfies $\mu^\top \alpha_I = 0$ for $\alpha_{I,\shiftset{P}{Q}} \neq 0$ forms a codimension-one  hyperplane in $\R^{d+1}$ and is thus a set of measure zero. Finally, a union of such hyperplanes over the finite set of non-$d$-separating $I$ is still a set of measure zero. Taking everything together, we obtain that for all combinations $(B, \Sigma, \mu)$ apart from a set of Lebesgue measure zero, we have $\alpha_I^\top \mu \neq 0$ for all non-$d$-separating sets $I$ and thus for all such sets,
\begin{equation*}
    \riskQ(\hIP) = \sigma_I^2 + t^2((\alpha_I^\top \mu)^2 + \alpha_I^\top H \alpha_I) = \sigma_I^2 + \Theta(t^2). 
\end{equation*}
Finally, we observe that for the risk margin between the $d$-separating and the non-$d$-separating $I$, it holds 
\begin{equation*}
    \inf_{I \in \set \setminus \setsep} \riskQ(\hIP) - \sup_{I \in \setsep} \riskQ(\hIP) = \Theta(t^2)
\end{equation*}
and invoke \Cref{cor:stable-unstable-models} to make our last claim. 
\end{proof}
\end{document}